\documentclass[11pt]{article}
\pdfoutput=1

\usepackage{microtype}
\usepackage{graphicx}
\usepackage{subfigure}
\usepackage{booktabs} 
\usepackage{bm}
\usepackage{diagbox}
\usepackage{algorithm,algorithmic}
\usepackage{amsmath,amsfonts,amsthm}
\usepackage{thm-restate}
\usepackage[algo2e,ruled,vlined]{algorithm2e}


\usepackage{geometry}
\geometry{top=1in,bottom=1in,left=0.94in,right=0.94in}

\usepackage{amssymb}
\usepackage{tikz}
\usepackage{url}
\usepackage{setspace}
\usepackage[pdftex,bookmarksnumbered,bookmarksopen,
colorlinks,citecolor=blue,linkcolor=blue,urlcolor=blue]{hyperref}
\usepackage{framed}
\usepackage{xcolor}
\usepackage{soul}
\usepackage{longtable}

\usepackage{times}
\usepackage{enumitem}
\usepackage{varwidth}
\usepackage{graphicx}
\usepackage{wrapfig}
\usepackage{enumerate}
\usepackage{caption}

\usepackage{amssymb}
\usepackage{bbm}
\usepackage{graphicx}
\usepackage{url}
\usepackage{setspace}
\usepackage{framed}
\usepackage{xcolor}
\usepackage{soul}
\usepackage{longtable}

\usepackage{times}
\usepackage{enumitem}
\usepackage{varwidth}
\usepackage{graphicx}
\usepackage{wrapfig}
\usepackage{enumerate}

\usepackage[utf8]{inputenc} 
\usepackage[T1]{fontenc}    
\usepackage{url}            
\usepackage{booktabs}       
\usepackage{amsfonts}       
\usepackage{nicefrac}       
\usepackage{microtype}      

\usepackage{graphicx}
\usepackage{appendix}
\usepackage{mdwlist}
\usepackage{xspace}
\usepackage{color}
\usepackage{mathrsfs}

\usepackage{booktabs}
\usepackage{comment}

\usepackage{multirow}


\newcommand{\sign}{\textup{\textsf{sign}}}
\newcommand{\sgn}{\textup{\textsf{sgn}}}
\newcommand{\diag}{\textsf{Diag}}

\newcommand{\pdim}{\textsc{Pdim}}

\newcommand{\Appendix}[1]{the full version for}

\newtheorem{theorem}{Theorem}[section]
\newtheorem{lemma}[theorem]{Lemma}

\newtheorem{corollary}[theorem]{Corollary}
\newtheorem{remark}{Remark}

\newtheorem{assumption}{Assumption}
\newtheorem{definition}{Definition}

\renewcommand{\b}{\mathbf{b}}

\newcommand{\x}{\bm{x}}

\newcommand{\cE}{\mathcal{E}}

\newcommand{\Hr}{\mathcal{H}_{\text{Ridge}}}
\newcommand{\Hl}{\mathcal{H}_{\text{LASSO}}}
\newcommand{\Hen}{\mathcal{H}_{\text{EN}}}

\newcommand{\bI}{\mathbb{I}}

\newcommand{\R}{\mathbb{R}}

\newcommand{\Xv}{X_{\text{val}}}
\newcommand{\yv}{y_{\text{val}}}

\renewcommand{\comment}[1]{}

\newcommand{\blue}[1]{{\color{black}#1}}

\newcommand{\cA}{\mathcal{A}}

\newcommand{\cC}{\mathcal{C}}
\newcommand{\cD}{\mathcal{D}}
\newcommand{\cF}{\mathcal{F}}
\newcommand{\cG}{\mathcal{G}}
\newcommand{\cH}{\mathcal{H}}

\newcommand{\cL}{\mathcal{L}}

\newcommand{\cN}{\mathcal{N}}

\newcommand{\bP}{\text{Pr}}
\newcommand{\cP}{\mathcal{P}}

\newcommand{\cR}{\mathcal{R}}

\newcommand{\cX}{\mathcal{X}}

\newcommand{\bbE}{\mathbb{E}}

\newcommand{\norm}[1]{\left\lVert#1\right\rVert}

\definecolor{colorY}{rgb}{0.7 , 0.7 , 0.2}

\DeclareMathOperator*{\argmax}{argmax}
\DeclareMathOperator*{\argmin}{argmin}

\newenvironment{proofoutline}{\noindent{\emph{Proof Sketch. }}}{\hfill$\square$\medskip}

\title{\bf Provably tuning the ElasticNet across instances}

\author{ {Maria-Florina Balcan}\qquad {Mikhail Khodak}\qquad   {Dravyansh Sharma}\qquad   {Ameet Talwalkar}\\ {Carnegie Mellon University\footnote{Correspondence: {\tt ninamf@cs.cmu.edu,  khodak@cmu.edu, dravyans@cs.cmu.edu, talwalkar@cmu.edu}}}}



\date{}

\begin{document}
\maketitle

\begin{abstract}
    An important unresolved challenge in the theory of regularization is to set the regularization coefficients  of popular techniques like the ElasticNet with general provable guarantees.
	We consider the problem of tuning the regularization parameters of Ridge regression, LASSO, and the ElasticNet across multiple problem instances, a setting that encompasses both cross-validation and multi-task hyperparameter optimization. We obtain a novel structural result for the ElasticNet which characterizes the loss as a function of the tuning parameters as a piecewise-rational function with algebraic boundaries.
	We use this to bound the structural complexity of the regularized loss functions and show generalization guarantees for tuning the ElasticNet regression coefficients in the statistical setting.
	We also consider the more challenging online learning setting, where we show vanishing average expected regret relative to the optimal parameter pair. We further extend our results to tuning classification algorithms obtained by thresholding regression fits regularized by Ridge, LASSO, or ElasticNet.
	Our results are the first general learning-theoretic guarantees for this important class of problems that avoid strong assumptions on the data distribution. Furthermore, our guarantees hold for both validation and popular information criterion objectives. 

\end{abstract}

\section{Introduction}\label{sec:intro}
Ridge regression \cite{hoerl1970ridge,tikonov1977solutions}, LASSO \cite{tibshirani1996regression}, and their generalization the ElasticNet \cite{hastie2009elements} are among the most popular algorithms in machine learning and statistics, with applications to linear classification, regression, data analysis, and feature selection \cite{chambers1992linear,zhao2007stagewise,hastie2009elements,dobriban2018high,fernandez2019extensive}.
Given a supervised dataset $(X,y)\in\R^{m\times p}\times\R^m$ with $m$ datapoints and $p$ features, these algorithms compute the linear predictor 
\begin{equation}\label{eq:elasticnet}
\hat\beta_{\lambda_1,\lambda_2}^{(X,y)}
=\argmin_{\beta\in\R^p}\|y-X\beta\|_2^2+\lambda_1\|\beta\|_1+\lambda_2\|\beta\|_2^2.
\end{equation}
Here $\lambda_1,\lambda_2\ge0$ are {\em regularization coefficients} constraining the $\ell_1$ and $\ell_2$ norms, respectively, of the model $\beta$.
For general $\lambda_1$ and $\lambda_2$ the above algorithm is the ElasticNet, while setting $\lambda_1=0$ recovers Ridge and setting $\lambda_2=0$ recovers LASSO.

These coefficients play a crucial role across fields:
in machine learning controlling the norm of $\beta$ implies provable generalization guarantees and prevent over-fitting in practice~\cite{mohri2018fml}, in data analysis their combined use yields parsimonious and interpretable models~\cite{hastie2009elements}, and in Bayesian statistics they correspond to imposing specific priors on $\beta$~\cite{murphy2012machine,li2010bayesian}.
In practice, $\lambda_2$ regularizes $\beta$ by uniformly shrinking all coefficients, while $\lambda_1$ encourages the model vector to be sparse.
This means that while they do yield learning-theoretic and statistical benefits, setting them to be too high will cause models to under-fit the data.
The question of how to set the regularization coefficients becomes even more unclear in the case of the ElasticNet, as one must juggle trade-offs between sparsity, feature correlation, and bias when setting both $\lambda_1$ and $\lambda_2$ simultaneously. As a result, there has been intense empirical and theoretical effort devoted to automatically tuning these parameters. Yet the state-of-the-art is quite unsatisfactory: proposed work consists of either heuristics without formal guarantees~\cite{gibbons1981simulation,kirkland2015lasso}, approaches that optimize over a finite grid or random set instead of the full continuous domain~\cite{chichignoud2016practical}, or analyses that involve very strong theoretical assumptions~\cite{zhang2009some}.

In this work, we study a variant on the above well-established and intensely studied formulation. The key distinction is that instead of a single dataset $(X,y)$, we consider a collection of datasets or instances of the same underlying regression problem $(X^{(i)},y^{(i)})$ and would like to learn a pair $(\lambda_1,\lambda_2)$ that selects a model in equation \eqref{eq:elasticnet} that has low loss on a validation dataset. This can be useful to model practical settings, for example where new supervised data is obtained several times or where the set of features may change frequently \cite{dhurandhar2014efficient}. We do not require all examples across datasets to be i.i.d.\ draws from the same data distribution, and can capture more general data generation scenarios like cross-validation and multi-task learning \cite{zhang2021survey}. Despite these advantages, we remark that our problem formulation is quite different from the standard single dataset setting, \blue{where all examples in the dataset are typically assumed to be drawn independently from the same distribution}. Our formulation treats the selection of regularization coefficients as {\it data-driven algorithm design}, which is often used to study combinatorial problems \cite{gupta2017pac,balcan2020data}.\looseness-1 

Our main contribution is a new structural result for the ElasticNet Regression problem, which implies generalization guarantees for selecting ElasticNet Regression coefficients in the multiple-instance setting. In particular, Ridge and LASSO regressions are special cases. We extend our results to obtain low regret in the online learning setting, and to tuning related linear classification algorithms.
In summary, we make the following key contributions:
\begin{itemize}[leftmargin=*]
	\item We formulate the problem of tuning the ElasticNet as a question of learning $\lambda_1$ and $\lambda_2$ simultaneously across multiple problem instances, either generated statistically or coming online. Our formulation captures relevant settings like cross-validation and multi-task learning. 
    \item We provide a novel structural result (Theorem \ref{lem:en-structure}) that characterizes the loss of the ElasticNet fit. We show that the hyperparameter space can be partitioned by polynomial curves of bounded degrees into pieces where the loss is a bivariate rational function.
    The result holds for both the usual ElasticNet validation objective and when it is augmented with information criteria like the AIC or BIC.
	\item An important consequence of our structural result is a bound on the pseudo-dimension (Definition \ref{def:pd}) for the loss function class, which yields strong generalization bounds for tuning $\lambda_1$ and $\lambda_2$ simultaneously in the statistical learning setting (Theorem \ref{thm:en-sc}). Informally, for ElasticNet regression problems with at most $p$ parameters, for any problem distribution $\cD$, we show that $O\left(\frac{1}{\epsilon^2}(p^2\log\frac{1}{\epsilon}+\log\frac{1}{\delta})\right)$ problems (datasets) are sufficient to learn an $\epsilon$-approximation to the best $(\lambda_1,\lambda_2)$, with probability at least $1-\delta$. 
	\item In the online setting, we show under very mild data assumptions---much weaker than prior work---that the problem satisfies a dispersion condition \cite{balcan2018dispersion,sharma2020learning}.
	As a result we can tune all parameters across a sequence of instances appearing online and obtain vanishing regret relative to the optimal parameter in hindsight over the sequence (Theorem \ref{thm:en-regression-dispersion}) at the rate $\Tilde{O}(1/\sqrt{T})$\footnote{The soft-O notation is used to emphasize dependence on $T$, and suppresses other factors as well as logarithmic terms.} wrt the length $T$ of the sequence.
    \item We show how to extend our results to regularized classifiers that perform thresholding on Ridge, LASSO or ElasticNet regression estimates, again providing strong generalization and online learning guarantees (Theorems \ref{thm:ridge-pdim}, \ref{thm:ridge-dispersion}).
\end{itemize}

We include a couple of remarks to emphasize the generality and significance of our results.
First, in our multiple-instance formulation the different problem instances need not have the same number of examples, or even the same set of features. This allows us to handle practical scenarios where the set of features changes across datasets, and we can learn parameters that work well on average across multiple different but related regression tasks. Second, by generating problem instances iid from a fixed (training + validation) dataset, we can obtain iterations (training/validation splits) of popular cross-validation techniques (including the popular leave-one-out and Monte Carlo CV) and our result implies that $\Tilde{O}(p^2/\epsilon^2)$ iterations are enough to determine an ElasticNet parameter $\hat{\lambda}$ with loss within $\epsilon$ (with high probability) of the optimal parameter $\lambda^*$ over the distribution induced by the cross-validation splits.

{\bf Key challenges and insights}. 
A major challenge in learning the ElasticNet parameters is that the variation of the solution path as a function of $\lambda_2$ is hard to characterize. Indeed the original ElasticNet paper \cite{zou2005regularization} suggests using the heuristic of grid search to learn a good $\lambda_2$, even though $\lambda_1$ may be exactly optimized by computing full solution paths (for each $\lambda_2$). We approach this indirectly by utilizing a characterization of the LASSO solution by \cite{tibshirani2013lasso}, which is based on the KKT (Karush–Kuhn–Tucker) optimality conditions, to arrive at a precise piecewise structure for the problem. In more detail, we use these conditions to come up with a set of algebraic curves (polynomial equations in $\lambda_1$ and $\lambda_2$) of bounded degrees, such that \blue{the set of possible discontinuities lie along these curves, and the loss function behaves well (a bounded-degree rational function) in each piece of the partition of the parameter domain induced by these curves}. This characterization is crucial in establishing a bound on the structural complexity needed to provide strong generalization guarantees. We further show additional structure on these algebraic curves that (roughly speaking) imply that the curves do not concentrate in any region of the domain, allowing us to use the powerful recipe of \cite{dick2020semi} for online learning.\looseness-1

\subsection{Related work}

Model selection for Ridge regression, LASSO or ElasticNet is typically done by selecting the regularization parameter $\lambda$ that works well for given data, although some parameter-free techniques for variable selection have been recently proposed \cite{lederer2015don}. Choosing `optimal’ parameters for tuning the regularization has been a subject of extensive theoretical and applied research. Much of this effort is heuristic \cite{gibbons1981simulation,kirkland2015lasso} or focused on developing tuning objectives beyond validation accuracy like AIC or BIC \cite{akaike1974new,schwarz1978estimating} without providing procedures for provably optimizing them.
The standard approach given a tuning objective is to optimize it over a grid or random set of parameters, for which there are guarantees \cite{chichignoud2016practical}, but this does not ensure optimality over the entire continuous tuning domain, especially since objectives such as 0-1 validation error or information criteria can have many discontinuities. Selecting a grid that is too fine or too coarse can result in either very inefficient or highly inaccurate estimates (respectively) for good parameters.
Other guarantees make strong assumptions on the data distribution such as sub-Gaussian noise \cite{zhang2009some,chetverikov2021cross} or depend on unknown parameters that are hard to quantify in practice \cite{fan2010selective}. {Recent work has shown asymptotic consistency of cross-validation for ridge regression, even in the limiting case $\lambda_2\rightarrow 0$ which is particularly interesting for the overparameterized regime \cite{hastie2022surprises,patil2021uniform}.}
A successful line of work has focused on efficiently obtaining models for different values of $\lambda_1$ using regularization paths \cite{efron2004least}, but the guarantees are computational rather than learning-theoretic or statistical. In contrast, we provide principled approaches that guarantee near-optimality of selected parameters with high confidence over the entire continuous domain of parameters.

Data-driven algorithm design has proved successful for tuning parameters for a variety of combinatorial problems like clustering, integer programming, auction design and graph-based learning \cite{balcan2019learning,balcan2021sample,balcan2016sample, balcan2021data}. We provide an application of these techniques to parameter tuning in a problem that is not inherently combinatorial by revealing a novel discrete structure. We identify the underlying piecewise structure of the ElasticNet loss function which is extremely effective in establishing learning-theoretic guarantees \cite{balcan2021much}. To exploit this piecewise structure, we analyze the learning-theoretic complexity of rational algebraic function classes and infer generalization guarantees. \blue{Follow-up work \cite{balcan2023new} improves on our generalzation guarantees and extends the results to regularized logistic regression.} We also employ and extend general tools and techniques for online data-driven learning from \cite{dick2020semi,balcan2021data} to rational functions in order to prove our online learning guarantees for regularization coefficient tuning.\looseness-1

\section{Preliminaries and a Key Structural Result}\label{sec:prelim}

Given data $(X,y)$ with $X\in\R^{m\times p}$ and $y\in\R^{m}$, consisting of $m$ labeled examples with $p$ features, we seek estimators $\beta\in\R^p$ which minimize the regularized loss. Popular regularization methods like LASSO and ElasticNet can be expressed as computing the solution of an optimization problem given by \looseness-1

\[\hat{\beta}_{\lambda,f}^{(X,y)} \in \argmin_{\beta\in\R^p}\norm{y-X\beta}_2^2+\langle\lambda ,f(\beta)\rangle,\]

\noindent where $f:\R^p\rightarrow\R_{\ge 0}^d$ gives the regularization penalty for estimator $\beta$, $\lambda\in \R_{\ge 0}^d$ is the regularization parameter, and $d$ is the number of regularization parameters. $d=1$ for Ridge and LASSO, and $d=2$ for the ElasticNet. Setting $f=f_2$ with $f_2(\beta)=\norm{\beta}_2^2$ yields  Ridge regression, and setting  $f(\beta)=f_1(\beta):=\norm{\beta}_1$ corresponds to LASSO. Also using $f_{\text{EN}}(\beta):=(f_1(\beta),f_2(\beta))$ gives the ElasticNet with regularization parameter $\lambda=(\lambda_1,\lambda_2)$. Note that we use the same $\lambda$ (with some notational overloading) to denote the regularization parameters for ridge, LASSO, or ElasticNet. We write $\hat{\beta}_{\lambda,f}^{(X,y)}$ as simply $\hat{\beta}_{\lambda,f}$ when the dataset $(X,y)$ is clear from context. On any instance $x\in\R^p$ from the feature space, the prediction of the regularized estimator is given by the dot product $\langle x, \hat{\beta}_{\lambda,f}\rangle$. The average squared loss over a dataset $(X',y')$ with $X'\in\R^{m'\times p}$ and $y'\in \R^{m'}$ is given by 
$$l_r(\hat{\beta}_{\lambda,f}, (X',y')) = \frac{1}{m'}\norm{y'-X'\hat{\beta}_{\lambda,f}}_2^2.$$ By setting $(X',y')$ to be the training data $(X,y)$, we get the training loss $l_r(\hat{\beta}_{\lambda,f}, (X,y))$. We use $(\Xv,\yv)$ to denote a validation split.

{\it Distributional and Online Settings.} In the {\it distributional or statistical} setting, we receive a collection of $n$ instances of the regression problem $$P^{(i)}=(X^{(i)},y^{(i)},\Xv^{(i)},\yv^{(i)})\in\cR_{m_i,p_i,m_i'}:=\R^{m_i\times p_i}\times\R^{m_i}\times\R^{m'_i\times p_i}\times\R^{m'_i},$$ for $i\in[n]$ generated i.i.d.\ from some problem distribution $\cD$. The problems are in the problem space given by $\Pi_{m,p}=\bigcup_{m_1\ge 0,m_2\le m, p_1\le p}\cR_{m_1,p_1,m_2}$ (note that the problem distribution $\cD$ is over $\Pi_{m,p}$). 
On any given instance $P^{(i)}$ the loss is given by the squared loss on the validation set, $\ell_{\text{EN}}(\lambda,P^{(i)})=l_r(\hat{\beta}_{\lambda,f_{\text{EN}}}^{(X^{(i)},y^{(i)})},(\Xv^{(i)},\yv^{(i)}))$. On the other hand, in the {\it online setting}, we receive a sequence of $T$ instances of the ElasticNet regression problem $P^{(i)}=(X^{(i)},y^{(i)},\Xv^{(i)},\yv^{(i)})\in \Pi_{m,p}$ for $i\in[T]$ online.  On any given instance $P^{(i)}$, the online learner is required to select the regularization parameter $\lambda^{(i)}$ without observing $\yv^{(i)}$, and experiences loss  given by $\ell(\lambda^{(i)},P^{(i)})=l_c(\hat{\beta}_{\lambda^{(i)},f_{EN}}^{(X^{(i)},y^{(i)})},(\Xv^{(i)},\yv^{(i)}))$. The goal is to minimize the regret w.r.t.\ choosing the best fixed parameter in hindsight for the same problem sequence, i.e.\ $$R_T=\sum_{i=1}^T\ell(\lambda^{(i)},P^{(i)})-\min_{\lambda}\sum_{i=1}^T\ell(\lambda,P^{(i)}).$$ We also define average regret as $\frac{1}{T}R_T$ and expected regret as $\bbE[R_T]$ where the expectation is over both the randomness of the loss functions and any random coins used by the online algorithm.

Given a class of regularization algorithms $\cA$ parameterized by regularization parameter $\lambda$ over a set of problem instances $\cX$, and given loss function $\ell:\cA\times\cX\rightarrow \R$ which measures the loss of any algorithm in $\cA$ on any fixed problem instance, consider the set of functions $\cH_\cA=\{\ell(A,\cdot)\mid A\in\cA\}$. For example, for the ElasticNet we have $\ell_{\text{EN}}(\lambda,P)=l_r(\hat{\beta}^{(X_P,y_P)}_{\lambda,f_{\text{EN}}}, (X'_P,y'_P))$, where $(X_P,y_P)$ and $(X'_P,y'_P)$ are the training and validation sets associated with problem $P\in\cX$ respectively. Bounding the pseudo-dimension of $\cH_\cA$ gives a bound on the  sample complexity for uniform convergence guarantees, i.e. a bound on the sample size $n$ for which the algorithm $\hat{A}_S\in\cA$ which minimizes the average loss on any sample $S$ of size $n$ drawn i.i.d. from any problem distribution $\cD$ is guaranteed to be near-optimal with high probability \cite{dudley1967sizes}. See Appendix \ref{app:learning-theory} for the relevant classic definitions and results. 
Define the {\it dual class} $\cH^*$ of a set of real-valued functions $\cH\subseteq 2^{\cX}$ as $\cH^*=\{h^*_x:\cH\rightarrow \R\mid x\in \cX\}$ where $h^*_x(h)=h(x)$. In the context of regression problems $\cX$, for each fixed problem instance $x\in\cX$ there is a dual function $h^*_x$ that computes the loss $\ell(A,x)$ for any (primal) function $h_A=\ell(A,\cdot)\in\cH_\cA$. For a function class $\cH$, showing that dual class $\cH^*$ is piecewise-structured in the sense of Definition \ref{def:ps} and bounding the complexity of the duals of boundary and piece functions of $\cH^*$  are useful to understand the learnability of $\cH$ \cite{balcan2021much}.
\begin{definition}[Piecewise structured functions, \cite{balcan2021much}]\label{def:ps}
    A function class $H \subseteq \R^\cX$ that maps a domain $\cX$ to $\R$ is $(F, G, k)$-\emph{piecewise decomposable} for a class $G \subseteq \{0, 1\}^\cX$ of boundary functions and a class $F \subseteq \R^\cX$ of piece functions if the following holds: for every $h \in H$, there are $k$ boundary functions $g_1,\dots,g_k \in G$ and a piece function $f_\mathbf{b} \in F$ for each bit vector $\mathbf{b} \in \{0, 1\}^k$ such that for all $x \in\cX$, $h(x) = f_{\mathbf{b}_x}(x)$ where $\mathbf{b}_x = (g_1(x),\dots,g_k(x))\in \{0, 1\}^k$.
\end{definition}

\noindent{Intuitively, a real-valued function is piecewise-structured if the domain can be divided into pieces by a finite number of boundary functions (say linear or polynomial thresholds) and the function value over each piece is easy to characterize (e.g. constant, linear, polynomial).} To state and understand our structural insights into the ElasticNet problem we will also need the definition of equicorrelation sets, the subset of features with maximum absolute correlation for any fixed $\lambda_1$, useful for characterizing LASSO/ElasticNet solutions. For any subset $\cE\subseteq[p]$ of the features, we define $X_{\cE}=\begin{pmatrix}\dots X_{* i}\dots \end{pmatrix}_{i\in \cE}$ as the $m\times|\cE|$ matrix of columns $X_{* i}$ of $X$ corresponding to indices $i\in\cE$. Similarly $\beta_\cE\in\R^{|\cE|}$ is the subset of estimators in $\beta$ corresponding to indices in $\cE$. We will assume all the feature matrixes $X$ (for training datasets) are in general position (Definition \ref{def:gp}).

\begin{definition}[Equicorrelation sets, \cite{tibshirani2013lasso}] Let $\beta^*\in \argmin_{\beta\in\R^p}\norm{y-X\beta}_2^2+\lambda_1||\beta||_1$. The equicorrelation set corresponding to $\beta^*$, $\cE=\{j\in[p]\mid |\x_j^T(y-X\beta^*)|=\lambda_1\}$, is simply the set of covariates with maximum absolute correlation. We also define the equicorrelation sign vector for $\beta^*$ as $s=\sign(X_{\cE}^T(y-X\beta^*))\in\{\pm 1\}$.
\label{def:ec}
\end{definition}

\noindent \blue{Here $\sign(x)=1$ if $x\ge0$, and $\sign(x)=-1$ otherwise}. Consider the class of algorithms consisting of ElasticNet regressors for different values of $\lambda=(\lambda_1,\lambda_2)\in(0,\infty)\times(0,\infty)$. We assume $\lambda_1>0$ for technical simplicity (cf. \cite{tibshirani2013lasso}). We seek to solve problems of the form $P=(X,y,\Xv,\yv)\in\Pi_{m,p}$, where $(X,y)$ is the training set, $(\Xv,\yv)$ is the validation set with the same set of features, and $m,p$ are upper bounds on the number of examples and features respectively in any dataset. 
Let $\Hen=\{\ell_{\text{EN}}(\lambda,\cdot)\mid \lambda\in(0,\infty)\times(0,\infty) \}$ denote the set of loss functions for the class of algorithms consisting of ElasticNet regressors for different values of $\lambda\in\R^+\times\R^+$. Additionally, we will consider information criterion based loss functions, $\ell_{\text{EN}}^{\textsf{AIC}}(\lambda,P)=\ell_{\text{EN}}(\lambda,P)+2||\hat{\beta}_{\lambda,f_{\text{EN}}}^{(X,y)}||_0$ and $\ell_{\text{EN}}^{\textsf{BIC}}(\lambda,P)=\ell_{\text{EN}}(\lambda,P)+2||\hat{\beta}_{\lambda,f_{\text{EN}}}^{(X,y)}||_0\log m$ \cite{akaike1974new,schwarz1978estimating}. Let $\Hen^{\textsf{AIC}}$ and $\Hen^{\textsf{BIC}}$ denote the corresponding sets of loss functions. These criteria are popularly used to compute the squared loss on the training set, to give alternatives to cross-validation. We do not make any assumption on the relation between training and validation sets in our formulation, so our analysis can capture these settings as well.

\subsection{Piecewise structure of the ElasticNet loss}

We will now establish a piecewise structure of the dual class loss functions (Definition \ref{def:ps}). A key observation is that if the signed equicorrelation set $(\cE,s)$ (i.e. a subset of features $\cE\subseteq[p]$ with the same maximum absolute correlation, assigned a fixed sign pattern $\{-1,+1\}^{|\cE|}$, see Definition \ref{def:ec}) is fixed, then the ElasticNet coefficients may be characterized (Lemma \ref{lem:en-equicorrelation}) and the loss is a fixed rational polynomial piece function of the parameters $\lambda_1,\lambda_2$. We then show the existence of a set of boundary function curves $\cG$, such that any region of the parameter space located on a fixed side of all the curves (more formally, for a fixed sign pattern in Definition \ref{def:ps}) in $\cG$ has the same signed equicorrelation set. The boundary functions are a collection of possible curves at which a covariate may enter or leave the set $\cE$ and correspond to polynomial thresholds. We make repeated use of the following lemma which provides useful properties of the piece functions as well the the boundary functions of the dual class loss functions.

\begin{wrapfigure}{r}{0.44\textwidth}
    
    \vspace*{-0.6cm}
    \centering
 \includegraphics[scale=0.66]{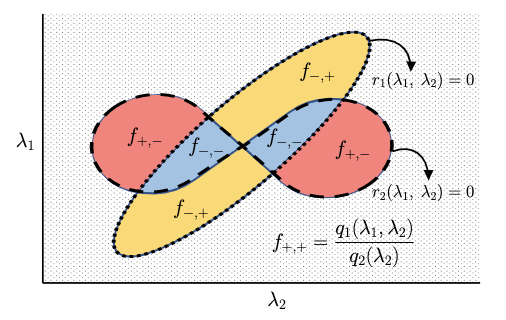}
 
 \caption{An illustration of the piecewise structure of the ElasticNet loss, as a function of the regularization parameters, for a fixed problem instance. Pieces are regions where some bounded degree polynomials ($r_1,r_2$) have a fixed sign pattern (one of $\pm1,\pm1$), and in each piece the loss is a fixed (rational) function.}
    \label{fig:ps}
    
    \vspace*{-0.6cm}
\end{wrapfigure}

\begin{lemma}\label{lem:gram-inv}
Let $A$ be an $r\times s$ matrix. Consider the matrix $B(\lambda)=(A^TA+\lambda I_s)^{-1}$ and $\lambda>0$. \begin{itemize}
    \item[1.] Each entry of $B(\lambda)$ is a rational polynomial $P_{ij}(\lambda)/Q(\lambda)$ for $i,j\in[s]$ with each $P_{ij}$ of degree at most $s-1$, and $Q$ of degree $s$.
    \item[2.] Further, for $i=j$, $P_{ij}$ has degree $s-1$ and leading coefficient 1, and for $i\ne j$ $P_{ij}$ has degree at most $s-2$. Also, $Q(\lambda)$ has leading coefficient $1$. 
\end{itemize}

\end{lemma}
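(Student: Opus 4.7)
The plan is to invoke the adjugate (Cramer's rule) formula for the matrix inverse. Writing $M := A^T A$, which is symmetric positive semidefinite, we have that $M + \lambda I_s$ is strictly positive definite for every $\lambda > 0$ and hence invertible, and
\[
B(\lambda)_{ij} \;=\; \frac{(-1)^{i+j}\,\det\!\bigl(N_{ji}(\lambda)\bigr)}{\det(M+\lambda I_s)},
\]
where $N_{ji}(\lambda)$ denotes the $(s-1)\times(s-1)$ submatrix of $M + \lambda I_s$ obtained by deleting row $j$ and column $i$. I would therefore take $Q(\lambda) := \det(M+\lambda I_s)$ and $P_{ij}(\lambda) := (-1)^{i+j}\det(N_{ji}(\lambda))$, which is exactly the rational-function representation in the statement.

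For the denominator, $Q(\lambda) = \det(\lambda I_s - (-M))$ is the characteristic polynomial of $-M$ evaluated at $\lambda$, hence a monic polynomial of degree exactly $s$; this yields both the degree and leading-coefficient claims for $Q$. For the diagonal numerators ($i=j$), deleting row $i$ and column $i$ leaves $M^{(i)} + \lambda I_{s-1}$, where $M^{(i)}$ is the principal submatrix of $M$ with row and column $i$ removed. Its determinant is the characteristic polynomial of $-M^{(i)}$, hence monic of degree $s-1$; the sign factor $(-1)^{2i} = 1$ preserves this, establishing the claim for $P_{ii}$.

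The main step is the off-diagonal bound ($i \ne j$), which I would prove by expanding $\det(N_{ji}(\lambda))$ via the Leibniz formula as a signed sum over permutations $\sigma$ of $[s-1]$, each term a product of $s-1$ entries. The entries of $N_{ji}(\lambda)$ that depend on $\lambda$ are exactly those inherited from the diagonal positions $(k,k)$ of $M + \lambda I_s$ with $k \ne i$ and $k \ne j$, of which there are exactly $s - 2$. Each such entry occupies a unique row and a unique column of the submatrix, so any single permutation $\sigma$ can select at most $s - 2$ of them in its product. Hence every Leibniz term is a polynomial of degree at most $s - 2$ in $\lambda$, and summing over $\sigma$ preserves this bound, giving $\deg P_{ij} \le s - 2$, as required.

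The only real subtlety, and the main obstacle, lies in this off-diagonal case: because the deleted row index $j$ differs from the deleted column index $i$, the surviving original diagonal entries $(k,k)$ may land in off-diagonal positions of $N_{ji}(\lambda)$, which prevents the direct "principal-submatrix" argument used for $P_{ii}$ from applying. The resolution is to observe that one only needs to count, not locate, the $\lambda$-carrying entries: the Leibniz expansion uses each matrix entry at most once per term, so the total count of $s-2$ surviving $\lambda$-entries directly caps the maximum achievable $\lambda$-degree.
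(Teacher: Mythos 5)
Your proof is correct, but it takes a genuinely different route from the paper's. The paper diagonalizes the symmetric Gram matrix $G=A^TA=E\Lambda E^{-1}$ and writes $B(\lambda)=E(\Lambda+\lambda I_s)^{-1}E^{-1}$, so that $Q(\lambda)=\prod_{k}(\Lambda_{kk}+\lambda)$ and $P_{ij}(\lambda)=\sum_k E_{ik}(E^{-1})_{kj}\prod_{l\ne k}(\Lambda_{ll}+\lambda)$; the coefficient of $\lambda^{s-1}$ in $P_{ij}$ is then $\sum_k E_{ik}(E^{-1})_{kj}=(EE^{-1})_{ij}=\mathbb{I}\{i=j\}$, which settles the diagonal and off-diagonal claims in one stroke. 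Your adjugate/Cramer argument is more elementary (no spectral theorem needed; it works for any matrix of the form $M+\lambda I$, not just symmetric $M$), and your counting of the $s-2$ surviving $\lambda$-carrying entries in the Leibniz expansion correctly handles the off-diagonal case despite those entries landing off the diagonal of $N_{ji}(\lambda)$ --- the observation that one only needs to count them, not locate them, is exactly the right resolution. What the paper's spectral route buys in exchange is an explicit factorization $Q(\lambda)=\prod_k(\Lambda_{kk}+\lambda)$ over the nonnegative eigenvalues of the positive semidefinite Gram matrix, which the paper reuses later (in the dispersion analysis of Theorem \ref{thm:en-regression-dispersion}) to argue that the denominator has no positive roots; your representation $Q(\lambda)=\det(M+\lambda I_s)$ would require the additional remark that $M+\lambda I_s$ is positive definite for $\lambda>0$ (which you do note) to recover that fact.
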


\noindent The proof is straightforward and deferred to Appendix \ref{app:regression}. We will now formally state and prove our key structural result which is needed to establish our generalization and online regret guarantees in Section \ref{sec:regression}.

\begin{theorem}\label{lem:en-structure}
Let $\cL$ be a set of functions $\{l_{\lambda}:\Pi_{m,p}\rightarrow \R_{\ge 0}\mid \lambda\in\R^+\times\R^+$ that map a regression problem instance $P\in \Pi_{m,p}$ to the validation loss $\ell_{\text{EN}}(\lambda,P)$ of ElasticNet trained with regularization parameter $\lambda=(\lambda_1,\lambda_2)$. The dual class $\cL^*$ is $(\cF,\cG,p3^p)$-piecewise decomposable, with $\cF = \{f_q : \cL \rightarrow \R\}$ consisting of rational polynomial functions $f_{q_1,q_2} : l_\lambda\mapsto \frac{q_1(\lambda_1,\lambda_2)}{q_2(\lambda_2)}$, where $q_1,q_2$ have degrees at most $2p$, and $\cG=\{g_r:\cL\rightarrow \{0,1\}\}$ consists of polynomial threshold functions $g_r : u_\lambda \mapsto \mathbb{I}\{r(\lambda_1,\lambda_2)<0\}$, where $r$ is a polynomial of degree 1 in $\lambda_1$ and at most $p$ in $\lambda_2$.
\end{theorem}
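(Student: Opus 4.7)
The plan is to first fix a candidate signed equicorrelation configuration, i.e.\ a pair $(\cE,s)$ where $\cE\subseteq[p]$ and $s\in\{-1,+1\}^{|\cE|}$, and derive a closed form for the ElasticNet estimator under the hypothesis that this is the true configuration. Applying the KKT conditions to the objective in equation~\eqref{eq:elasticnet}, a coordinate $j\in\cE$ satisfies $-2X_j^T(y-X\hat\beta)+\lambda_1 s_j+2\lambda_2\hat\beta_j=0$, while $\hat\beta_j=0$ for $j\notin\cE$. Stacking the equalities on $\cE$ yields $\hat\beta_\cE=(X_\cE^T X_\cE+\lambda_2 I)^{-1}(X_\cE^T y-\lambda_1 s/2)$. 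By Lemma~\ref{lem:gram-inv} applied to $A=X_\cE$, each entry of the inverse has the form $P_{ij}(\lambda_2)/Q(\lambda_2)$ with $\deg Q=|\cE|$ and $\deg P_{ij}\le|\cE|-1$, so each coordinate of $\hat\beta$ equals $\tilde P_j(\lambda_1,\lambda_2)/Q(\lambda_2)$ where $\tilde P_j$ is affine in $\lambda_1$ and of degree at most $|\cE|-1$ in $\lambda_2$.

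I would then substitute this into the validation loss $\ell_{\text{EN}}(\lambda,P)=\frac{1}{m'}\|y_v-X_v\hat\beta\|_2^2$. Each residual coordinate becomes $(y_{v,i}Q(\lambda_2)-R_i(\lambda_1,\lambda_2))/Q(\lambda_2)$ with $R_i$ inheriting the degree bounds of $\tilde P_j$. Squaring and summing produces $q_1(\lambda_1,\lambda_2)/Q(\lambda_2)^2$, and one can check that $q_2:=Q^2$ has degree $2|\cE|\le 2p$, while $q_1$ has degree at most $2$ in $\lambda_1$ and $2|\cE|$ in $\lambda_2$, of total degree at most $2p$; this matches the piece function family $\cF$.

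For the boundary functions I would track all transitions of $(\cE,s)$. A coordinate $j\in\cE$ leaves $\cE$ or flips its assigned sign exactly when $\hat\beta_j$ crosses zero; since $Q(\lambda_2)>0$ for $\lambda_2>0$ (as $X_\cE^T X_\cE+\lambda_2 I\succ 0$), this is equivalent to $\tilde P_j(\lambda_1,\lambda_2)=0$, a polynomial of degree $1$ in $\lambda_1$ and at most $p-1$ in $\lambda_2$. A coordinate $j\notin\cE$ enters $\cE$ exactly when $|2X_j^T(y-X\hat\beta)|=\lambda_1$; writing $X_j^T(y-X\hat\beta)=N_j(\lambda_1,\lambda_2)/Q(\lambda_2)$ with $N_j$ of the same form as $\tilde P_j$ and clearing denominators yields two polynomials $2N_j(\lambda_1,\lambda_2)\mp\lambda_1 Q(\lambda_2)=0$ of degree $1$ in $\lambda_1$ and at most $|\cE|\le p$ in $\lambda_2$. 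These form the boundary class $\cG$.

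Finally I would verify that on any cell of the arrangement cut out by $\cG$, exactly one signed equicorrelation set is consistent with all KKT conditions simultaneously, so the rational piece function derived above is well-defined on that cell. Counting: the number of signed equicorrelation sets is $\sum_{k=0}^p\binom{p}{k}2^k=3^p$, and each contributes $O(p)$ boundary polynomials, yielding at most $p\cdot 3^p$ in total. The main obstacle is this last consistency step: the closed form assumes knowledge of $(\cE,s)$, so I must argue that the $\cG$ boundaries are complete in the sense that not crossing any boundary polynomial prevents any coordinate from moving in or out of $\cE$ or from flipping its sign; this follows from the convexity of the ElasticNet objective (which makes KKT both necessary and sufficient) combined with continuity of $\hat\beta$ in $(\lambda_1,\lambda_2)$ away from the zero set of $Q$.
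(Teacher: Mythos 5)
Your proposal is correct and follows essentially the same route as the paper: index the pieces by signed equicorrelation sets $(\cE,s)$, use Lemma~\ref{lem:gram-inv} to get the rational form of the loss on each piece, and take the entering/leaving conditions as the boundary polynomials, with the same degree bounds and the same $3^p\cdot p$ count. The only cosmetic difference is that you apply the KKT conditions directly to the ElasticNet objective to obtain $\hat\beta_\cE=(X_\cE^TX_\cE+\lambda_2 I)^{-1}(X_\cE^Ty-\lambda_1 s/2)$, whereas the paper routes through the standard reduction to LASSO on the augmented data $(X^*,y^*)$ and Tibshirani's characterization; these yield the same structure (up to the harmless constant-factor convention on $\lambda_1$).
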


\begin{proof}
Let $P=(X,y,\Xv,\yv)\in\Pi_{m,p}$ be a regression problem instance. By using the standard reduction to LASSO \cite{zou2005regularization} and well-known characterization of the LASSO solution in terms of equicorrelation sets, we can characterize the solution $\hat{\beta}_{\lambda,f_{EN}}$ of the Elastic Net as follows (Lemma \ref{lem:en-equicorrelation}):
\[\hat{\beta}_{\lambda,f_{EN}} = (X_{\cE}^TX_{\cE}+\lambda_2I_{|\cE|})^{-1}X_{\cE}^Ty-\lambda_1(X_{\cE}^TX_{\cE}+\lambda_2I_{|\cE|})^{-1}s,\]
for some $\cE\in[p]$ and $s\in\{-1,1\}^p$. Thus for any $\lambda=(\lambda_1,\lambda_2)$, the prediction $\hat{y}$ on any validation example with features $\x\in\R^p$ satisfies (for some $\cE,s\in 2^{[p]}\times \{-1,1\}^p$)
\[\hat{y}_j=\x\hat{\beta}_{\lambda,f_{EN}} = \x(X_{\cE}^TX_{\cE}+\lambda_2I_{|\cE|})^{-1}X_{\cE}^Ty-\lambda_1\x(X_{\cE}^TX_{\cE}+\lambda_2I_{|\cE|})^{-1}s.\]
For any subset $R\subseteq\R^2$, if the signed equicorrelation set $(\cE,s)$ is fixed over $R$, then the above observation, together with Lemma \ref{lem:gram-inv} implies that the loss function $\ell_{\text{EN}}(\lambda,P)$ is a rational function of the form $\frac{q_1(\lambda_1,\lambda_2)}{q_2(\lambda_2)}$, where $q_1$ is a bivariate polynomial with degree at most $2|\cE|$ and $q_2$ is univariate with degree $2|\cE|$.

To show the piecewise structure, we need to demonstrate a set boundary functions $\cG=\{g_1,\dots,g_k\}$ such that for any sign pattern $\b\in\{0,1\}^k$, the signed equicorrelation set $(\cE,s)$  for the region with sign pattern $\b$ is fixed. To this end, based on the observation above, we will consider the conditions (on $\lambda$) under which a covariate may enter or leave the equicorrelation set. We will show that this can happen only at one of a finite number of algebraic curves (with bounded degrees).

{\it Condition for joining $\cE$.} Fix $\cE,s$. Also fix $j\notin \cE$. If covariate $j$ enters the equicorrelation set, the KKT conditions (Lemma \ref{lem:lasso-kkt}) applied to the LASSO problem corresponding to the ElasticNet (Lemma \ref{lem:en-equicorrelation}) imply 
$$(\x_j^*)^T(y^*-X_\cE^*(c_1-c_2\lambda_1^*))=\pm \lambda_1^*,$$
where $c_1= ({X^*_{\cE}}^TX^*_{\cE})^{-1}{X^*_{\cE}}^Ty^*$, $c_2=({X^*_{\cE}}^TX^*_{\cE})^{-1}s$, $X^*=\frac{1}{\sqrt{1+\lambda_2}}\begin{pmatrix}X\\ \sqrt{\lambda_2}I_p\end{pmatrix}$, $y^*=\begin{pmatrix}y\\ 0\end{pmatrix}$, and $\lambda_1^*=\frac{\lambda_1}{\sqrt{1+\lambda_2}}$. Rearranging, and simplifying, we get\looseness-1 
\begin{align*}\lambda_1^*&=\frac{(\x_j^*)^TX^*_\cE({X^*_{\cE}}^TX^*_{\cE})^{-1}(X^*_{\cE})^Ty^*-(\x^*_j)^Ty^*}{(\x_j^*)^TX^*_\cE({X^*_{\cE}}^TX^*_{\cE})^{-1} s\pm1}, \text{or}\\
\lambda_1&=\frac{\x_j^TX_\cE({X_{\cE}}^TX_\cE+\lambda_2 I_{|\cE|})^{-1}{X_{\cE}}^Ty-\x_j^Ty}{\x_j^TX_\cE({X_{\cE}}^TX_\cE+\lambda_2 I_{|\cE|})^{-1} s\pm1}.\end{align*}

\noindent Note that the terms $(\x_j^*)^TX^*_\cE =\x_j^TX_\cE$, $(X^*_{\cE})^Ty^*=X_{\cE}^Ty$, and $(\x^*_j)^Ty^*=\x_j^Ty$ do not depend on $\lambda_1$ or $\lambda_2$ (the $\lambda_2$ terms are zeroed out since $j\notin\cE$). Moreover, $({X^*_{\cE}}^TX^*_{\cE})^{-1}=({X_{\cE}}^TX_\cE+\lambda_2 I_{|\cE|})^{-1}$. Using Lemma \ref{lem:gram-inv}, we get an algebraic curve $r_{j,\cE,s}(\lambda_1,\lambda_2)=0$ with degree 1 in $\lambda_1$ and $|\cE|$ in $\lambda_2$ corresponding to addition of $j\notin\cE$ given $\cE,s$. 

{\it Condition for leaving $\cE$.} Now consider a fixed $j'\in\cE$, given fixed $\cE,s$. The coefficient of $j'$ will be zero for $\lambda_1^*=\frac{(c_1)_{j'}}{(c_2)_{j'}}$, which simplifies to $\lambda_1(({X_{\cE}}^TX_\cE+\lambda_2 I_{|\cE|})^{-1}s)_{j'} = (({X_{\cE}}^TX_\cE+\lambda_2 I_{|\cE|})^{-1}{X_{\cE}}^Ty)_{j'}$. Again by Lemma \ref{lem:gram-inv}, we get an algebraic curve $r_{j',\cE,s}(\lambda_1,\lambda_2)=0$ with degree 1 in $\lambda_1$ and at most $|\cE|$ in $\lambda_2$ corresponding to removal of $j'\in\cE$ given $\cE,s$.

Putting the two together, we get $\sum_{i=0}^p2^i{p\choose i}\left((p-i)+i\right)=p3^p$ algebraic curves of degree 1 in $\lambda_1$ and at most $p$ in $\lambda_2$, across which the signed equicorrelation set may change. These curves characterize the complete set of points $(\lambda_1,\lambda_2)$ at which $(\cE,s)$ may possibly change. Thus by setting these $p3^p$ curves as the set of boundary functions $\cG$, $\cE,s$ is guaranteed to be fixed for each sign pattern, and the corresponding loss takes the rational function form shown above.
\end{proof}

\noindent The exact same piecewise structure can be established for the dual function classes for loss functions $\ell_{\text{EN}}^{\textsf{AIC}}(\lambda,\cdot)$ and $\ell_{\text{EN}}^{\textsf{BIC}}(\lambda,\cdot)$. This is evident from the proof of Theorem \ref{lem:en-structure}, since any dual piece has a fixed equicorrelation set, and therefore $||\beta||_0$ is fixed. Given this piecewise structure, a challenge to learning values of $\lambda$ that minimize the loss function is that the function may not be differentiable (or may even be discontinuous, for the information criteria based losses) at the piece boundaries, making well-known gradient-based (local) optimization techniques inapplicable here. In the following (specifically Algorithm \ref{alg:ddreg}) we will show that techniques from data-driven design may be used to overcome this optimization challenge.

\section{Learning to Regularize the ElasticNet}\label{sec:regression}

We will consider the problem of learning provably good ElasticNet parameters for a given problem domain, from multiple datasets (problem instances) either available as a collection (Section \ref{sec:en-regression-distributional}), or arriving online (Section \ref{sec:en-regression-online}). Our parameter tuning techniques also apply to simpler regression techniques typically used for variable selection, like LARS and LASSO, which are reasonable choices if the features are not multicollinear. Additional proof details for the results in this section are located in Appendix \ref{app:regression}.\looseness-1 

\subsection{Distributional Setting}\label{sec:en-regression-distributional}

Our main result in this section is the following upper bound on the pseudo-dimension of the classes of loss functions for the ElasticNet, which implies that in our distributional setting it is possible to learn near-optimal values of $\lambda$ with polynomially many problem instances.\looseness-1 

\begin{theorem}\label{thm:pdim-en}
$\pdim(\Hen)=O(p^2)$. Further, $\pdim(\Hen^{\textsf{AIC}})=O(p^2)$ and $\pdim(\Hen^{\textsf{BIC}})=O(p^2)$.
\end{theorem}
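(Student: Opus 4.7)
The plan is to combine the piecewise structure from Theorem \ref{lem:en-structure} with a general pseudo-dimension bound for piecewise-decomposable function classes (in the spirit of the recipe from Balcan et al.'s data-driven algorithm design framework), and then to observe that the information-criterion variants inherit the same decomposition up to a piecewise-constant correction.

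First I would bound the complexity of the two component classes from Theorem \ref{lem:en-structure}. The boundary class $\cG$ consists of thresholds of polynomials in $(\lambda_1,\lambda_2)$ of partial degree $1$ in $\lambda_1$ and at most $p$ in $\lambda_2$, parameterized by $O(p)$ coefficients; since evaluating at a fixed $\lambda$ is linear in the coefficients, the dual class $\cG^*$ is a linear threshold class in $O(p)$ dimensions, giving $\text{VCdim}(\cG^*)=O(p)$. The piece class $\cF$ consists of rational functions $q_1(\lambda)/q_2(\lambda_2)$ with $q_1,q_2$ of degree at most $2p$, with $O(p^2)$ coefficients overall; clearing denominators reduces threshold comparisons $f(\lambda)\le t$ to polynomial inequalities of degree $O(p)$, and standard pseudo-dimension bounds for polynomial thresholds in an $O(p^2)$-dimensional coefficient space give $\pdim(\cF^*)=O(p^2)$.

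Second I would plug these into the standard pseudo-dimension recipe for $(\cF,\cG,k)$-piecewise decomposable classes with $k=p3^p$, of the form $\pdim(\Hen)=O(\text{VCdim}(\cG^*)\log(k\cdot\text{VCdim}(\cG^*))+\pdim(\cF^*))$. Substituting yields $\pdim(\Hen)=O(p\log(p^2 3^p)+p^2)=O(p^2)$, as the $p^2$ piece-class term dominates the $p\log(p3^p)=O(p^2)$ boundary-class term.

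Finally for the AIC and BIC variants I would observe that on any piece of the decomposition in Theorem \ref{lem:en-structure} the signed equicorrelation set $(\cE,s)$ is fixed, so the sparsity $\|\hat\beta_{\lambda,f_{EN}}\|_0=|\cE|$ is constant on that piece. Hence $\ell_{\text{EN}}^{\textsf{AIC}}$ and $\ell_{\text{EN}}^{\textsf{BIC}}$ differ from $\ell_{\text{EN}}$ only by a piecewise-constant additive term, which preserves the $(\cF',\cG,p3^p)$-decomposability with the same boundaries and an enlarged piece class $\cF'$ still of dual pseudo-dimension $O(p^2)$, so the same bound applies. The main obstacle is obtaining the bound $\pdim(\cF^*)=O(p^2)$ for the rational piece class: unlike polynomial pieces, rational functions do not directly reduce to linear thresholds, and one must either clear denominators carefully (at most doubling but still bounding degrees by $O(p)$) or directly count sign patterns in the coefficient space of $(q_1,q_2)$ via a tool such as Warren's theorem.
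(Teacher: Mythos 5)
Your overall skeleton matches the paper's proof---the same decomposition from Theorem \ref{lem:en-structure}, the same master theorem from \cite{balcan2021much}, and the same observation that $\|\hat\beta_{\lambda,f_{\text{EN}}}\|_0=|\cE|$ is constant on each piece for the AIC/BIC variants---but the step you yourself flag as the main obstacle is where the argument actually breaks. The quantity the master theorem needs is the pseudo-dimension of the \emph{dual} piece class $\cF^*$, whose functions are indexed by points $\lambda\in\R^2$ and which must shatter a collection of rational functions. Your linearization over the $O(p^2)$-dimensional coefficient space of $(q_1,q_2)$ instead bounds the \emph{primal} complexity (shattering parameter points using functions parameterized by coefficients); primal and dual pseudo-dimensions are not comparable up to polynomial factors in general, so this does not establish $\pdim(\cF^*)=O(p^2)$. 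The paper's Lemma \ref{lem:f-pdim} exploits the fact that the dual functions live over the two-dimensional parameter space: $N$ shattered rational functions yield $N$ algebraic curves of degree $O(p)$ in the plane, which by a Bezout-type cell count (Lemma \ref{lem:ac-cells}) induce only $O(p^3N^3)$ sign patterns, forcing $2^N\le O(p^3N^3)$ and hence $\pdim(\cF^*)=O(\log p)$---exponentially smaller than your claimed bound, and obtained by counting sign patterns in the parameter space rather than the coefficient space.

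This matters for the final arithmetic. The master theorem as quoted in the paper (Theorem \ref{thm:pdim-dual}) reads $\pdim(\cH)=O((\pdim(\cF^*)+d_{\cG^*})\log(\pdim(\cF^*)+d_{\cG^*})+d_{\cG^*}\log k)$; substituting your $\pdim(\cF^*)=O(p^2)$ and $d_{\cG^*}=O(p)$ gives $O(p^2\log p)$, which falls short of the stated $O(p^2)$. Your computation only closes because you invoke a variant of the recipe in which $\pdim(\cF^*)$ enters additively without the logarithmic factor, which is not the form of the result being used. With the paper's $O(\log p)$ bound the first term is $O(p\log p)$ and the boundary term $d_{\cG^*}\log k=O(p\log(p3^p))=O(p^2)$ dominates, yielding the claimed bound. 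Your treatment of $\cG^*$ via linearization and of the AIC/BIC variants is consistent with the paper and is fine as stated.
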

\begin{proofoutline}
The crucial ingredient is the $(\cF,\cG,p3^p)$-piecewise decomposable structure for the dual class function $\Hen^*$ established in Theorem \ref{lem:en-structure}, \blue{where $\cF$ is a class of bivariate rational functions and $\cG$ consists of polynomial thresholds, both with bounded degrees}. We then bound the complexity of the corresponding dual class functions $\cF^*$ and $\cG^*$, in order to use the following powerful general result due to \cite{balcan2021much} (Theorem \ref{thm:pdim-dual} in the appendix)

\[\pdim(\cH)=O((\pdim(\cF^*)+d_{\cG^*})\log(\pdim(\cF^*)+d_{\cG^*})+d_{\cG^*}\log k).\]

\noindent In more detail, we can bound the pseudo-dimension of the dual class of piece functions $\cF^*$ (a class of bivariate rational functions) by $O(\log p)$ \blue{(Lemma \ref{lem:f-pdim} in the appendix)}, by giving an upper bound of $O(k^3d^3)$ on the number of sign patterns over $\R^2$ induced by $k$ algebraic curves of degree at most $d$. 
We can also bound the VC dimension of the dual class of boundary functions $\cG^*$ (polynomial thresholds in two variates) by $O(p)$ using a standard linearization argument (Lemma \ref{lem:g-vcdim}). Finally, the above result from \cite{balcan2021much} allows us to bound the pseudodimension of $\cH$ by combining the above bounds. 

\blue{\[\pdim(\cH)=O(p\log p+p\log(p3^p))=O(p^2).\]

\noindent The dual classes ${(\Hen^{\textsf{AIC}})}^*$ and ${(\Hen^{\textsf{BIC}})}^*$ also follow the same piecewise decomposable structure given by Theorem \ref{lem:en-structure}. This is because in each piece the equicorrelation set $\cE$, and therefore $||\beta||_0=|\cE|$  is fixed (Lemma \ref{lem:lasso}). 
The above argument implies an identical upper bound on the pseudo-dimensions of $\Hen^{\textsf{AIC}}$ and $\Hen^{\textsf{BIC}}$. See Appendix \ref{app:regression} for further proof details, including the technical lemmas.}
\end{proofoutline}

\noindent 
The upper bound above implies a guarantee on the sample complexity of learning the ElasticNet tuning parameter, using standard learning-theoretic results \cite{anthony1999neural}, \blue{under mild boundedness assumptions on the data and hyperparameter search space. 

\begin{assumption}[Boundedness]
The predicted variable and all feature values are bounded by an absolute constant $R$, i.e. $\max\{||X^{(i)}||_{\infty,\infty},||y^{(i)}||_\infty,||\Xv^{(i)}||_{\infty,\infty},||\yv^{(i)}||_\infty\}\le R$. Furthermore, the regularization coefficients are bounded, $(\lambda_1,\lambda_2)\in[\lambda_{\min},\lambda_{\max}]^2$ for $0<\lambda_{\min}<\lambda_{\max}<\infty$.
\label{ass:boundedness}
\end{assumption}}

\noindent In our setting of learning from multiple problem instances, each sample is a dataset instance, so the sample complexity is simply the number of regression problem instances needed to learn the tuning parameters to any given approximation and confidence level.

\begin{theorem}[Sample complexity of tuning the ElasticNet] Suppose Assumption \ref{ass:boundedness} holds. Let $\cD$ be an arbitary distribution over the problem space $\Pi_{m,p}$. There is an algorithm which given $n=O\left(\frac{\blue{H^2}}{\epsilon^2}(p^2+\log\frac{1}{\delta})\right)$ problem samples drawn from $\cD$, for any $\epsilon>0$ and $\delta\in(0,1)$ and \blue{some constant $H$}, outputs a  regularization parameter $\hat{\lambda}$ for the ElasticNet such that with probability at least $1-\delta$ over the draw of the problem samples, we have that\looseness-1 
\[\Big\lvert\bbE_{P\sim\cD}[\ell_{EN}(\hat{\lambda},P)]-\min_{\lambda}\bbE_{P\sim\cD}[\ell_{EN}(\lambda,P)]\Big\rvert\le \epsilon.\]
\label{thm:en-sc}
\end{theorem}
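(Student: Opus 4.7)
The plan is a standard ERM + uniform convergence argument that leverages the pseudo-dimension bound from Theorem \ref{thm:pdim-en}. I would propose the natural algorithm: draw $n$ i.i.d.\ samples $P^{(1)},\ldots,P^{(n)}\sim\cD$ and output $\hat\lambda \in \argmin_{\lambda\in[\lambda_{\min},\lambda_{\max}]^2} \tfrac{1}{n}\sum_i \ell_{EN}(\lambda,P^{(i)})$, taking an $\epsilon/4$-approximate minimizer if no exact one exists.

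The first step is to verify that under Assumption \ref{ass:boundedness} the loss class $\Hen$ is uniformly bounded by some $H$ that is polynomial in $R$, $p$, $m$, and $1/\lambda_{\min}$. Evaluating the ElasticNet objective \eqref{eq:elasticnet} at $\beta=\0$ gives $\|y\|_2^2 \le mR^2$, and since $\hat\beta$ minimizes this objective we obtain $\lambda_{\min}\|\hat\beta\|_2^2 \le \lambda_2\|\hat\beta\|_2^2 \le mR^2$, so $\|\hat\beta\|_2 \le R\sqrt{m/\lambda_{\min}}$. A Cauchy--Schwarz bound on $\tfrac{1}{m'}\|\yv - \Xv\hat\beta\|_2^2$ then yields the uniform bound. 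For the AIC/BIC variants the additional $\ell_0$ term is bounded by $2p\log m$, so the same estimate applies with a slightly larger $H$.

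Second, with $\pdim(\Hen)=O(p^2)$ from Theorem \ref{thm:pdim-en} and uniform boundedness $H$, the classical uniform convergence theorem for bounded real-valued classes of finite pseudo-dimension (\cite{anthony1999neural}; see also Appendix \ref{app:learning-theory}) implies that for the stated sample size $n$, with probability at least $1-\delta$,
\[\sup_\lambda\Big|\tfrac{1}{n}\sum_{i=1}^n \ell_{EN}(\lambda,P^{(i)}) - \bbE_{P\sim\cD}[\ell_{EN}(\lambda,P)]\Big| \le \epsilon/2.\]
Applying this at $\hat\lambda$ and at any minimizer $\lambda^*$ of the true expected loss, and using $\tfrac{1}{n}\sum_i \ell_{EN}(\hat\lambda,P^{(i)}) \le \tfrac{1}{n}\sum_i \ell_{EN}(\lambda^*,P^{(i)})$, the standard two-sided argument yields the claimed bound.

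The main obstacle---really the only genuine subtlety beyond bookkeeping---is defining ERM over a continuous domain, since each $\ell_{EN}(\cdot,P^{(i)})$ is piecewise rational and may be discontinuous at piece boundaries (especially under AIC/BIC). Theorem \ref{lem:en-structure} resolves this cleanly: each per-instance loss is a fixed rational function on each of at most $p3^p$ cells cut out by algebraic curves. Overlaying the $n$ per-instance partitions yields finitely many cells on which the empirical average is a fixed rational function on a compact box; on each cell the infimum is well-defined and either attained or approximable to arbitrary precision, so taking the best across cells furnishes a valid $\hat\lambda$ (or $\epsilon/4$-approximation), completing the argument.
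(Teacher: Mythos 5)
Your proposal is correct and follows essentially the same route as the paper: the boundedness argument via evaluating the objective at $\beta=\0$ is exactly the paper's Lemma \ref{lemma:bounded-EN}, and the rest is the same combination of Theorem \ref{thm:pdim-en} with the classical uniform convergence bound for classes of bounded pseudo-dimension (Theorem \ref{thm:pdim-generalization}). Your added remark on why ERM is well-defined over the continuous parameter domain, via the piecewise-rational structure of Theorem \ref{lem:en-structure}, is a reasonable supplement that the paper leaves implicit.
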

\begin{proof}
\blue{We use Lemma \ref{lemma:bounded-EN} to conclude that the validation loss is uniformly bounded by some constant $H$ under Assumption \ref{ass:boundedness}.} The result then follows from substituting our result in Theorem \ref{thm:pdim-en} into well-known generalization guarantee for function classes with bounded pseudo-dimensions (Theorem \ref{thm:pdim-generalization}).
\end{proof}

\noindent {\it Discussion and applications.} {Computing the parameters which minimize the loss on the problem samples (aka Empirical Risk Minimization, or ERM) achieves the sample complexity bound in Theorem \ref{thm:en-sc}. Even though we only need polynomially many samples to guarantee the selection of nearly-optimal parameters, it is not clear how to implement the ERM efficiently.} Note that we do not assume the set of features is the same across problem instances, so our approach can handle feature reset i.e. different problem instances can differ in not only the number of examples but also the number of features. Moreover, as a special case application, we consider the commonly used techniques of leave-one-out cross validation (LOOCV) and Monte Carlo cross validation (repeated random test-validation splits, typically independent and in a fixed proportion). Given a dataset of size $m_{tr}$, LOOCV would require $m_{tr}$ regression fits which can be inefficient for large dataset size. Alternately, we can consider draws from a distribution $\cD_{LOO}$ which generates problem instances $P$ from a fixed dataset $(X,y)\in\R^{m+1\times p}\times\R^{m+1}$ by uniformly selecting $j\in[m+1]$ and setting $P=(X_{-j*},y_{-j},X_{j*},y_j)$. Theorem~\ref{thm:en-sc} now implies that $\Tilde{O}(p^2/\epsilon^2)$ iterations are enough to determine an ElasticNet parameter $\hat{\lambda}$ with loss within $\epsilon$ (with high probability) of the parameter $\lambda^*$ obtained from running the full LOOCV. Similarly, we can define a distribution $\cD_{MC}$ to capture the Monte Carlo cross validation procedure and determine the number of iterations sufficient to get an $\epsilon$-approximation of the loss corresponding parameter selection with arbitrarily large number of runs of the procedure.
Thus, in a very precise sense, our results answer the question of how much cross-validation is enough to effectively implement the above techniques.

\begin{remark}
While our result implies polynomial sample complexity, the question of learning the provably near-optimal parameter efficiently (even in output polynomial time) is left open. For the special cases of LASSO $(\lambda_2=0)$ and Ridge $(\lambda_1=0)$, the piece boundaries of the piecewise polynomial dual class (loss) function may be computed efficiently (using the LARS-LASSO algorithm of \cite{efron2004least} for LASSO, and solving linear systems and locating roots of polynomials for Ridge). This applies to online and classification settings in the following sections as well.
\end{remark}

\subsection{Online Learning}\label{sec:en-regression-online}

We will now extend our results to learning the regularization coefficients given an online sequence of regression problems, such as when one needs to solve a new regression problem each day. Unlike the distributional setting above, we will not assume any problem distribution and our results will hold for an adversarial sequence of problem instances. We will need very mild assumptions on the data, namely boundedness of feature and prediction values and  `smoothness' of predictions (formally stated as Assumptions \ref{ass:boundedness} and \ref{ass:smoothness}), while our distributional results above hold for worst-case problem datasets.


We will need two mild assumptions on the datasets in our problem instances for our results to hold. Our first assumption is that all feature values and predictions are bounded, for training as well as validation examples (Assumption \ref{ass:boundedness} above).
 We will need the following definition to state our second assumption. Roughly speaking the definition below captures smoothness of a distribution.

\begin{definition}\label{def:k-bounded}A continuous probability distribution is said to be {\it $\kappa$-bounded} if the probability density function $p(x)$ satisfies $p(x)\le \kappa$ for any $x$ in the sample space. 
\end{definition}
\noindent For example, the {\it normal} distribution $\cN(\mu,\sigma^2)$ with mean $\mu$ and standard deviation $\sigma$ is $\frac{1}{\sigma\sqrt{2\pi}}$-bounded. 
We assume that the predicted variable $y$ in the training set comes from a $\kappa$-bounded (i.e. smooth) distribution, {which does not require the strong tail decay of sub-Gaussian distributions \cite{zhang2009some,candes2009near}. Moreover, the online adversary is allowed to change the distribution as long as it is $\kappa$-bounded.} Note that our assumption also  captures common data preprocessing steps, for example the \textsf{jitter} parameter in the popular Python library scikit-learn \cite{scikit-learn} adds a uniform noise to the $y$ values to help model stability. The assumption is formally stated as follows:

\begin{assumption}[Smooth predictions]
The predicted variables $y^{(i)}$ in the training set are drawn from a joint $\kappa$-bounded distribution, i.e. for each $i$, the variables $y^{(i)}$ have a joint distribution with probability density bounded by $\kappa$.
\label{ass:smoothness}
\end{assumption}

\noindent Under these assumptions, we can show that it is possible to learn the ElasticNet parameters with sublinear expected regret when the problem instances arrive online. The  learning algorithm (Algorithm \ref{alg:ddreg}) that achieves this regret is a continuous variant of the classic Exponential Weights algorithm \cite{cesa2006prediction,balcan2018dispersion}. It samples points in the domain with probability inversely propotional to the exponentiated loss. To formally state our result, we will need the following definition of {\it dispersed} loss functions. Informally speaking, it captures how amenable a set of non-Lipschitz functions is to online learning by measuring the worst rate of occurrence of non-Lipschitzness (or discontinuities) between any pair of points in the domain. \cite{balcan2018dispersion,sharma2020learning,dick2020semi} show that dispersion is necessary and sufficient for learning piecewise Lipschitz functions.
\begin{definition}\label{def:dispersion} Dispersion \cite{dick2020semi}.
The sequence of random loss functions $l_1, \dots,l_T$ is $\beta$-{\it dispersed} for the Lipschitz constant $L$ if, for all $T$ and for all $\epsilon\ge T^{-\beta}$, we have that, in expectation, at most
$\Tilde{O}(\epsilon T)$ functions (the soft-O notation suppresses dependence on quantities beside $\epsilon,T$ and $\beta$, as well as logarithmic terms)
are not $L$-Lipschitz for any pair of points at distance $\epsilon$ in the domain $\cC$. That is, for all $T$ and for all $\epsilon\ge T^{-\beta}$,
$
    \bbE\Big[
\max_{\substack{\rho,\rho'\in\cC\\\norm{\rho-\rho'}_2\le\epsilon}}\big\lvert
\{ t\in[T] \mid l_t(\rho)-l_t(\rho')>L\norm{\rho-\rho'}_2\} \big\rvert \Big] 
\le  \Tilde{O}(\epsilon T)
$.
\end{definition}

\noindent Our key contribution is to show that the loss sequence is dispersed (Definition \ref{def:dispersion}) under the above assumptions. This involves establishing additional structure for the problem, specifically about the location of boundary functions in the piecewise structure from Theorem \ref{lem:en-structure}. This stronger characterization coupled with results from \cite{dick2020semi} on dispersion of algebraic discontinuities completes the proof. 

\begin{theorem}
 \label{thm:en-regression-dispersion} 
Suppose Assumptions \ref{ass:boundedness} and \ref{ass:smoothness} hold. Let $l_1,\dots, l_T:(0,\lambda_{\max} )^2\rightarrow\R_{\ge 0}$ denote an independent sequence of losses (e.g. fresh randomness is used to generate the validation set features in each round) as a function of the ElasticNet regularization parameter $\lambda=(\lambda_1,\lambda_2)$, $l_i(\lambda)=l_r(\hat{\beta}^{(X^{(i)},y^{(i)})}_{\lambda,f_{EN}},(\Xv^{(i)},\yv^{(i)}))$. The sequence of functions is $\frac{1}{2}$-dispersed, and there is an online algorithm with $\Tilde{O}(\sqrt{T})$\footnote{The $\Tilde{O}(\cdot)$ notation hides dependence on logarithmic terms, as well as on quantities other than $T$.} expected regret. The result also holds for loss functions adjusted by information criteria AIC and BIC.
\end{theorem}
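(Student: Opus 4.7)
The plan is to exploit the piecewise structure from Theorem \ref{lem:en-structure}: each loss $l_i$ is a piecewise bivariate rational function in $\lambda=(\lambda_1,\lambda_2)$, with at most $p3^p$ pieces carved out by algebraic boundary curves $r_{j,\cE,s}(\lambda_1,\lambda_2)=0$. First I would verify that within any single piece the rational loss is $L$-Lipschitz for a global constant $L$ depending only on the boundedness parameters from Assumption \ref{ass:boundedness}: on the search region, $\det(X_\cE^T X_\cE+\lambda_2 I_{|\cE|})\ge \lambda_{\min}^{|\cE|}>0$, so the denominator of every piece function is uniformly bounded away from zero, while the numerator polynomial has uniformly bounded coefficients via $R$ and $\lambda_{\max}$.

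The heart of the argument is to establish $\frac12$-dispersion by showing the random boundary curves do not cluster. Reading off the explicit form of $r_{j,\cE,s}$ from the proof of Theorem \ref{lem:en-structure} and clearing the common denominator $\det(X_\cE^T X_\cE+\lambda_2 I_{|\cE|})$, each curve takes the form
\[
\lambda_1\cdot A_{j,\cE,s}(\lambda_2) \;=\; \langle b_{j,\cE,s}(\lambda_2),\,y\rangle \;+\; c_{j,\cE,s}(\lambda_2),
\]
where $A$, $b$, $c$ are polynomials in $\lambda_2$ of degree $O(p)$ whose coefficients depend only on $X$. Crucially, the dependence on the random training label vector $y$ is \emph{linear}, so by Assumption \ref{ass:smoothness} the random offset $\langle b(\lambda_2),y\rangle$ inherits a $\kappa'$-bounded conditional density at each fixed $\lambda_2$ (for some $\kappa'$ depending on $\kappa,R,p,\lambda_{\min},\lambda_{\max}$). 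I would then invoke the dispersion recipe of \cite{dick2020semi} for algebraic discontinuities with smoothly distributed offsets: for any direction $\u$ and any ball of radius $\epsilon\ge T^{-1/2}$, the expected number of the $O(p3^p)\cdot T$ boundary curves entering that ball is $\tilde O(\epsilon T)$, yielding $\beta=\frac12$ dispersion.

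With dispersion and piecewise Lipschitzness both in hand, the regret bound follows by feeding the sequence into the continuous Exponential Weights algorithm (Algorithm \ref{alg:ddreg}), whose analysis from \cite{balcan2018dispersion,dick2020semi} gives $\tilde O(T^{1-\beta})=\tilde O(\sqrt T)$ expected regret at $\beta=\frac12$. The extension to the information-criterion variants $\ell_{\text{EN}}^{\textsf{AIC}}$ and $\ell_{\text{EN}}^{\textsf{BIC}}$ is immediate: the equicorrelation set $\cE$ is constant on each piece, so $\|\hat\beta\|_0=|\cE|$ is piecewise constant on the \emph{same} partition, and the AIC/BIC adjustment contributes only a piecewise-constant term that alters neither the boundary curves nor the within-piece Lipschitz constant.

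The main obstacle I anticipate is verifying the hypotheses of the dispersion recipe rigorously. Because the boundary curves are only degree $1$ in $\lambda_1$ but degree up to $p$ in $\lambda_2$, a worst-case direction in Definition \ref{def:dispersion} may align nearly with the $\lambda_2$ axis, along which a single curve can intersect a line in as many as $p$ points; I will need to argue that the joint density of those intersection locations, induced from the $\kappa$-bounded density on $y$, remains uniformly bounded across the compact search region. Either an off-the-shelf smoothness-to-dispersion reduction for algebraic (not merely linear) boundaries from \cite{dick2020semi}, or a direct Jacobian/Rolle-type argument on the implicit curve showing that the linear-in-$y$ parametrization pushes forward to a bounded density on each axis-aligned projection, will be the technical crux.
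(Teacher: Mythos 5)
Your proposal is correct and follows essentially the same route as the paper: the piecewise structure from Theorem \ref{lem:en-structure}, the linear dependence of the boundary curves on the smoothly distributed $y$, the dispersion recipe of \cite{dick2020semi} followed by its VC-based uniform-convergence step, and continuous exponential weights for the $\Tilde{O}(\sqrt{T})$ regret. The ``technical crux'' you flag is resolved in the paper exactly as you anticipate: the path between $\lambda$ and $\lambda'$ is taken axis-aligned, the $\lambda_1$-segment is handled by the bounded-density arithmetic of Lemma \ref{lem:bounded}, and the $\lambda_2$-segment by clearing denominators to obtain a random degree-$|\cE|$ polynomial with leading coefficient $1$ and bounded, $\kappa$-bounded-density coefficients (Lemma \ref{lem:gram-inv-coeffs}), to which Theorem \ref{thm:poly-roots} applies.
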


\begin{proofoutline}
We start with the $(\cF,\cG,p3^p)$-piecewise decomposable structure for the dual class function $\Hen^*$ from Theorem \ref{lem:en-structure}. Observe that the rational piece functions in $\cF$ do not introduce any new discontinuities since the denominator polynomials do not have positive roots. For each of two types of boundary functions in $\cG$ (corresponding to leaving/entering the equicorrelation set) we show that the discontinuities between any pair of points $\lambda,\lambda'$ lie along the roots of polynomials with non-leading coefficients bounded and smoothly distributed (bounded joint density). This allows us to use results from \cite{dick2020semi} to establish dispersion, and therefore online learnability.
\end{proofoutline}

\begin{algorithm}[t]
\caption{{Data-driven Regularization ($\zeta$)}}
\label{alg:ddreg}
\begin{algorithmic}[1]
\STATE {\bfseries Input:} Problems $(X^{(i)},y^{(i)})$ and regularization penalty function $f$.
\STATE {\bfseries Hyperparameter:} step size parameter  $\zeta \in (0, 1]$.
\STATE {\bfseries Output:} Regularization parameter $(\lambda_i)_{i\in[T]}\in C$, $C\subset\R^+$ (LASSO/Ridge) or $C\subset{\R^+}^2$ (ElasticNet).
\STATE{Set $w_1(\lambda)=1$ for all $\lambda\in C$.}
\FOR{$i=1,2,\dots,T$}
\STATE{$W_i:=\int_{C}w_i(\lambda)d\lambda$.}
\STATE{Sample $\lambda$ with probability
        $p_{t}(\lambda)=\frac{w_i(\lambda)}{W_i}$, output as $\lambda_i$.}
\STATE{Compute average loss function $l_i(\lambda)=\frac{1}{|y^{(i)}|}l(\hat{\beta}_{\lambda,f},(X^{(i)},y^{(i)}))$.}
\STATE{For each $\lambda\in C, \text{ update weights }w_{i+1}(\lambda)=e^{\zeta (1-l_i(\lambda))}w_{i}(\lambda)$.}
\ENDFOR
\end{algorithmic}
\end{algorithm}

{We remark that the above result holds for arbitrary training features and validation sets in the problem
sequence that satisfy our assumptions, in particular the losses are only assumed to be independent but
not identically distributed. In contrast, the results in the previous section needed them to be drawn
from the same distribution.} Also the parameters need to be selected online, and cannot be changed for already seen instances. This setting captures interesting practical settings where the set of features (including feature dimensions) and the relevant training set (including training set size) may change over the online sequence. It is not clear how usual model selection techniques like cross-validation may be adapted to these challenging settings.

\section{Extension to Regularized Least Squares Classification}\label{sec:classification}
Regression techniques can also be used to train binary classifiers by using an appropriate threshold on top of the regression estimate. Intuitively, regression learns a linear mapping which projects the datapoints onto a one-dimensional space, i.e. a real number, after which a threshold may be applied to classify the points. The use of thresholds to make discrete classifications adds discontinuities to the empirical loss function. Thus, in general, the classification setting is more challenging as it already includes the piecewise structure in the regression loss. We provide statistical and online learning guarantees for Ridge and LASSO. For the ElasticNet we present the extensions needed to the arguments from the previous sections to obtain results in the classification setting.

More formally, we will restrict $y$ to $\{0,1\}^m$. The estimator $\hat{\beta}_{\lambda,f}$ is obtained as before, and the prediction on a test instance $x$ may be obtained by taking the sign of a thresholded regression estimate, $\sgn(\langle x, \hat{\beta}_{\lambda,f}\rangle-\tau)$, where $\sgn:\R\rightarrow\{0,1\}$ maps $x\in\R$ to $\bI\{x\ge 0\}$ and $\tau\in\R$ is the {\it threshold}. The threshold $\tau$  corresponds to the intercept or bias of the learned linear classifier, here we will treat it as a tunable hyperparameter (in addition to $\lambda_1,\lambda_2$)\footnote{We can still have a problem instance specific bias in $\beta$ using the standard trick of adding a unit feature to $X$, thus we generalize the common practice of using a fixed threshold. For example, the RidgeClassifier implementation in Python library scikit-learn 1.1.1 \cite{scikit-learn} assumes $y\in\{-1,+1\}^m$ and sets $\tau=0$.}. 
The average 0-1 loss over the dataset $(X,y)$ is given by $l_c(\hat{\beta}_{\lambda,f}, (X,y), \tau) = \frac{1}{m}\sum_{i=1}^m|y_i-\sgn(\langle X_i, \hat{\beta}_{\lambda,f}\rangle-\tau)|$\footnote{Squared loss and 0-1 loss are identical in this setting.}. Proofs from this section are in Appendix \ref{app:classification}.\looseness-1

\subsection{Distributional setting}\label{sec:classification:distrib}

The problem setting is the same as in Section \ref{sec:en-regression-distributional}, except that the labels $y$ are binary and we use threshold for prediction. We bound the pseudo-dimension for classification loss on these problem instances, which as before (c.f. Theorems~\ref{thm:pdim-en} and~\ref{thm:en-sc}) imply that polynomially many problem samples are sufficient to generalize well over the problem distribution $\cD$. For Ridge and LASSO we upper bound the number of discontinuities of the piecewise constant classification loss by determining the values of $\lambda$ where any prediction changes.

\begin{theorem}\label{thm:ridge-pdim}
 Let $\Hr^c$, $\Hl^c$ and $\Hen^c$ denote the set of loss functions for classification problems with at most $m$ examples and $p$ features, for linear classifiers regularized using Ridge, LASSO and ElasticNet regression respectively.
 \begin{itemize}[leftmargin=0.8cm,nosep]
    \item[(i)] $\pdim(\Hr^c)=O(\log mp)$
    \item[(ii)]  $\pdim(\Hl^c)=O(p\log m)$. Further, in the overparameterized regime ($p\gg m$), we have that $\pdim(\Hl^c) = O(m\log \frac{p}{m})$.
    \item[(iii)] $\pdim(\Hen^c)=O(p^2+p\log m)$.
\end{itemize}
\end{theorem}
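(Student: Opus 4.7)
The plan is to show, for each of the three regularization methods, that the 0-1 classification loss $l_c$ is piecewise constant in the hyperparameters $(\lambda,\tau)$, with discontinuities lying on algebraic surfaces of bounded degree that come from two sources: (a) equicorrelation-set changes along the regression solution path, and (b) threshold-crossing events $\langle x,\hat{\beta}\rangle=\tau$ for individual validation examples. The pseudo-dimension then follows either by direct arrangement counting (for Ridge and LASSO, where the hyperparameter space is two-dimensional) or by invoking the piecewise-decomposable framework of \cite{balcan2021much} (Theorem~\ref{thm:pdim-dual}).

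For Ridge (part (i)) there are no equicorrelation transitions. Lemma~\ref{lem:gram-inv} implies that each coordinate of $\hat{\beta}_{\lambda_2}$ is rational in $\lambda_2$ with denominator of degree $p$, so every threshold-crossing equation clears to a polynomial of degree $p$ in $\lambda_2$ and degree $1$ in $\tau$. With $nm$ such algebraic curves across $n$ instances, standard arrangement bounds yield $O((nmp)^2)$ cells of constant loss; solving $2^n \le O((nmp)^2)$ gives $n=O(\log(mp))$. For LASSO (part (ii)), within each linear piece of the solution path $\hat{\beta}_{\lambda_1}$ is linear in $\lambda_1$, so every threshold-crossing condition is linear in $(\lambda_1,\tau)$. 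Letting $T_p$ upper-bound the number of signed equicorrelation sets visited, the discontinuity set consists of $O(nmT_p)$ line segments together with $O(nT_p)$ vertical equicorrelation-change lines, giving at most $O((nmT_p)^2)$ cells. Substituting the known LASSO-path complexity bounds (under general position) $T_p=O(3^p)$ in general and $T_p=O\big(\binom{p}{m}2^m\big)$ in the overparameterized regime produces $n=O(p\log m)$ and $n=O(m\log(p/m))$, respectively, the latter via Stirling.

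For ElasticNet (part (iii)) I would extend the piecewise decomposition of Theorem~\ref{lem:en-structure} to the three-dimensional hyperparameter space $(\lambda_1,\lambda_2,\tau)$. The $p3^p$ equicorrelation boundaries become $\tau$-independent cylinders; within each equicorrelation region we append $m$ new threshold-crossing surfaces $q_1(\lambda_1,\lambda_2)-\tau\,q_2(\lambda_2)=0$, which by Lemma~\ref{lem:gram-inv} are polynomials of degree $1$ in $\lambda_1$, at most $p$ in $\lambda_2$, and $1$ in $\tau$. The piece functions are now constants (the 0-1 loss is constant on each cell), so $\pdim(\cF^*)=O(1)$, and the boundary VC dimension $d_{\cG^*}=O(p)$ is essentially unchanged from Lemma~\ref{lem:g-vcdim} because adding the single new coordinate $\tau$ of degree at most $1$ contributes only $O(p)$ new monomials under the standard linearization. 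Theorem~\ref{thm:pdim-dual} with $k=O(mp\,3^p)$ then gives
\[
\pdim(\Hen^c)\;=\;O\!\big((\pdim(\cF^*)+d_{\cG^*})\log(\pdim(\cF^*)+d_{\cG^*})+d_{\cG^*}\log k\big)\;=\;O(p\log p + p\log(mp\,3^p))\;=\;O(p^2+p\log m).
\]
The main obstacles I anticipate are (a) verifying the sharper LASSO path-complexity bound in the overparameterized regime, which requires the active-set size bound $|\cE|\le m$ under general position, and (b) ensuring that introducing $\tau$ does not inflate $d_{\cG^*}$ beyond $O(p)$; the latter is delicate but follows from the observation that the new boundary polynomials are affine in $\tau$, so the linearization space grows only by a constant factor.
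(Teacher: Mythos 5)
Your proposal is correct and follows essentially the same route as the paper: part (i) via the rational-function form of the Ridge estimator from Lemma~\ref{lem:gram-inv}, part (ii) via the piecewise-linear LASSO path with the $3^p$ and $(ep/m)^m$ piece-count bounds of \cite{tibshirani2013lasso} and \cite{efron2004least}, and part (iii) via the extension of Theorem~\ref{lem:en-structure} to $(\lambda_1,\lambda_2,\tau)$ with constant piece functions plugged into Theorem~\ref{thm:pdim-dual}. The only (harmless) deviation is that for (i) and (ii) you count cells of the full two-dimensional $(\lambda,\tau)$ arrangement, whereas the paper counts sign changes in $\lambda$ alone; both yield the same asymptotic bounds, and the obstacles you flag (the active-set size bound $|\cE|\le m-1$ and the affine dependence on $\tau$ in the linearization of Lemma~\ref{lem:g-vcdim}) are resolved exactly as you anticipate.
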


\noindent The key difference with the bound for the regression loss in Theorem \ref{thm:pdim-en} is the additional $O(p\log m)$ term which corresponds to discontinuities induced by the thresholding in the regression based classifiers. We can establish a structure similar to Theorem \ref{lem:en-structure} in this case (Lemma \ref{lem:en-structure-c}).

\subsection{Online setting}\label{sec:online}

As in Section \ref{sec:en-regression-online}, we can define an online learning setting for classification. 
Note that the smoothness of the predicted variable is not meaningful here, since $y$ is a binary vector. Instead we will assume that the validation examples have smooth feature values. Intuitively this means that small perturbations to the feature values does not meaningfully change the problem.

\begin{assumption}[Smooth validation features]
The feature values $(\Xv^{(i)})_{jk}$ in the validation examples are drawn from a joint $\kappa$-bounded distribution.
\label{ass:smooth-features}
\end{assumption}

\noindent Under the assumption, we show that we can learn the regularization parameters online, for each of Ridge, LASSO and ElasticNet estimators. The proofs are straightforward extensions of the structural results developed in the previous sections, with minor technical changes to use the above validation set feature smoothness instead of Assumption \ref{ass:smoothness}, and are deferred to the appendix.

\begin{theorem}\label{thm:ridge-dispersion}
Suppose Assumptions \ref{ass:boundedness} and \ref{ass:smooth-features} hold. Let $l_1,\dots, l_T:(0,H]^{d}\times[-H,H]\rightarrow\R$ denote an independent sequence of losses as a function of the regularization parameter $\lambda$, $l_i(\lambda,\tau)=l_c(\hat{\beta}_{\lambda,f},(X^{(i)},y^{(i)}),\tau)$. 
If $f$ is given by $f_1$ (LASSO), $f_2$ (Ridge), or $f_{EN}$ (ElasticNet) then
the sequence of functions is $\frac{1}{2}$-dispersed and there is an online algorithm with $\Tilde{O}(\sqrt{T})$ expected regret.
\end{theorem}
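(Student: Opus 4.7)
The plan is to establish that the sequence of classification loss functions $l_i(\lambda,\tau)$ is $\frac{1}{2}$-dispersed, after which the $\tilde{O}(\sqrt{T})$ expected regret of the continuous exponential-weights routine in Algorithm~\ref{alg:ddreg} follows by the general result of \cite{dick2020semi}. Since each $l_i$ is a 0-1 loss on a validation set, it is piecewise constant in $(\lambda,\tau)$, and its only discontinuities lie along the curves where the prediction on some validation example flips sign, i.e.\ where $\langle (\Xv^{(i)})_j, \hat\beta_{\lambda,f}\rangle = \tau$ for some $j\in[m_i']$. Equicorrelation boundaries (for LASSO/ElasticNet) are not themselves sources of discontinuity of $l_i$, since the convex optimum $\hat\beta_{\lambda,f}$ is globally continuous in $\lambda$; they only change the functional form of $\hat\beta$.

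First I would characterize the discontinuity curves for each of the three regularizers. For Ridge, Lemma~\ref{lem:gram-inv} gives $\hat\beta_{\lambda,f_2}$ as a rational function with denominator independent of the example, so $\langle (\Xv^{(i)})_j,\hat\beta_{\lambda,f_2}\rangle-\tau=0$, after clearing denominators, is a polynomial of degree at most $p$ in $\lambda$ and degree 1 in $\tau$. For LASSO, the LARS characterization used in the proof of Theorem~\ref{thm:ridge-pdim}(ii) makes $\hat\beta_{\lambda,f_1}$ piecewise linear in $\lambda$, and inside each piece the zero set is linear in $(\lambda,\tau)$. For ElasticNet, on each equicorrelation region the closed form from Lemma~\ref{lem:en-equicorrelation} and Theorem~\ref{lem:en-structure} yields, after clearing denominators, a polynomial equation in $(\lambda_1,\lambda_2,\tau)$ of degree $1$ in $\lambda_1$, at most $p$ in $\lambda_2$, and $1$ in $\tau$.

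Next I would verify the coefficient-smoothness condition of the algebraic-discontinuity dispersion lemma in \cite{dick2020semi}. In each case, the coefficients of the discontinuity polynomial are affine functions of the entries of $(\Xv^{(i)})_j$, with multipliers depending on the training data through terms such as $(X_\cE^T X_\cE+\lambda_2 I)^{-1}X_\cE^T y$; these multipliers are uniformly bounded on the compact parameter box under Assumption~\ref{ass:boundedness}. Since the validation feature entries are drawn from a joint $\kappa$-bounded distribution (Assumption~\ref{ass:smooth-features}), the non-leading coefficients of the discontinuity polynomials have joint density bounded by a polynomial in $\kappa$ and $R$. The dispersion argument of \cite{dick2020semi} then bounds, in expectation, the number of such curves crossing any ball of radius $\epsilon$ in $(\lambda,\tau)$ by $\tilde O(\epsilon T)$ for $\epsilon\ge T^{-1/2}$, giving $\frac{1}{2}$-dispersion; the $\tilde O(\sqrt{T})$ regret of Algorithm~\ref{alg:ddreg} then follows from the standard guarantee for $\frac{1}{2}$-dispersed piecewise Lipschitz losses.

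The main obstacle is the ElasticNet case, where the functional form of $\hat\beta_{\lambda,f_{EN}}$ changes across equicorrelation boundaries whose locations depend on the (non-smooth) training data, so one cannot directly treat the discontinuity curves as a single family of polynomials with smooth coefficients over the whole domain. The resolution, hinted at above, is that the only true discontinuities of $l_i$ are the thresholding curves from validation examples, and in each region the coefficients inherit their smoothness from the $\kappa$-bounded validation features; one then sums the per-region dispersion bounds using the deterministic bound $p3^p$ on the number of regions from Theorem~\ref{lem:en-structure}, which only affects the hidden constants in the $\tilde O(\sqrt T)$ regret.
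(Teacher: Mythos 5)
Your proposal is correct and follows essentially the same route as the paper: characterize the threshold-crossing curves $\langle (\Xv^{(i)})_j,\hat\beta_{\lambda,f}\rangle=\tau$ as bounded-degree polynomial equations via Lemma \ref{lem:gram-inv} / Lemma \ref{lem:en-equicorrelation}, show their non-leading coefficients inherit $\kappa$-boundedness from the smooth validation features, union-bound over the at most $3^p$ equicorrelation configurations, and run the two-step recipe of \cite{dick2020semi} to get $\frac{1}{2}$-dispersion and hence $\Tilde{O}(\sqrt{T})$ regret via Algorithm \ref{alg:ddreg}. The one point where you are more explicit than the paper is the observation that equicorrelation boundaries are not genuine discontinuities of the 0-1 loss (since $\hat\beta_{\lambda,f}$ is continuous in $\lambda$), so only the validation-feature-dependent thresholding curves need to be dispersed; this is in fact needed for the ElasticNet case, where Assumption \ref{ass:smooth-features} provides no randomness for the training-data-dependent equicorrelation boundaries, and the paper leaves it implicit in its terse treatment of case (iii).
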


\section{Conclusions and Future Work}\label{sec:conclusion}

We obtain a novel structural result for the ElasticNet loss as a function of the tuning parameters. Our characterization is useful in giving upper bounds for the sample complexity of learning the parameters from multiple regression problem instances \blue{(i.e.\ different datasets, possibly corresponding to different tasks)} from the same problem domain. Efficient algorithms are immediate from our results for Ridge and LASSO. For the ElasticNet we show generalization and online regret guarantees, but efficient implementation of the algorithms is an interesting question for further work. Also we show general learning-theoretic guarantees, i.e. without any significant restrictions on the data-generating distribution, in learning from multiple problems. The problems may be drawn i.i.d. from an arbitrary {\it problem} distribution, or even arrive in an online sequence but with some smoothness properties. It is unclear if such general guarantees may be given for tuning parameters for the more standard setting of tuning over a single training set generated by i.i.d. draws from an {\it example} distribution, or how such  guarantees can be combined with our results.\looseness-1

\section*{Acknowledgments}

This material is based on work supported by the National Science Foundation under grants CCF-1910321, IIS-1705121, IIS-1838017, IIS-1901403, IIS-2046613, IIS-2112471, and SES-1919453; the Defense Advanced Research Projects Agency under cooperative agreement HR00112020003; a Simons Investigator Award; an AWS Machine Learning Research Award; an Amazon Research Award; a Bloomberg Research Grant;  a Microsoft Research Faculty Fellowship; funding from Meta, Morgan Stanley, and Amazon; and a Facebook PhD Fellowship.
Any opinions, findings and conclusions or recommendations expressed in this material are those of the author(s) and do not necessarily reflect the views of any of these funding agencies.

\bibliographystyle{alpha}
\bibliography{main}

\newpage
\section*{Appendix}
\appendix

\section{A classic Generalization Bound}\label{app:learning-theory}

The pseudo-dimension (also known as the {\it Pollard dimension}) is a generalization of the VC-dimension to real-valued functions, and may be defined as follows.

\begin{definition}[Pseudo-dimension \cite{pollard2012convergence}]\label{def:pd} Let $\cH$ be a set of real valued functions from input space $\cX$. We say that
$C = (x_1, \dots, x_n)\in \cX^n$ is pseudo-shattered by $\cH$ if there exists a vector
$r = (r_1, \dots, r_n)\in\R^n$ (called ``witness”) such that for all
$b= (b_1, \dots, b_n)\in \{\pm 1\}^n $ there exists $h_b\in \cH$ such that $\text{sign}(h_b(x_i)-r_i)=b_i$. Pseudo-dimension of $\cH$, denoted by $\pdim(\cH)$, is the cardinality of the largest set
pseudo-shattered by $\cH$.
\end{definition}

\noindent The following theorem connects the sample complexity of uniform learning over a class of real-valued functions to the pseudo-dimension of the class. Let $h^*:\cX\rightarrow\{0,1\}$ denote the target concept. We say $\cH$ is $(\epsilon,\delta)$-uniformly learnable\footnote{$(\epsilon,\delta)$-uniform learnability with $n$ samples implies $(\epsilon,\delta)$-PAC learnability with $n$ samples.} with sample complexity $n$ if, for every distribution $\cD$, given a sample $S$ of size $n$, with probability $1 - \delta$, $\big\lvert \frac{1}{n}\sum_{s\in S}|h(s)-h^*(s)| - \bbE_{s\sim\cD}[|h(s)-h^*(s)|] \big\rvert < \epsilon$ for every $h\in\cH$.

\begin{theorem}[\cite{anthony1999neural}] \label{thm:pdim-generalization}
 Suppose $\cH$ is a class of real-valued functions with range in $[0,H]$ and pseudo-dimension $\pdim(\cH)$. For every $\epsilon>0,\delta\in (0,1)$, the sample complexity of $(\epsilon,\delta)$-uniformly learning the class $\cH$ is $O\left(\left(\frac{H}{\epsilon}\right)^2\left(\pdim(\cH)\ln\frac{H}{\epsilon}+\ln\frac{1}{\delta}\right)\right)$.
\end{theorem}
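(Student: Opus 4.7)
The plan is to establish $\frac{1}{2}$-dispersion of the classification loss sequence $\{l_i\}$ viewed as functions of $(\lambda,\tau)\in(0,H]^d\times[-H,H]$ (with $d=1$ for Ridge/LASSO and $d=2$ for ElasticNet) and then appeal to a dispersion-based continuous Exponential Weights algorithm (as in Algorithm \ref{alg:ddreg} adapted to this $(d{+}1)$-dimensional box) to obtain $\tilde O(\sqrt T)$ expected regret. The key observation is that for each regularizer $\hat\beta_{\lambda,f}$ is continuous in $\lambda$: it is rational in $\lambda$ for Ridge, globally continuous (piecewise linear) for LASSO via the LARS path, and continuous piecewise-rational in $(\lambda_1,\lambda_2)$ for ElasticNet via Theorem \ref{lem:en-structure}. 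Consequently the $0/1$ classification loss is discontinuous only at zero-crossings of the margin functions $\langle X^{val,(i)}_j,\hat\beta_{\lambda,f}\rangle-\tau$ over the $m'\le m$ validation examples $j$ in round $i$; regression-path boundaries (where the \emph{formula} for $\hat\beta_{\lambda,f}$ changes but its value does not jump) do not contribute to classification discontinuities.

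Next I would express each such discontinuity curve as the zero set of a bounded-degree polynomial in $(\lambda,\tau)$ whose non-leading coefficients are linear in the $\kappa$-smooth validation features $X^{val,(i)}_j$ (Assumption \ref{ass:smooth-features}). For Ridge, clearing denominators via Lemma \ref{lem:gram-inv} turns $\langle X^{val,(i)}_j,\hat\beta_\lambda\rangle=\tau$ into $\sum_k (X^{val,(i)}_j)_k R_k(\lambda)=\tau Q(\lambda)$; the coefficient of $\tau\lambda^p$ is $1$ by Lemma \ref{lem:gram-inv}(2) so it is independent of data, while the remaining coefficients are affine in $(X^{val,(i)}_j)_k$ and hence have joint density bounded by a constant multiple of $\kappa$ (using Assumption \ref{ass:boundedness} to bound the fixed multipliers). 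For LASSO, within each piece of the LARS solution path $\hat\beta_\lambda$ is affine in $\lambda$, so each threshold crossing is a line in $(\lambda,\tau)$ with slope and intercept linear in the smooth $(X^{val,(i)}_j)_k$. For ElasticNet, within each of the $p3^p$ equicorrelation regions from Theorem \ref{lem:en-structure}, $\hat\beta_{(\lambda_1,\lambda_2)}$ is bounded-degree rational, so the analogous identity $\sum_k (X^{val,(i)}_j)_k R_k(\lambda_1,\lambda_2)=\tau Q(\lambda_2)$ has the same smoothness structure for its non-leading coefficients.

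Finally, I would invoke the algebraic-discontinuity dispersion lemma of \cite{dick2020semi} already used in the proof of Theorem \ref{thm:en-regression-dispersion}: when discontinuities lie on zero sets of bounded-degree polynomials whose non-leading coefficients have joint density bounded by a constant, the sequence is $\frac{1}{2}$-dispersed, and this survives a union bound over the $O(p3^p\cdot m)$ candidate curves per round. Plugging $\beta=1/2$ into the standard regret guarantee for continuous Exp-Weights on piecewise-Lipschitz (here, piecewise-constant and uniformly bounded by $1$) losses over the bounded box yields $\tilde O(\sqrt T)$ expected regret.

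The main obstacle is justifying that regression-path boundaries can be ignored. For LASSO and ElasticNet these boundaries are determined by the adversarial training data $(X^{(i)},y^{(i)})$ and so fail the smoothness hypothesis; it is only because $\hat\beta_\lambda$ is continuous across them that they fail to induce classification jumps, so they need not be counted when bounding the density of discontinuity surfaces. Verifying this continuity carefully, and isolating the threshold-crossing curves (whose smoothness comes from the validation features) as the sole contributors to the jump set of the piecewise-constant loss, is the most delicate step of the plan.
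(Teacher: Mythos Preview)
Your proposal does not address the stated theorem. Theorem~\ref{thm:pdim-generalization} is a classical sample-complexity bound, quoted verbatim from \cite{anthony1999neural}, relating the pseudo-dimension of a bounded real-valued function class to the number of i.i.d.\ samples needed for $(\epsilon,\delta)$-uniform convergence; the paper merely cites it and does not prove it. What you have written is instead a sketch of Theorem~\ref{thm:ridge-dispersion}, the online dispersion result for regularized classification. Nothing in your argument---threshold-crossing curves, smoothness of validation features, $\frac{1}{2}$-dispersion, continuous Exponential Weights, $\tilde O(\sqrt T)$ regret---pertains to a distributional sample-complexity statement about an abstract function class with bounded pseudo-dimension.

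If your intent was in fact to prove Theorem~\ref{thm:ridge-dispersion}, then your outline is essentially the paper's own argument: the paper likewise identifies the discontinuities of the piecewise-constant classification loss as the threshold crossings $\langle (\Xv)_j,\hat\beta_\lambda\rangle=\tau$, uses the closed form of $\hat\beta_\lambda$ within each equicorrelation piece (rational for Ridge, affine for LASSO, rational for ElasticNet via Lemma~\ref{lem:en-structure-c}) together with Lemma~\ref{lem:gram-inv}, observes that the relevant coefficients are affine in the $\kappa$-smooth validation features, and then applies the recipe of \cite{dick2020semi} with a union bound over $j$ and over the $3^p$ signed equicorrelation sets. Your remark that the regression-path boundaries themselves need not be counted (because $\hat\beta_\lambda$ is continuous across them) is correct and is implicit in the paper's proof as well: both arguments bound discontinuities of the classification loss only at threshold crossings, indexed by $(\cE,s,j)$.
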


\section{Known characterization of LASSO solutions}
We will review some properties of LASSO solutions from prior work that are useful in proving our results. Let $(X,y)$ with $X=[\x_1,\dots,\x_p]\in\R^{m\times p}$ and $y\in\R^{m}$ denote a (training) dataset consisting of $m$ labeled examples with $p$ features. As noted in Section \ref{sec:prelim}, LASSO regularization may be formulated as the following optimization problem.

\[\min_{\beta\in\R^p}\norm{y-X\beta}_2^2+\lambda_1||\beta||_1,\]
\noindent where $\lambda_1\in\R^+$ is the regularization parameter. Dealing with the case $\lambda_1=0$ (i.e. Ordinary Least Squares) is not difficult, but is omitted here to keep the statements of the definitions and results simple. We will use the following well-known facts about the solution of the LASSO optimization problem \cite{fuchs2005recovery,tibshirani2013lasso}. Applying the Karush-Kuhn-Tucker (KKT) optimality conditions to the problem gives,

\begin{lemma}[KKT Optimality Conditions for LASSO] $\beta^*\in \argmin_{\beta\in\R^p}\norm{y-X\beta}_2^2+\lambda_1||\beta||_1$ iff for all $j\in[p]$,
\begin{align*}
    \x_j^T(y-X\beta^*)&=\lambda_1\sign(\beta^*), \text{ if }\beta^*_j\ne 0,\\
    |\x_j^T(y-X\beta^*)|&\le\lambda_1, \text{ otherwise. }
\end{align*}
\label{lem:lasso-kkt}
\end{lemma}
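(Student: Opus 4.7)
The plan is to invoke standard convex analysis: since the LASSO objective $F(\beta)=\|y-X\beta\|_2^2+\lambda_1\|\beta\|_1$ is a sum of two convex functions, it is itself convex, so $\beta^*$ is a global minimizer if and only if $0\in\partial F(\beta^*)$, where $\partial$ denotes the subdifferential. The strategy is to compute the subdifferential of each summand and then read off the stated conditions coordinate by coordinate.

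First I would establish convexity. The function $\beta\mapsto\|y-X\beta\|_2^2$ is a quadratic with Hessian $2X^TX\succeq 0$, hence convex and smooth, and its gradient is $-2X^T(y-X\beta)$. The function $\beta\mapsto\lambda_1\|\beta\|_1$ is a positive multiple of a norm, hence convex. Because the first summand is differentiable, the subdifferential of the sum splits additively:
\[\partial F(\beta^*)=-2X^T(y-X\beta^*)+\lambda_1\,\partial\|\beta^*\|_1.\]
(Any normalization constant that differs from the stated form—e.g.\ a factor of $2$—can be absorbed into $\lambda_1$ at the cost of a rescaling; I would note this convention explicitly.)

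Next I would compute $\partial\|\beta^*\|_1$ componentwise using the fact that the $\ell_1$ norm separates as $\sum_j|\beta_j|$. For a single coordinate, the subdifferential of $|t|$ at $t\neq 0$ is the singleton $\{\sign(t)\}$, while at $t=0$ it equals the interval $[-1,1]$. Hence $\partial\|\beta^*\|_1$ consists of all vectors $z\in\R^p$ with $z_j=\sign(\beta^*_j)$ when $\beta^*_j\neq 0$ and $z_j\in[-1,1]$ when $\beta^*_j=0$. Combining this with the gradient computation, the condition $0\in\partial F(\beta^*)$ becomes: for every $j\in[p]$, $\x_j^T(y-X\beta^*)=\lambda_1\sign(\beta^*_j)$ if $\beta^*_j\neq 0$ and $|\x_j^T(y-X\beta^*)|\le\lambda_1$ otherwise. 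This is exactly the pair of conditions in the lemma.

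Finally I would note that the necessity direction uses that $\partial F(\beta^*)\ni 0$ is a necessary first-order condition for any local (hence global) minimum of a convex function, while sufficiency uses that for convex $F$, $0\in\partial F(\beta^*)$ implies $F(\beta)\ge F(\beta^*)+\langle 0,\beta-\beta^*\rangle=F(\beta^*)$ for all $\beta$, i.e.\ $\beta^*$ is a global minimizer. There is no real obstacle in this proof; the only point demanding a small amount of care is the treatment of the $\ell_1$ subdifferential at coordinates where $\beta^*_j=0$, which is handled by the coordinatewise separability argument above.
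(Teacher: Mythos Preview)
Your proposal is correct and is the standard subdifferential/KKT derivation for the LASSO optimality conditions. The paper does not actually prove this lemma; it simply states it as a well-known fact, citing \cite{fuchs2005recovery,tibshirani2013lasso} and writing ``Applying the Karush-Kuhn-Tucker (KKT) optimality conditions to the problem gives\ldots'', so your argument is exactly the expanded version of what the paper invokes (and your remark on the factor-of-$2$ convention is appropriate).
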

\noindent Here $\x_j^T(y-X\beta^*)$ is simply the correlation of the the $j$-th covariate with the residual $y-X\beta^*$  (when $y,X$ have been standardized). This motivates the definition of {\it equicorrelation sets} of covariates (Definition \ref{def:ec}).

\noindent In terms of the equicorrelation set and the equicorrelation sign vector, the characterization of the LASSO solution in Lemma \ref{lem:lasso-kkt} implies
\[X_{\cE}^T(y-X_{\cE}\beta^*_{\cE})=\lambda_1 s.\]

\noindent This implies a necessary and sufficient condition for the uniqueness of the LASSO solution, namely that $X_{\cE}$ is full rank for all equicorrelation sets $\cE$ \cite{tibshirani2013lasso}. Our results will hold if the dataset $X$ satisfies this condition, but for simplicity we will use the a simpler (and possibly more natural) sufficient condition involving the {\it general position}.

\begin{definition} A matrix $X\in\R^{m\times p}$ is said to have its columns in the general position if the affine span of any $k\le m$ points $(\sigma_i\x_{j_i})_{i\in[k],\{j_i\}_i=J\subseteq[p]}$ for arbitrary signs $\sigma_{[k]}\in\{-1,1\}^{k}$ and subset $J$ of the columns of size $k$, does not contain any element of $\{\x_i\mid i\notin J\}$.
\label{def:gp}
\end{definition}

\noindent Finally, we state the following useful characterization of the LASSO solutions in terms of the equicorrelation sets and sign vectors.

\begin{lemma}[\cite{tibshirani2013lasso}, Lemma 3] If the columns of $X$ are in general position, then for any $y$ and $\lambda_1>0$, the LASSO solution
is unique and is given by
\[\beta^*_{\cE}=(X_\cE^TX_\cE)^{-1}(X_{\cE}^Ty-\lambda_1 s), \beta^*_{[p]\setminus\cE}=0.\]
\label{lem:lasso}
\end{lemma}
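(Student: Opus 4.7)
The plan is to derive the closed form directly from the KKT conditions (Lemma \ref{lem:lasso-kkt}) and the definition of the equicorrelation set, then invoke the general position assumption only where it is actually needed: to guarantee invertibility of $X_\cE^T X_\cE$ and uniqueness of the minimizer. First I would fix an arbitrary LASSO minimizer $\beta^*$ (existence being standard by coercivity of the objective for $\lambda_1>0$) and let $\cE$ and $s$ denote its equicorrelation set and sign vector. By the KKT conditions, for every $j\notin\cE$ we have $|\x_j^T(y-X\beta^*)|<\lambda_1$, which, combined with the stationarity condition $\x_j^T(y-X\beta^*)=\lambda_1\sgn(\beta_j^*)$ whenever $\beta_j^*\ne 0$, forces $\beta_j^*=0$ for $j\notin\cE$. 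Consequently $X\beta^*=X_\cE\beta_\cE^*$, and the KKT equalities on $\cE$ collapse to the linear system
\[
X_\cE^T X_\cE\,\beta_\cE^* \;=\; X_\cE^T y-\lambda_1 s.
\]

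The main obstacle is showing that, under the general position hypothesis, the matrix $X_\cE^T X_\cE$ is invertible (equivalently, $X_\cE$ has full column rank), which then yields the stated closed form for $\beta_\cE^*$. I plan to prove this by contradiction: suppose $X_\cE v=0$ for some $v\ne 0$ with $|\cE|\le m$ (note $|\cE|\le m$ must hold since otherwise the same argument would already produce a dependence among fewer than $|\cE|$ columns). Consider the vector $a:=\tfrac{1}{\lambda_1}X_\cE^T(y-X\beta^*)=s$, so $\langle a,v\rangle=s^T v$. By perturbing $\beta_\cE^*$ along $v$ one sees that there exist coefficients $\alpha_j$ (for $j\in\cE$) that express some signed column $\sigma_{j_0}\x_{j_0}$ as an affine combination of the other signed columns $\{\sigma_j\x_j\}_{j\in\cE\setminus\{j_0\}}$, where $\sigma_j=s_j$; the affine (rather than linear) character comes from the equicorrelation identity $X_\cE^T(y-X\beta^*)=\lambda_1 s$, which provides one normalization. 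This contradicts Definition \ref{def:gp}, so $X_\cE$ has linearly independent columns and the closed form follows.

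Finally, for uniqueness, I would argue in two steps. The squared-error loss $\|y-X\beta\|_2^2$ is strictly convex in the fitted vector $X\beta$, and $\|\beta\|_1$ is convex, so the fitted vector $X\beta^*$ and hence the residual $y-X\beta^*$ are common to all minimizers. Therefore the equicorrelation set $\cE$ and sign vector $s$, being functions of the residual, are the same for every minimizer. Any minimizer must then satisfy $\beta^*_{[p]\setminus\cE}=0$ by the KKT argument above and must solve the linear system on $\cE$; by the full-rank conclusion from the previous paragraph, that solution is unique, establishing uniqueness of $\beta^*$ and the advertised formula.
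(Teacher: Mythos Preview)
The paper does not supply its own proof of this lemma; it is quoted verbatim from \cite{tibshirani2013lasso} and used as a black box. Your argument is essentially Tibshirani's original one, so in that sense the approaches coincide.

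Your derivation of $\beta^*_{[p]\setminus\cE}=0$ and of the normal equations on $\cE$ is clean, and the uniqueness argument via uniqueness of the fit $X\beta^*$ (hence of the residual, hence of $\cE$ and $s$) is exactly right. The one place where the write-up is muddled is the invertibility step. You do not need to ``perturb $\beta_\cE^*$ along $v$'' at all; the crisp argument is purely geometric. Since $\x_j^T(y-X\beta^*)=\lambda_1 s_j$ for every $j\in\cE$, the signed columns $s_j\x_j$ all lie on the affine hyperplane $\{z\in\R^m : z^T(y-X\beta^*)=\lambda_1\}$, which does not pass through the origin because $\lambda_1>0$. On such a hyperplane, any linear dependence among the $s_j\x_j$ is automatically an affine dependence, so if $X_\cE$ failed to have full column rank one of the $s_j\x_j$ would lie in the affine span of the others, contradicting Definition~\ref{def:gp}. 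The same observation handles $|\cE|\le m$: if $|\cE|>m$ then any $m+1$ of the signed columns are linearly (hence affinely) dependent in $\R^m$, again violating general position with $k\le m$. Replacing your perturbation sketch with this hyperplane argument would make the proof complete and self-contained.
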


\noindent We remark that Lemma \ref{lem:lasso} does not give a way to compute $\beta^*$ for a given value of $\lambda_1$, since $\cE$ and $s$ depend on $\beta^*$, but still gives a property of $\beta^*$ that is convenient to use. In particular, since we have at most $3^p$ possible choices for $(\cE,s)$, this implies that the LASSO solution $\beta^*(\lambda_1)$ is a piecewise linear function of $\lambda_1$, with at most $3^p$ pieces (for $\lambda_1>0$). Following popular terminology, we will refer to this function as a {\it solution path} of LASSO for the given dataset $(X,y)$. LARS-LASSO of \cite{zou2005regularization} is an efficient algorithm for computing the {\it solution path} of LASSO.

\begin{corollary}Let $X$ be a matrix with columns in the general position. If the unique LASSO solution for the dataset $(X,y)$ is given by the function $\beta^*:\R^+\rightarrow \R^p$, then $\beta^*$ is piecewise linear with at most $3^p$ pieces given by Lemma \ref{lem:lasso}.

\label{cor:lasso}
\end{corollary}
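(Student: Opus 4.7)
The plan is to combine the closed-form LASSO solution from Lemma \ref{lem:lasso} with a counting argument over signed equicorrelation sets. First I would observe that for any $\lambda_1>0$, Lemma \ref{lem:lasso} gives
\[
\beta^*_{\cE}(\lambda_1)=(X_\cE^T X_\cE)^{-1} X_\cE^T y \;-\; \lambda_1 (X_\cE^T X_\cE)^{-1} s,\qquad \beta^*_{[p]\setminus\cE}(\lambda_1)=0,
\]
where $(\cE,s)=(\cE(\lambda_1),s(\lambda_1))$. With $(\cE,s)$ held fixed, the intercept $(X_\cE^T X_\cE)^{-1} X_\cE^T y$ and the slope $-(X_\cE^T X_\cE)^{-1} s$ depend only on the data and on $(\cE,s)$, and not on $\lambda_1$, so $\beta^*$ is an affine-linear function of $\lambda_1$ on every subset of $\R^+$ over which $(\cE,s)$ is constant.

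Next I would count the possible signed equicorrelation pairs: for each subset $\cE\subseteq[p]$ of size $k$, the sign vector $s$ ranges over $\{\pm 1\}^k$, so the total number of pairs is $\sum_{k=0}^p\binom{p}{k}2^k=3^p$ by the binomial theorem. Partitioning $\R^+$ by the value of $(\cE(\lambda_1),s(\lambda_1))$ therefore yields at most $3^p$ cells, each of which supports one of $3^p$ candidate affine forms of $\beta^*$ prescribed by Lemma \ref{lem:lasso}.

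The hard part will be upgrading this to the statement that there are at most $3^p$ maximal intervals on which $\beta^*$ is linear, i.e.\ showing that no pair $(\cE,s)$ is revisited as $\lambda_1$ varies. I would handle this by combining continuity of the LASSO path in $\lambda_1$ (which follows from uniqueness under general position together with continuity of the $\argmin$ of a strictly convex problem) with the structural fact that at any breakpoint $(\cE,s)$ must change by a single covariate entering or leaving $\cE$, or by a sign flip on a coordinate already in $\cE$, as made precise by the LARS-LASSO algorithm of \cite{efron2004least}. A monotonicity argument along the path, tracing $\lambda_1$ from $\infty$ (where $\beta^*=0$ and $\cE=\emptyset$) down to $0$, then shows that each $(\cE,s)$ is realized on at most one contiguous interval, yielding the claimed bound of $3^p$ maximal linear pieces with the explicit affine forms given by Lemma \ref{lem:lasso}.
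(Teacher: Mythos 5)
Your proposal follows the same route as the paper: Lemma \ref{lem:lasso} gives an affine form for $\beta^*$ on any region where the signed equicorrelation pair $(\cE,s)$ is fixed, and the count $\sum_{k=0}^p\binom{p}{k}2^k=3^p$ bounds the number of such pairs. The paper essentially stops there, treating the piece bound as immediate from the counting (the fact that each pair contributes only one maximal interval is the content of Lemma~6 of \cite{tibshirani2013lasso}, which the paper invokes elsewhere without reproving). You correctly identify the non-revisiting of $(\cE,s)$ as the real content, but your proposed resolution --- a ``monotonicity argument along the path'' traced from $\lambda_1=\infty$ down to $0$ --- is the weak link: neither the equicorrelation set nor its cardinality is monotone along the LASSO path (covariates can enter and leave repeatedly), so it is unclear what quantity you intend to be monotone. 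A cleaner argument is available from the same ingredients you already use: for a fixed $(\cE,s)$, substitute the affine candidate $\beta^*(\lambda_1)$ into the KKT conditions of Lemma \ref{lem:lasso-kkt}; the resulting constraints --- $\x_j^T(y-X\beta^*(\lambda_1))=\lambda_1 s_j$ for $j\in\cE$, $|\x_j^T(y-X\beta^*(\lambda_1))|\le\lambda_1$ for $j\notin\cE$, and the sign constraints on $\beta^*_\cE(\lambda_1)$ --- are each affine equalities or inequalities in $\lambda_1$, so the set of $\lambda_1$ on which $(\cE,s)$ is valid is an intersection of intervals, hence an interval. This yields the non-revisiting property directly and gives the bound of $3^p$ maximal linear pieces without any appeal to the dynamics of LARS-LASSO.
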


\section{Lemmas and proof details for Section \ref{sec:regression}}\label{app:regression}

We start with a helper lemma that characterizes the solution of the ElasticNet in terms of equicorrelation sets and sign vectors.

\begin{lemma}\label{lem:en-equicorrelation}
Let $X$ be a matrix with columns in the general position, and $\lambda=(\lambda_1,\lambda_2)\in(0,\infty)\times (0,\infty)$. Then the ElasticNet solution $\hat{\beta}_{\lambda,f_{EN}}\in \argmin_{\beta\in\R^p}\norm{y-X\beta}_2^2+\langle\lambda ,f_{\text{EN}}(\beta)\rangle$ is unique for any dataset $(X,y)$ and satisfies
\[\hat{\beta}_{\lambda,f_{EN}} = (X_{\cE}^TX_{\cE}+\lambda_2I_{|\cE|})^{-1}X_{\cE}^Ty-\lambda_1(X_{\cE}^TX_{\cE}+\lambda_2I_{|\cE|})^{-1}s\]
for some $\cE\subseteq[p]$ and $s\in\{-1,1\}^{|\cE|}$.
\end{lemma}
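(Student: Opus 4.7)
\begin{proofoutline}
The plan is to invoke the classical Zou--Hastie reduction from ElasticNet to LASSO and then apply Lemma~\ref{lem:lasso} to the augmented problem. Define the augmented data
\[
X^* = \frac{1}{\sqrt{1+\lambda_2}}\begin{pmatrix}X\\ \sqrt{\lambda_2}\,I_p\end{pmatrix}, \qquad y^* = \begin{pmatrix}y\\ 0\end{pmatrix}, \qquad \lambda_1^* = \frac{\lambda_1}{\sqrt{1+\lambda_2}},
\]
and the change of variables $\tilde\beta = \sqrt{1+\lambda_2}\,\beta$. A direct expansion shows
\[
\|y^* - X^*\tilde\beta\|_2^2 + \lambda_1^*\|\tilde\beta\|_1 \;=\; \tfrac{1}{1+\lambda_2}\bigl(\|y-X\beta\|_2^2 + \lambda_2\|\beta\|_2^2 + \lambda_1\|\beta\|_1\bigr) + \text{const},
\]
so minimizing the ElasticNet objective in $\beta$ is equivalent to solving a LASSO with design $X^*$, response $y^*$, and penalty $\lambda_1^*$ in the variable $\tilde\beta$.

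Next I would address uniqueness. Because $\lambda_2 > 0$, the matrix $X^*$ has full column rank (its bottom block is $\sqrt{\lambda_2/(1+\lambda_2)}\,I_p$), so every submatrix $X^*_{\cE}$ is of full column rank. This is the condition needed for the LASSO solution on $(X^*,y^*)$ to be unique (cf.\ the discussion following Lemma~\ref{lem:lasso-kkt} in \cite{tibshirani2013lasso}), hence $\tilde\beta$ and therefore $\hat\beta_{\lambda,f_{\text{EN}}}$ are unique. Let $(\cE,s)$ denote the equicorrelation set and sign vector of this LASSO problem, so that by Lemma~\ref{lem:lasso},
\[
\tilde\beta_{\cE} = \bigl((X^*_{\cE})^T X^*_{\cE}\bigr)^{-1}\bigl((X^*_{\cE})^T y^* - \lambda_1^* s\bigr), \qquad \tilde\beta_{[p]\setminus\cE} = 0.
\]

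The remaining step is a routine algebraic back-substitution. Using the block form of $X^*$ and $y^*$,
\[
(X^*_{\cE})^T X^*_{\cE} = \tfrac{1}{1+\lambda_2}\bigl(X_{\cE}^T X_{\cE} + \lambda_2 I_{|\cE|}\bigr), \qquad (X^*_{\cE})^T y^* = \tfrac{1}{\sqrt{1+\lambda_2}}\,X_{\cE}^T y.
\]
Plugging these in and using $\tilde\beta = \sqrt{1+\lambda_2}\,\beta$ and $\lambda_1^* = \lambda_1/\sqrt{1+\lambda_2}$, the factors of $\sqrt{1+\lambda_2}$ cancel to yield
\[
\hat\beta_{\lambda,f_{\text{EN}},\cE} = \bigl(X_{\cE}^T X_{\cE} + \lambda_2 I_{|\cE|}\bigr)^{-1}\bigl(X_{\cE}^T y - \lambda_1 s\bigr), \qquad \hat\beta_{\lambda,f_{\text{EN}},[p]\setminus\cE} = 0,
\]
which is the claimed formula.

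The only subtle point is justifying uniqueness of the LASSO on the augmented design when we only assumed the original $X$ is in general position; I expect this to be the main (minor) obstacle. The resolution is that we actually do not need $X^*$ to be in general position: full column rank of $X^*$ (guaranteed by $\lambda_2 > 0$) already forces every $X^*_{\cE}$ to be full column rank, which by the KKT conditions of Lemma~\ref{lem:lasso-kkt} is enough to uniquely determine $\tilde\beta_{\cE}$ on the equicorrelation set, and hence $\tilde\beta$ itself.
\end{proofoutline}
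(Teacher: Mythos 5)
Your proposal is correct and follows essentially the same route as the paper: the Zou--Hastie augmentation $(X^*,y^*,\lambda_1^*)$, an application of Lemma~\ref{lem:lasso} to the induced LASSO problem, and the same back-substitution of $(X^*_{\cE})^TX^*_{\cE}=\tfrac{1}{1+\lambda_2}(X_{\cE}^TX_{\cE}+\lambda_2 I_{|\cE|})$ and $(X^*_{\cE})^Ty^*=\tfrac{1}{\sqrt{1+\lambda_2}}X_{\cE}^Ty$. The only (minor) divergence is the uniqueness step, where the paper asserts that general position of $X$ transfers to $X^*$, while you observe that the $\sqrt{\lambda_2}\,I_p$ block already forces every $X^*_{\cE}$ to have full column rank --- a valid and arguably cleaner justification, consistent with the paper's own remark that full-rank equicorrelation submatrices are the condition actually needed.
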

\begin{proof}
We start with the well-known characterization of the ElasticNet solution as the solution of a LASSO problem on a transformed dataset, obtained using simple algebra \cite{zou2005regularization}. Given any dataset $(X,y)$, the ElasticNet coefficients $\hat{\beta}_{\lambda,f_{EN}}$ are given by $\hat{\beta}_{\lambda,f_{EN}}=\frac{1}{\sqrt{1+\lambda_2}}\hat{\beta}^*_\lambda$\footnote{This corresponds to the ``naive" ElasticNet solution in the terminology of \cite{zou2005regularization}. They also define an ElasticNet `estimate' given by $\sqrt{1+\lambda_2}\hat{\beta}^*_\lambda$ with nice properties, to which our analysis is easily adapted.} where $\hat{\beta}^*_\lambda$ is the solution for a LASSO problem on a modified dataset $(X^*,y^*)$

\[\hat{\beta}^*_\lambda=\argmin_{\beta}\norm{y^*-X^*\beta}_2^2+\lambda_1^* f_1(\beta),\]

\noindent with $X^*=\frac{1}{\sqrt{1+\lambda_2}}\begin{pmatrix}X\\ \sqrt{\lambda_2}I_p\end{pmatrix}$, $y^*=\begin{pmatrix}y\\ 0\end{pmatrix}$, and $\lambda_1^*=\frac{\lambda_1}{\sqrt{1+\lambda_2}}$.

If the columns of $X$ are in general position (Definition \ref{def:gp}), then the same is true of $X^*$. 
For $\cE\subseteq[p]$, note that ${X^*_\cE}^TX^*_\cE=\frac{1}{1+\lambda_2}(X_{\cE}^TX_{\cE}+\lambda_2I_{|\cE|})$ and ${X^*_\cE}^Ty^*=\frac{1}{\sqrt{1+\lambda_2}}X_{\cE}^Ty$. By Lemma \ref{lem:lasso}, if $\cE$ denotes the equicorrelation set of covariates and $s\in\{-1,1\}^{|\cE|}$ the equicorrelation sign vector for the LASSO problem, then the ElasticNet solution  is  given by
\[\hat{\beta}_{\lambda,f_{EN}} = c_1-c_2{\lambda_1},\]

\noindent where $$c_1=\frac{1}{\sqrt{1+\lambda_2}} ({X^*_{\cE}}^TX^*_{\cE})^{-1}{X^*_{\cE}}^Ty^*=(X_{\cE}^TX_{\cE}+\lambda_2I_{|\cE|})^{-1}X_{\cE}^Ty,$$ and $$c_2= \frac{1}{1+\lambda_2}({X^*_{\cE}}^TX^*_{\cE})^{-1}s=(X_{\cE}^TX_{\cE}+\lambda_2I_{|\cE|})^{-1}s.$$ 
\end{proof}

\noindent The following lemma helps determine the dependence of ElasticNet solutions on $\lambda_2$.

\noindent{\bf Lemma \ref{lem:gram-inv} (restated).}
Let $A$ be an $r\times s$ matrix. Consider the matrix $B(\lambda)=(A^TA+\lambda I_s)^{-1}$ for $\lambda>0$. \begin{itemize}
    \item[1.] Each entry of $B(\lambda)$ is a rational polynomial $P_{ij}(\lambda)/Q(\lambda)$ for $i,j\in[s]$ with each $P_{ij}$ of degree at most $s-1$, and $Q$ of degree $s$.
    \item[2.] Further, for $i=j$, $P_{ij}$ has degree $s-1$ and leading coefficient 1, and for $i\ne j$ $P_{ij}$ has degree at most $s-2$. Also, $Q(\lambda)$ has leading coefficient $1$. 
\end{itemize}

\begin{proof}
Let $G=A^TA$ be the Gramian matrix. $G$ is symmetric and therefore diagonalizable, and the diagonalization gives the eigendecomposition $G=E\Lambda E^{-1}$. Thus we have
\[(A^TA+\lambda I_s)^{-1} = (E\Lambda E^{-1}+\lambda EE^{-1})^{-1}=E(\Lambda+\lambda I_s)^{-1}E^{-1}\]
But $\Lambda$ is the diagonal matrix $\diag(\Lambda_{11},\dots,\Lambda_{ss})$, and therefore $(\Lambda+\lambda I_s)^{-1}=\diag((\Lambda_{11}+\lambda)^{-1},\dots,(\Lambda_{ss}+\lambda)^{-1})$. This implies the desired characterization, with $Q(\lambda)=\Pi_{i\in[s]}(\Lambda_{ii}+\lambda)$ and
$$P_{ij}(\lambda)=Q(\lambda)\sum_{k=1}^s\frac{E_{ik}(E^{-1})_{kj}}{\Lambda_{kk}+\lambda}=\sum_{k=1}^s\left(E_{ik}(E^{-1})_{kj}\Pi_{i\in[s]\setminus k}(\Lambda_{ii}+\lambda)\right),$$
with coefficient of $\lambda^{s-1}$ in $P_{ij}(\lambda)$ equal to $\sum_{k=1}^sE_{ik}(E^{-1})_{kj}=\mathbb{I}\{i=j\}$.
\end{proof}

\subsection{Tuning the ElasticNet -- Distributional setting}\label{app:en-regression-distributional}
{We first present some terminology from algebraic geometry which will be useful in our proofs.

\begin{definition}[Semialgebraic sets, Algebraic curves.] A semialgebraic subset of $\R^n$ is a finite union of sets of the form $\{x\in\R^n\mid p_i(x)\ge 0 \text{ for each }i\in [m]\}$, where $p_1,\dots,p_m$ are polynomials. An algebraic curve is the zero set of a polynomial in two dimensions.
\label{def:semi-algebriac}
\end{definition}

}
\noindent The result of Theorem \ref{lem:en-structure} motivates the following results for bounding the complexity of dual piece functions and dual boundary functions, which can be used to bound the pseudo-dimension of $\Hen$ (Theorem \ref{thm:pdim-en}) using the following remarkable result from \cite{balcan2021much}.

\begin{theorem}[\cite{balcan2021much}]
 If the dual function class $\cH^*$ is $(\cF,\cG,k)$-piecewise decomposable, then the pseudo-dimension of $\cH$ may be bounded as
 \[\pdim(\cH)=O((\pdim(\cF^*)+d_{\cG^*})\log(\pdim(\cF^*)+d_{\cG^*})+d_{\cG^*}\log k),\]
 where $d_{\cG^*}$ denotes the VC dimension of dual class boundary function $\cG^*$.
 \label{thm:pdim-dual}
\end{theorem}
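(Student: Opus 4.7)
The plan is to follow the standard double-counting argument that combines Sauer--Shelah for the boundary dual class with a shattering bound on the piece dual class. Suppose $N=\pdim(\cH)$ and fix a pseudo-shattered set $S=\{x_1,\dots,x_N\}\subseteq\cX$ with witness $r=(r_1,\dots,r_N)$. To bound $N$, I will count the number of distinct sign patterns of the vector $(\sgn(h(x_i)-r_i))_{i\in[N]}$ achievable as $h$ ranges over $\cH$, and show this quantity must be at least $2^N$, while piecewise decomposability forces it to be only polynomial in $N$.

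For each $i\in[N]$, the dual function $h^*_{x_i}\in\cH^*$ is $(\cF,\cG,k)$-piecewise decomposable, so it comes with $k$ boundary functions $g_{i,1},\dots,g_{i,k}\in\cG$ and a collection of piece functions $\{f_{i,\mathbf{b}}\}_{\mathbf{b}\in\{0,1\}^k}\subseteq\cF$ such that $h(x_i)=f_{i,\mathbf{b}_i(h)}(x_i)$ where $\mathbf{b}_i(h)=(g_{i,1}(x_i),\dots,g_{i,k}(x_i))$ viewed as a function of $h\in\cH$ through the dual $g^*_{x_i,j}(h)=g_{i,j}(x_i)$. Stepping into $\cH$ as the domain, I get $Nk$ dual boundary functions in $\cG^*$. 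Since $\cG^*$ has VC-dimension $d_{\cG^*}$, Sauer--Shelah bounds the number of distinct sign patterns these $Nk$ functions induce on $\cH$ by $\left(\tfrac{eNk}{d_{\cG^*}}\right)^{d_{\cG^*}}$. This partitions $\cH$ into at most this many cells $R$ on each of which every vector $\mathbf{b}_i(\cdot)$ is constant, so inside $R$ each coordinate $h\mapsto h(x_i)$ equals a fixed dual piece function $f^*_{x_i,\mathbf{b}_i}\in\cF^*$.

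Within a single cell $R$, pseudo-shattering at the witness $r$ reduces to counting sign patterns of $(f^*_{x_1,\mathbf{b}_1}(h)-r_1,\dots,f^*_{x_N,\mathbf{b}_N}(h)-r_N)$ as $h$ varies over $R$. Because $\cF^*$ has pseudo-dimension $\pdim(\cF^*)$, applying the standard Sauer--Shelah bound for pseudo-dimension (with shifted witnesses absorbed into the class) bounds the number of such sign patterns, per cell, by $\left(\tfrac{eN}{\pdim(\cF^*)}\right)^{\pdim(\cF^*)}$. Multiplying over cells and invoking pseudo-shattering gives
\[
2^N\;\le\;\left(\frac{eNk}{d_{\cG^*}}\right)^{d_{\cG^*}}\!\left(\frac{eN}{\pdim(\cF^*)}\right)^{\pdim(\cF^*)}.
\]
Taking logarithms yields $N\le d_{\cG^*}\log(eNk/d_{\cG^*})+\pdim(\cF^*)\log(eN/\pdim(\cF^*))$, and solving this transcendental inequality by the usual lemma ($N\le a\log N + b \Rightarrow N=O(a\log a+b)$) produces the claimed bound $O((\pdim(\cF^*)+d_{\cG^*})\log(\pdim(\cF^*)+d_{\cG^*})+d_{\cG^*}\log k)$.

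The main obstacle I anticipate is being precise about two bookkeeping points. First, the piece functions $f_{i,\mathbf{b}}$ and boundary functions $g_{i,j}$ supplied by Definition~\ref{def:ps} may depend on the specific dual $h^*_{x_i}$, so one must check that once we pass to $\cG^*$ and $\cF^*$ (now viewed as classes on the domain $\cH$) the collection $\{g^*_{x_i,j}\}$ is bounded in VC complexity by $d_{\cG^*}$ uniformly, and similarly for $\cF^*$ and $\pdim(\cF^*)$; this is exactly what the duality construction is designed to give, but it warrants careful statement. Second, the Sauer--Shelah step for $\cF^*$ needs the pseudo-dimension version (with arbitrary thresholds $r_i$), which is the standard $(eN/\pdim(\cF^*))^{\pdim(\cF^*)}$ growth bound, and I will cite it rather than rederive it. Everything else is elementary algebra.
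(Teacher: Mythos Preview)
The paper does not give its own proof of this theorem; it is quoted from \cite{balcan2021much} and used as a black box in the pseudo-dimension bounds (e.g.\ Theorem~\ref{thm:pdim-en}). So there is nothing in the paper to compare your argument against.

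That said, your outline is the standard double-counting proof from the cited reference and is substantively correct: partition $\cH$ by the $Nk$ boundary functions via Sauer--Shelah for $\cG^*$, then within each cell bound the number of threshold patterns by Sauer--Shelah for pseudo-dimension of $\cF^*$, multiply, and solve the resulting transcendental inequality. One bookkeeping point deserves tightening: since $\cH^*$ has domain $\cH$, the boundary and piece functions $g_{i,j}\in\cG$ and $f_{i,\mathbf b}\in\cF$ are maps $\cH\to\{0,1\}$ and $\cH\to\R$ respectively, so the correct expressions are $\mathbf b_i(h)=(g_{i,1}(h),\dots,g_{i,k}(h))$ and $h(x_i)=h^*_{x_i}(h)=f_{i,\mathbf b_i(h)}(h)$, not with argument $x_i$. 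The passage to $\cG^*$ is then: counting sign vectors $(g_{i,j}(h))_{i,j}$ as $h$ ranges over $\cH$ is the same as counting labelings of the set $\{g_{i,j}\}\subseteq\cG$ by the dual functions $g^*_h\in\cG^*$, whence Sauer--Shelah with VC dimension $d_{\cG^*}$ applies. With that corrected, your two anticipated ``obstacles'' dissolve, and the rest is as you wrote.
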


\noindent We will first prove a useful lemma that bounds the number of pieces into which a finite set of algebraic curves with bounded degrees may partition $\R^2$.

\begin{lemma}\label{lem:ac-cells}
Let $\cH$  be a collection of $k$ functions $h_i:\R^2\rightarrow\R$ that map $(x,y)\mapsto q_i(x,y)$ where $q_i$ is a bivariate polynomial of degree at most $d$, for $i\in[k]$, then $\R^2\setminus\{(x,y)\mid q_i(x,y)=0 \text{ for some } i\in[k]\}$ may be partitioned into at most $(kd+1)\left(d^2{k\choose 2}+2kd(d-1)+1\right)=O(d^3k^3)$ disjoint sets such that the sign pattern $(\mathbb{I}\{q_i(x,y)>0\})_{i\in[k]}$ is fixed over any set in the partition.
\end{lemma}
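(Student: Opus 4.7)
The plan is to prove this by a sweep-line argument in the $x$-direction, counting the number of cells in the arrangement by bounding (a) the maximum number of sign patterns appearing on any single vertical line and (b) the number of combinatorially distinct ``vertical slices'' that can occur as $x$ varies.

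First I would observe that for any fixed $x_0$, each restriction $q_i(x_0,\cdot)$ is a univariate polynomial in $y$ of degree at most $d$ and hence has at most $d$ real roots. Thus the $k$ curves contribute at most $kd$ points to the vertical line $\{x_0\}\times\R$, partitioning it into at most $kd+1$ open intervals on each of which the sign vector $(\textup{sign}(q_1(x_0,y)),\dots,\textup{sign}(q_k(x_0,y)))$ is constant. This accounts for the factor $kd+1$ in the claimed bound.

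Next I would argue that the combinatorial type of the vertical slice (the sign vector assigned to each interval, together with the cyclic order of the intersection points along the vertical line) can only change at finitely many critical values of $x$, of two kinds. \textbf{Pairwise intersections:} when two curves $q_i=0$ and $q_j=0$ meet, two intersection points on the vertical line can merge or swap; by B\'ezout's theorem for plane curves of degree $d$, this produces at most $d^2$ points per pair, hence at most $d^2\binom{k}{2}$ critical $x$-values in total. \textbf{Vertical tangencies and singularities of a single curve:} the cardinality or ordering of the roots of $q_i(x,\cdot)$ can also shift when the curve $q_i=0$ has a vertical tangent, i.e.\ when $q_i=0$ and $\partial_y q_i=0$ simultaneously. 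B\'ezout applied to these curves (degrees $d$ and $d-1$) gives at most $d(d-1)$ such points per curve; absorbing the singular points where both partials vanish (also bounded via B\'ezout between the two partial-derivative curves) yields at most $2d(d-1)$ critical $x$-coordinates per curve, for a total of $2kd(d-1)$. Summing, the $x$-axis is split into at most $d^2\binom{k}{2}+2kd(d-1)+1$ open intervals on which the combinatorial structure of the vertical slice is fixed.

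Finally I would combine the two counts: over each of these at most $d^2\binom{k}{2}+2kd(d-1)+1$ vertical strips, the preimage of the strip under the $x$-projection consists of at most $kd+1$ connected sign-pattern regions (varying continuously in $x$), which yields the stated product bound $(kd+1)\bigl(d^2\binom{k}{2}+2kd(d-1)+1\bigr)=O(d^3k^3)$. The main obstacle I anticipate is the formal justification that between two consecutive critical values no new sign transitions are introduced along the vertical lines; this requires a continuity argument for the real roots of $q_i(x,\cdot)$ (they depend continuously on $x$ and, away from the listed critical $x$-values, cannot collide, appear, or disappear), together with a care\-ful handling of curves that intersect the line at infinity or have complex conjugate root pairs becoming real, the latter being exactly captured by the vertical-tangency condition $\partial_y q_i=0$.
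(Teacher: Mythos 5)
Your proposal is correct and follows essentially the same argument as the paper's proof: a sweep-line decomposition (the paper sweeps with horizontal lines $y=c$ rather than vertical ones, which is immaterial), with the critical events bounded by B\'ezout's theorem applied to pairwise curve intersections and to tangency/extremum points, and the final count obtained by multiplying the number of strips by the at most $kd+1$ sign-constant intervals per line. Your explicit attention to the continuity of real roots between consecutive critical values is a point the paper leaves implicit, but the structure and the resulting bound are identical.
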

\begin{proof}
Assume WLOG that the curves are in the general position. Simple applications of Bezout's theorem (which states that, in general, two algebraic curves of degrees $m$ and $n$ intersect in at most $mn$ points) imply that there are at most $d^2{k\choose 2}$ points where any pair of curves from the set $\{q_i(x,y)\}_{i\in[k]}$ may intersect, and at most $2kd(d-1)$ points of extrema (i.e. points $p_0=(x_0,y_0)$ on the curve $f$ such that there is an open neighborhood $N$ around $p_0$ in which $x_0\in \argmin_{(x,y)\in N\cap f}x$, or $x_0\in \argmax_{(x,y)\in N\cap f}x$, or $y_0\in \argmin_{(x,y)\in N\cap f}y$, or $y_0\in \argmax_{(x,y)\in N\cap f}y$) for the $k$ algebraic curves. Let $\cP$ denote the set of these $\le d^2{k\choose 2}+2kd(d-1)$ points.

Now a horizontal line $y=c$ will have the exact same set of intersections with all the curves in $\cH$ as a line $y=c'$, and in the same order (including multiplicities), if none of the points in $\cP$ lie between these lines. There are thus at most $|\cP|+1$ distinct sequences of the $k$ curves that may correspond to the intersection sequence of any horizontal line. Moreover, any such horizontal line may intersect any curve in the set at most $d$ times (since a polynomial in degree $d$ has at most $d$ zeros), or at most $kd$ intersections with all the curves. Summing up over the distinct intersection sequences, we have at most $(kd+1)(|\cP|+1)$ distinct sign patterns induced by the set of curves.
\end{proof}

\noindent We will now use Lemma \ref{lem:ac-cells} to bound the pseudo-dimension of the relevant function classes (Theorem \ref{lem:en-structure}).

\begin{lemma}\label{lem:f-pdim}
Let $\cF^* = \{f^*_{q_1,q_2} : \R^2 \rightarrow \R\}$ be a function class consisting of rational polynomial functions $f^*_{q_1,q_2} : (\lambda_1,\lambda_2)\mapsto \frac{q_1(\lambda_1,\lambda_2)}{q_2(\lambda_1,\lambda_2)}$, where $q_1,q_2$ have degrees at most $d$. Then $\pdim(\cF^*)=O(\log d)$.
\end{lemma}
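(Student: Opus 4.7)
The plan is to upper bound $\pdim(\cF^*)$ by enumerating joint sign patterns of a family of bivariate polynomials over $\R^2$ and invoking Lemma~\ref{lem:ac-cells}. Since $\cF^*$ arises as the dual of the piece class $\cF$ from the decomposition of $\cL^*$ in Theorem~\ref{lem:en-structure}, a pseudo-shattered set of size $n$ should be read as $n$ rational ``points'' $(q_1^{(i)}, q_2^{(i)})$ together with witnesses $r_i \in \R$, where the $2^n$ required sign patterns are realized by varying the evaluation parameter $\lambda = (\lambda_1, \lambda_2) \in \R^2$.

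Suppose such an $n$-point pseudo-shattered set exists. Then for every $b \in \{\pm 1\}^n$ there is a $\lambda^{(b)} \in \R^2$ with $\sign\!\left(q_1^{(i)}(\lambda^{(b)})/q_2^{(i)}(\lambda^{(b)}) - r_i\right) = b_i$ for every $i \in [n]$. Using the identity $\sign(a/b - r) = \sign(a - rb)\cdot \sign(b)$ (whenever both sides are defined), each $b_i$ is determined by the joint sign at $\lambda^{(b)}$ of the two bivariate polynomials $G_i(\lambda) := q_1^{(i)}(\lambda) - r_i\, q_2^{(i)}(\lambda)$ and $H_i(\lambda) := q_2^{(i)}(\lambda)$, both of degree at most $d$ in $\lambda$. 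Consequently the map $b \mapsto \bigl(\sign(G_i(\lambda^{(b)})), \sign(H_i(\lambda^{(b)}))\bigr)_{i\in[n]}$ is injective, and by Lemma~\ref{lem:ac-cells} its image has size at most $O(d^3(2n)^3) = O(d^3 n^3)$. Hence $2^n \leq C\,d^3 n^3$ for an absolute constant $C$, which after taking logarithms yields $n \leq 3\log_2 d + 3\log_2 n + O(1)$; a standard back-substitution then gives $n = O(\log d)$.

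The only delicate step is the reduction from the sign of a rational function to a vector of polynomial signs: pathological $\lambda^{(b)}$ lying on some $\{q_2^{(i)} = 0\}$ must be excluded, which is harmless either by a small perturbation argument (the set of valid $\lambda$ realizing any fixed nonzero sign pattern is open, hence dense away from the zero curves) or by the fact that Lemma~\ref{lem:ac-cells} already counts only cells lying off these zero curves. Beyond this, the proof is a clean two-line application of the polynomial cell-counting lemma together with the sign factorization above, and needs no additional structural input beyond Lemma~\ref{lem:ac-cells}.
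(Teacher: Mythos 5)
Your proof is correct and follows essentially the same route as the paper's: assume a pseudo-shattered set of size $n$ with witnesses $r_i$, reduce each condition $f^*_i(\lambda)\ge r_i$ to sign conditions on bivariate polynomials of degree at most $d$, and invoke Lemma~\ref{lem:ac-cells} to conclude $2^n=O(d^3n^3)$ and hence $n=O(\log d)$. Your explicit factorization $\sign(q_1/q_2-r)=\sign(q_1-rq_2)\cdot\sign(q_2)$, which tracks the $2n$ curves $G_i=0$ and $H_i=0$ rather than just the $n$ curves $q_1^{(i)}-r_iq_2^{(i)}=0$, is slightly more careful than the paper's treatment of the denominator's sign, but this only changes the cell count by a constant factor and not the conclusion.
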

\begin{proof}
Suppose that $\pdim(\cF^*)=N$. Then there exist functions 
$f^*_1,\dots,f^*_N\in\cF^*$ and real-valued witnesses $(r_1,\dots,r_N)\in\R^N$ such that for every subset $T\subseteq[N]$, there exists a parameter setting $\lambda_T=(\lambda_1,\lambda_2)\in\R^2$ such that $f^*_i(\lambda_T)\ge r_i$ if and only if $i\in T$. In other words, we have a set of $2^N$ parameters (indexed by $T$) that induce all possible labelings of the binary vector $(\mathbb{I}\{f^*_i(\lambda_T)\ge r_i\})_{i\in[N]}$.

But $f^*_i(\lambda)\ge r_i$ are semi-algebraic sets bounded by $N$ algebraic curves of degree at most $d$. By Lemma \ref{lem:ac-cells}, there are at most $O(d^3N^3)$ different sign-patterns induced by $N$ algebraic curves over all possible values of $\lambda\in\R^2$. In particular, the number of distinct sign patterns over $\lambda\in\{\lambda_T\}_{T\subseteq [N]}$ is also $O(d^3N^3)$. Thus, we conclude $2^N=O(d^3N^3)$, or $N=O(\log d)$.
\end{proof}

\begin{lemma}\label{lem:g-vcdim}
Let $R[x_1,x_2,\dots,x_d]_{D}$ denote the set of all real polynomials in $d$ variables of degree at most $D$ in $x_1$, and degree at most 1 in  $x_2,\dots,x_d$. Further, let $P_{d,D} = \{\{x\in R^d :p(x)\ge 0\}\mid p\in R[x_1,x_2,\dots,x_d]_{D}\}$. The VC-dimension of the set system $(\R^d,P_{d,D})$ is $O(dD)$.
\end{lemma}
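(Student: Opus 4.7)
The plan is to apply a standard linearization (Veronese-type) argument that embeds the polynomial threshold family into a family of affine halfspaces in a higher-dimensional Euclidean space, and then invoke the classical VC-dimension bound for halfspaces.

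First, I would enumerate the monomials that can appear in any polynomial $p \in R[x_1,\ldots,x_d]_D$. Each such monomial has the form $x_1^{i_1} x_2^{i_2} \cdots x_d^{i_d}$ with $i_1 \in \{0,1,\ldots,D\}$ and $i_j \in \{0,1\}$ for $j \ge 2$. Let $M$ be the number of such monomials (so $M = O(dD)$ under the intended reading of the lemma, since each $i_1$ has $D+1$ choices and the remaining exponents contribute a factor depending only on $d$). Define the feature map $\phi : \R^d \to \R^M$ sending $x$ to the vector whose coordinates are precisely these monomials evaluated at $x$.

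Second, I would observe that each $p \in R[x_1,\ldots,x_d]_D$ is, by definition, a linear combination of the monomials indexing $\phi$. Hence there exists a coefficient vector $w_p \in \R^M$ such that $p(x) = \langle w_p, \phi(x)\rangle$ for every $x \in \R^d$, and consequently
\[
\{x \in \R^d : p(x) \ge 0\} \;=\; \phi^{-1}\bigl(\{v \in \R^M : \langle w_p, v\rangle \ge 0\}\bigr).
\]
Thus every set in $P_{d,D}$ is the $\phi$-preimage of a (homogeneous) halfspace in $\R^M$.

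Third, I would use the fact that shattering by $P_{d,D}$ transfers directly to shattering by halfspaces of the image set: if points $x_1,\ldots,x_N \in \R^d$ are shattered by $P_{d,D}$, then the points $\phi(x_1),\ldots,\phi(x_N)\in\R^M$ are shattered by the class of halfspaces in $\R^M$. Invoking the classical bound that the VC-dimension of halfspaces in $\R^M$ is $M+1$, we conclude $\mathrm{VCdim}(P_{d,D}) \le M + 1 = O(dD)$.

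There is no real obstacle; the only care needed is to list the monomial basis carefully so the count $M$ is correct. In the application to Theorem~\ref{thm:pdim-en} only the case $d=2$ with $D=p$ is used, where the monomials $\{\lambda_1^{\epsilon_1}\lambda_2^{i_2}: \epsilon_1\in\{0,1\}, i_2\in\{0,\ldots,p\}\}$ give $M = 2(p+1)$, yielding the claimed $O(p)$ VC-dimension bound immediately.
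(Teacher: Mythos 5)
Your proposal is the same linearization-to-halfspaces argument the paper uses: map each point to its vector of monomial values, observe that every set in $P_{d,D}$ is the preimage of a halfspace, and invoke the classical VC bound for halfspaces in $\R^M$. The one discrepancy is in your monomial enumeration: allowing $i_j\in\{0,1\}$ independently for each $j\ge2$ admits monomials such as $x_2x_3$ and yields $M=(D+1)2^{d-1}$, which is \emph{not} $O(dD)$; the paper's intended class (clear from its example and its count $|M|=(D+1)d-1$) permits at most one of $x_2,\dots,x_d$ per monomial, i.e.\ total degree at most $1$ in those variables combined. Since the two readings coincide for $d=2$, which is the only case invoked in Theorem~\ref{thm:pdim-en} (and in the $d=3$ case of Theorem~\ref{thm:ridge-dispersion}, where the relevant polynomials are degree $1$ in $\lambda_1$ and $\tau$ separately but never contain a $\lambda_1\tau$ cross term), your argument delivers the needed $O(p)$ bound, and otherwise matches the paper's proof.
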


\begin{proof}
   We will employ a standard {\it linearization} argument \cite{cover1965geometrical} that reduces the problem to bounding the VC dimension of halfspaces in higher dimensions. Let $M$ be the set of all possible non-constant monomials of degree at most $D$ in $x_1$, and at most one in $x_2,\dots,x_d$. For example, when $d=3$ and $D=2$, we have $M=\{x_1,x_2,x_3,x_1x_2,x_1x_3,x_1^2,x_1^2x_2,x_1^2x_3\}$. Note that $|M|=(D+1)d-1$. Indeed for each $x_1^i$ for $i=0,\dots,D$ we obtain a monomial by multiplying with each of $\{1,x_2,\dots,x_d\}$. Excluding the constant monomial gives the result. The linearization we use is a map $\phi:\R^d\rightarrow\R^{|M|}$ which indexes the coordinates by monomials in $M$. For example when $d=3$ and $D=2$, $\phi(x_1,x_2,x_3)=(x_1,x_2,x_3,x_1x_2,x_1x_3,x_1^2,x_1^2x_2,x_1^2x_3)$.
   
   Now, if $S \in \R^d$ is shattered by $P_{d,D}$, then $\phi(S)$ is shattered by half-spaces in $\R^{|M|}$. Indeed, suppose $p=p_0+\langle \mathbf{p}, \phi(x_1,\dots,x_d)\rangle \in P_{d,D}$ (for $\mathbf{p}\in\R^{|M|}$) is a polynomial that is positive over some $T\subseteq S$ and negative over $S\setminus T$. Define halfspace $h_p\in\R^{|M|}$ as $\{y\in\R^{|M|}\mid p_0+\langle \mathbf{p},y\rangle\ge 0\}$. Clearly $h_p\cap \phi(S)=\phi(T)$, and in general $S$ is shattered by halfspaces in $\R^{|M|}$. Using the well-known result for the VC-dimension of halfspaces we have that the VC-dimension of $P_{d,D}$ over $\R^d$ is $(D+1)d$. 
\end{proof}


\noindent{\bf Theorem \ref{thm:pdim-en} (restated).} {\it
$\pdim(\Hen)=O(p^2)$. Further, $\pdim(\Hen^{\textsf{AIC}})=O(p^2)$ and $\pdim(\Hen^{\textsf{BIC}})=O(p^2)$.}
\begin{proof}
By Theorem \ref{lem:en-structure}, the dual class $\Hen^*$ of $\Hen$ is $(\cF,\cG,p3^p)$-piecewise decomposable, with $\cF = \{f_{q_1,q_2} : \cL \rightarrow \R\}$ consisting of rational polynomial functions $f_{q_1,q_2} : l_\lambda\mapsto \frac{q_1(\lambda_1,\lambda_2)}{q_2(\lambda_2)}$, where $q_1,q_2$ have degrees at most $2p$, and $\cG=\{g_r:\cL\rightarrow \{0,1\}\}$ consisting of semi-algebraic sets bounded by algebraic curves $g_r : u_\lambda \mapsto \mathbb{I}\{r(\lambda_1,\lambda_2)<0\}$, where $r$ is a polynomial of degree 1 in $\lambda_1$ and at most $p$ in $\lambda_2$.

Now by Lemma \ref{lem:f-pdim}, we have $\pdim(\cF^*)=O(\log p)$, and by Lemma \ref{lem:g-vcdim} the VC dimension of the dual boundary class is $d_{\cG^*}=O(p)$. A straightforward application of Theorem \ref{thm:pdim-dual} yields \[\pdim(\cH)=O(p\log p+p\log(p3^p))=O(p^2).\]

\noindent The dual classes ${(\Hen^{\textsf{AIC}})}^*$ and ${(\Hen^{\textsf{BIC}})}^*$ also follow the same piecewise decomposable structure given by Theorem \ref{lem:en-structure}. This is because in each piece the equicorrelation set $\cE$, and therefore $||\beta||_0=|\cE|$ (by Lemma \ref{lem:lasso}) is fixed. Thus we can keep the same boundary functions, and the function value in each piece only changes by a constant (in $\lambda$) and is therefore also a rational function with the same degrees. The above argument then implies an identical upper bound on the pseudo-dimensions.
\end{proof}

\blue{
\noindent The following lemma  shows that under mild boundedness assumptions on the data and the search space of hyperparameters, the validation loss function class $\Hen$ is uniformly bounded by some constant $H > 0$.
\begin{lemma} \label{lemma:bounded-EN}
    Under Assumption \ref{ass:boundedness}, there exists a  constant $H > 0$ so that for all $h(\lambda, \cdot) \in \Hen = \{h(\lambda, \cdot): \Pi_{m, p} \rightarrow \R_{\geq 0} \mid \lambda \in [\lambda_{\min}, \lambda_{\max}]\}$, we have $\|h(\lambda, \cdot)\|_\infty = \sup_{\substack{P \in \Pi_{m, p}}}{h(\lambda, P)} \leq H$.
\end{lemma}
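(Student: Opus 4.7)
The plan is to bound the validation loss by controlling the norm of the learned ElasticNet estimator $\hat\beta = \hat\beta_{\lambda,f_{\text{EN}}}^{(X,y)}$ using the optimality of $\hat\beta$ against the zero vector. Concretely, since $\hat\beta$ minimizes the ElasticNet objective, comparing with $\beta=0$ gives
\[
\lambda_2\|\hat\beta\|_2^2 \;\le\; \|y-X\hat\beta\|_2^2 + \lambda_1\|\hat\beta\|_1 + \lambda_2\|\hat\beta\|_2^2 \;\le\; \|y\|_2^2 \;\le\; m R^2,
\]
where the last inequality uses Assumption \ref{ass:boundedness} on the entries of $y$ and the fact that $y$ has at most $m$ coordinates. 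Since $\lambda_2 \ge \lambda_{\min} > 0$, this yields $\|\hat\beta\|_2 \le R\sqrt{m/\lambda_{\min}}$.

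Next, I would bound the prediction on any validation example. For any row $x'$ of $\Xv$, Cauchy--Schwarz and the boundedness of feature entries give
\[
|\langle x', \hat\beta\rangle| \;\le\; \|x'\|_2 \|\hat\beta\|_2 \;\le\; R\sqrt{p}\cdot R\sqrt{m/\lambda_{\min}} \;=\; R^2\sqrt{mp/\lambda_{\min}}.
\]
Combining with $|y'_i| \le R$, the per-example squared residual satisfies
\[
(y'_i - \langle x'_i,\hat\beta\rangle)^2 \;\le\; \bigl(R + R^2\sqrt{mp/\lambda_{\min}}\bigr)^2.
\]

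Averaging the squared residuals over the $m' \le m$ validation examples gives the same upper bound on $\ell_{\text{EN}}(\lambda,P)$, so setting
\[
H \;:=\; \bigl(R + R^2\sqrt{mp/\lambda_{\min}}\bigr)^2
\]
establishes the uniform bound $\|h(\lambda,\cdot)\|_\infty \le H$ for every $\lambda \in [\lambda_{\min},\lambda_{\max}]^2$ and every $P \in \Pi_{m,p}$. Note that only $\lambda_{\min}$ is needed from the hyperparameter search space; $\lambda_{\max}$ plays no role since larger regularization only shrinks $\|\hat\beta\|_2$ further. There is no real obstacle here: the argument is a short computation combining optimality of $\hat\beta$ with the boundedness assumption, and the same reasoning extends verbatim (with identical $H$) to $\Hen^{\textsf{AIC}}$ and $\Hen^{\textsf{BIC}}$ up to the additive $\|\hat\beta\|_0 \le p$ term weighted by constants depending on $m$.
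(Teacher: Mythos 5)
Your proof is correct and follows essentially the same route as the paper: compare the ElasticNet objective at $\hat\beta$ against $\beta=0$ to bound the regularized norm of $\hat\beta$ by $\|y\|_2^2/\lambda_{\min}$, then combine with the boundedness of the validation data to bound the averaged squared residual. The only cosmetic differences are that the paper normalizes the objective by $\frac{1}{2m}$ and bounds both $\|\hat\beta\|_1$ and $\|\hat\beta\|_2^2$, whereas you use only the $\ell_2$ bound together with Cauchy--Schwarz; both yield a valid constant $H$ depending on $R$, $m$, $p$, and $\lambda_{\min}$.
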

\begin{proof}
For any problem instance $P = (X, y, \Xv, \yv) \in \Pi_{m, p}$, and for any $\lambda = (\lambda_1, \lambda_2) \in [\lambda_{\min}, \lambda_{\max}]^2$, consider the optimization problem for training set $(X, y)$
\begin{equation} \label{eq:training-optimization-problem}
    \argmin_{\beta} F(\beta),
\end{equation}
where $F(\beta) = \frac{1}{2m}||{y - X\beta}||_2^2 + \lambda_1||\beta||_1 + \lambda_2||\beta||_2^2$. If we set $\beta = \Vec{0}$, we have
\begin{equation*}
    F(\Vec{0}) = \frac{1}{2m}||{y}||_2^2 \leq \frac{1}{2}R^2,
\end{equation*}
for some absolute constant $R$, due to Assumption \ref{ass:boundedness}. Let $\hat{\beta}_{(X, y)}(\lambda)$ be the optimal solution of \ref{eq:training-optimization-problem}, we have
\begin{equation*}
    \frac{1}{2}R^2 \geq F(\hat{\beta}_{(X, y)}(\lambda)) \geq \lambda_1||{\hat{\beta}_{(X, y)}(\lambda)}||_1 + \lambda_2||{\hat{\beta}_{(X, y)}(\lambda)}||_2^2.
\end{equation*}
Therefore, for any problem instance $P$, the solution of the training optimization problem $\hat{\beta}_{(X, y)}(\lambda)$ has bounded norm, i.e. $||{\hat{\beta}_{(X, y)}(\lambda)}||_1, ||{\hat{\beta}_{(X, y)}(\lambda)}||_2^2 \leq \frac{R^2}{2\lambda_{\min}}$, which implies
\begin{equation*}
    h(\lambda, P) = \frac{1}{2m}||{\yv - \hat{\beta}_{(X, y)}(\lambda)\Xv}||_2^2 \leq \frac{1}{2m}||{\yv}||_2^2 + \frac{1}{2m}||{\hat{\beta}_{(X, y)}(\lambda)}{\Xv}||_2^2 \leq H,
\end{equation*}
for some  constant $H$ (depends only on $R$ and $\lambda_{\min}$).     
\end{proof} 
}

\subsection{Tuning the ElasticNet -- Online learning}\label{app:en-regression-online}

At a high level, the plan is to show dispersion (Definition \ref{def:dispersion}) using the general recipe developed in \cite{dick2020semi}. The recipe may be summarized at a high level as follows.

\begin{itemize}
    \item[S1.]  Bound the probability density of the random set of
discontinuities of the loss functions. Intuitively this corresponds to computing the average number of loss functions that may be discontinuous along a path connecting any two points within distance $\epsilon$ in the domain.
\item[S2.] Use a VC-dimension based uniform convergence argument to transform this into a bound on the
dispersion of the loss functions.
\end{itemize}

\noindent Formally, we have the following theorems from \cite{dick2020semi}, which show how to use this technique when the discontinuities are roots of a random polynomial with bounded coefficients. The theorems implement steps S1 and S2 of the above recipe respectively.

\begin{theorem}[\cite{dick2020semi}]\label{thm:poly-roots}
Consider a random degree $d$ polynomial
$\phi$ with leading coefficient 1 and subsequent coefficients
which are real of absolute value at most $R$, whose joint
density is at most $\kappa$. There is an absolute constant $K_0$
depending only on $d$ and $R$ such that every interval $I$ of
length $\le\epsilon$ satisfies Pr($\phi$ has a root in $I$) $\le \kappa\epsilon/K_0$.
\end{theorem}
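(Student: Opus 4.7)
The plan is to prove the anti-concentration bound by a one-dimensional fibration argument: reduce to controlling, for each fixed choice of the higher-degree coefficients, the set of constant terms $c_0$ that force a root to lie in $I$, and then integrate out. Write $\phi(x)=x^d+c_{d-1}x^{d-1}+\cdots+c_1 x + c_0$ with $(c_0,\dots,c_{d-1})\in[-R,R]^d$ distributed with joint density $\rho\le\kappa$. First I would note that by Cauchy's root bound, every real root of $\phi$ lies in $[-(1+R),1+R]$, so we can harmlessly intersect $I$ with this fixed compact interval; if the intersection is empty the probability is $0$.

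Next, fix $(c_1,\dots,c_{d-1})$ and define the single-variable map $g(t)=t^d+c_{d-1}t^{d-1}+\cdots+c_1 t$. The key elementary observation is that $\phi(t)=0$ if and only if $c_0=-g(t)$, so $\phi$ has a root in $I$ exactly when $-c_0\in g(I)$. Since $g$ is continuous, $g(I)$ is a (connected) interval, and its length is at most $\epsilon\cdot\sup_{t\in I}|g'(t)|$ by the mean value theorem. For $|t|\le 1+R$ one has
\[
|g'(t)| \;\le\; d(1+R)^{d-1} + (d-1)R(1+R)^{d-2}+\cdots + R \;=:\; K,
\]
a constant depending only on $d$ and $R$. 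Hence the set $B(c_1,\dots,c_{d-1}):=\{c_0:-c_0\in g(I)\}$ has one-dimensional Lebesgue measure at most $K\epsilon$.

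Finally I would assemble the global bound via Fubini and the pointwise density bound $\rho\le\kappa$:
\[
\Pr[\phi\text{ has a root in }I] \;=\; \int_{[-R,R]^{d-1}}\!\!\int_{B(c_1,\dots,c_{d-1})} \rho(c_0,\dots,c_{d-1})\,dc_0\,dc_1\cdots dc_{d-1} \;\le\; \kappa\cdot K\epsilon\cdot(2R)^{d-1},
\]
so taking $K_0 = 1/(K(2R)^{d-1})$, which depends only on $d$ and $R$, gives the claimed inequality $\Pr[\phi\text{ has root in }I]\le \kappa\epsilon/K_0$. The only subtlety worth checking carefully is the derivative bound on $g$ and the appeal to Cauchy's bound to keep $t$ in a compact set; once $t$ is confined, everything is a clean linear slice along the $c_0$-fiber followed by integration against a bounded density. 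No deeper tool (e.g., discriminant analysis or Remez-type inequalities) seems necessary, since the leading-coefficient-$1$ normalization makes $\phi(t)$ affine in $c_0$ with slope $1$, which is exactly what drives the slice argument.
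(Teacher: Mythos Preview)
The paper does not supply a proof of this statement; it is quoted verbatim as a black-box result from \cite{dick2020semi} and used only as an ingredient in the dispersion analysis. There is therefore no ``paper's own proof'' to compare against.

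Your argument is correct and self-contained. The fibration along the $c_0$-coordinate is exactly the right move: because the leading coefficient is normalized to $1$, the map $c_0\mapsto\phi(t)$ is affine with slope $1$ for each fixed $t$, so the event $\{\phi\text{ has a root in }I\}$ is, conditional on $(c_1,\dots,c_{d-1})$, the event that $c_0$ lands in the interval $-g(I)$. Cauchy's bound confines any root to $[-(1+R),1+R]$, which lets you bound $\sup_{t\in I}|g'(t)|$ by a constant depending only on $d$ and $R$; the mean value theorem then controls $|g(I)|$, and integrating the conditional bound against the $\kappa$-bounded joint density over $[-R,R]^{d-1}$ finishes. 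The only places to be careful are exactly the two you flag (restricting $I$ to the Cauchy interval before invoking the derivative bound, and checking that the slice set has measure at most $K\epsilon$ regardless of whether it falls inside $[-R,R]$), and both are handled correctly.
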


\begin{theorem}[\cite{dick2020semi}]\label{thm:VC-bound}
 Let $l_1, \dots, l_T : \R \rightarrow \R$ be independent piecewise $L$-Lipschitz functions, each having at most $K$ discontinuities. Let $D(T, \epsilon, \rho) = |\{1 \le t \le T \mid l_t\text{ is not }L\text{-Lipschitz on }[\rho - \epsilon, \rho + \epsilon]\}|$
be the number of functions that are not
$L$-Lipschitz on the ball $[\rho - \epsilon, \rho + \epsilon]$. Then we
have $E[\max_{\rho\in\R} D(T, \epsilon, \rho)] \le \max_{\rho\in\R} E[D(T, \epsilon, \rho)] +
O(\sqrt{T \log(TK)})$.
\end{theorem}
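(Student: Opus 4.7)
The plan is to prove the bound by a classical symmetrization plus Rademacher complexity argument, exploiting the fact that although $\rho$ ranges over all of $\R$, for any realization of $l_1,\dots,l_T$ the function $\rho\mapsto D(T,\epsilon,\rho)$ takes only finitely many distinct values.

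First I would rewrite $D(T,\epsilon,\rho)=\sum_{t=1}^T Y_t(\rho)$, where $Y_t(\rho)=\mathbb{I}\{l_t \text{ has a discontinuity in }[\rho-\epsilon,\rho+\epsilon]\}$. Since $l_t$ has at most $K$ discontinuities $d_{t,1},\dots,d_{t,k_t}$, the set $\{\rho : Y_t(\rho)=1\}$ is the union of at most $K$ closed intervals $[d_{t,j}-\epsilon,d_{t,j}+\epsilon]$, so $Y_t(\cdot)$ is piecewise constant with at most $2K$ breakpoints. Consequently, for any fixed realization $(l_1,\dots,l_T)$, the vector-valued function $\rho\mapsto(Y_1(\rho),\dots,Y_T(\rho))$ is piecewise constant on $\R$ with at most $2TK+1$ pieces, and so the class $\{(Y_1(\rho),\dots,Y_T(\rho)):\rho\in\R\}\subseteq\{0,1\}^T$ has cardinality at most $2TK+1$ for every realization.

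Next I would use the deterministic inequality $\max_\rho D(T,\epsilon,\rho)\le \max_\rho\mathbb{E}[D(T,\epsilon,\rho)]+\sup_\rho \bigl(D(T,\epsilon,\rho)-\mathbb{E}[D(T,\epsilon,\rho)]\bigr)$, so that after taking expectations it suffices to show $\mathbb{E}\bigl[\sup_\rho(D(T,\epsilon,\rho)-\mathbb{E}[D(T,\epsilon,\rho)])\bigr]=O\bigl(\sqrt{T\log(TK)}\bigr)$. Introducing an independent copy $(l_1',\dots,l_T')$ and Rademacher variables $\sigma_1,\dots,\sigma_T$, and applying the standard symmetrization inequality (valid for independent but not necessarily i.i.d.\ summands, since each $Y_t(\rho)\in[0,1]$ and the expectation of the centered process equals itself in distribution after a symmetric swap), I get
\[
\mathbb{E}\Bigl[\sup_\rho\sum_{t=1}^T(Y_t(\rho)-\mathbb{E} Y_t(\rho))\Bigr]\le 2\,\mathbb{E}_l\mathbb{E}_\sigma\Bigl[\sup_\rho\sum_{t=1}^T \sigma_t Y_t(\rho)\Bigr].
\]
Conditioning on $(l_1,\dots,l_T)$, the inner supremum is over a finite set of vectors in $\{0,1\}^T$ of cardinality at most $2TK+1$, each having Euclidean norm at most $\sqrt{T}$. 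Massart's finite-class lemma then gives $\mathbb{E}_\sigma\bigl[\sup_\rho\sum_t\sigma_t Y_t(\rho)\,\big|\,l\bigr]\le \sqrt{2T\log(2TK+1)}$, which is $O(\sqrt{T\log(TK)})$.

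Combining the two steps yields the claimed inequality. The main conceptual obstacle -- and the reason one cannot simply apply a pointwise Hoeffding bound plus a union bound -- is that $\rho$ is continuous and the breakpoints of $\rho\mapsto Y_t(\rho)$ are themselves random, so no a priori fixed grid of relevant test points exists. The symmetrization step is exactly what lets us condition on the realization and replace the uncountable supremum by a supremum over a data-dependent finite set of size $\le 2TK+1$, at which point Massart's lemma applies. The only routine care needed is to verify the symmetrization inequality in the independent-but-not-identically-distributed setting and to track constants through the piecewise-constant bookkeeping in the first step.
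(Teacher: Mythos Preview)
The paper does not prove this theorem; it is quoted from \cite{dick2020semi} as a black-box tool and described only as ``a VC-dimension based uniform convergence argument'' (step~S2 of the recipe in Appendix~\ref{app:en-regression-online}). So there is no proof in the paper to compare against.

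That said, your proposal is correct and is precisely the argument that label points to. The key observations---that each $Y_t(\cdot)$ is a union of at most $K$ intervals so the realized sign-pattern set has cardinality at most $2TK+1$, symmetrization for independent (not necessarily i.i.d.) summands, and Massart's finite-class lemma with radius $\sqrt{T}$---are all sound and combine to give the $O(\sqrt{T\log(TK)})$ fluctuation term. The one step worth making explicit is your identification $Y_t(\rho)=\mathbb{I}\{l_t\text{ has a discontinuity in }[\rho-\epsilon,\rho+\epsilon]\}$ with the event ``$l_t$ is not $L$-Lipschitz on $[\rho-\epsilon,\rho+\epsilon]$''; this is justified because each $l_t$ is assumed \emph{piecewise} $L$-Lipschitz, so the only way the Lipschitz condition can fail on an interval is for that interval to contain a piece boundary.
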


\noindent The following lemma provides useful extension to Lemma \ref{lem:gram-inv} for our online learning results.

\begin{lemma}\label{lem:gram-inv-coeffs}
Let $A$ be an $r\times s$ matrix with $R$-bounded max-norm, i.e. $||A||_{\infty,\infty}=\max_{i,j}|A_{ij}|\le R$. Then each entry of the matrix $(A^TA+\lambda I_s)^{-1}$ is a rational polynomial $P_{ij}(\lambda)/Q(\lambda)$ for $i,j\in[s]$ with each $P_{ij}$ of degree at most $s-1$, $Q$ of degree $s$, and all the coefficients have absolute value at most $r^s(Rs)^{2s}$.
\end{lemma}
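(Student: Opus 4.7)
\textbf{Proof proposal for Lemma \ref{lem:gram-inv-coeffs}.}

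The plan is to keep the rational-function structure from Lemma \ref{lem:gram-inv} and simply add quantitative bounds on the coefficients of $P_{ij}$ and $Q$ using the cofactor expansion for the inverse together with the Leibniz formula for the determinant. Concretely, write $M(\lambda)=A^TA+\lambda I_s$ so that by Cramer's rule $(M(\lambda)^{-1})_{ij} = (-1)^{i+j}\det(M_{ji}(\lambda))/\det(M(\lambda))$. Thus we can take $Q(\lambda)=\det(M(\lambda))$ and $P_{ij}(\lambda)=(-1)^{i+j}\det(M_{ji}(\lambda))$. The first step is to observe that every entry of $A^TA$ is a sum of $r$ products of entries of $A$, each bounded by $R^2$, so $|(A^TA)_{k\ell}|\le rR^2$ for all $k,\ell\in[s]$.

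Next, I would expand $Q(\lambda)$ using the Leibniz formula
\[
Q(\lambda)=\sum_{\sigma\in S_s}\sgn(\sigma)\prod_{i=1}^s M(\lambda)_{i,\sigma(i)},
\]
and note that $M(\lambda)_{i,\sigma(i)}$ equals $(A^TA)_{i,\sigma(i)}$ when $\sigma(i)\ne i$ and equals $(A^TA)_{ii}+\lambda$ when $\sigma(i)=i$. Letting $F(\sigma)$ denote the set of fixed points of $\sigma$ and expanding the product over $i\in F(\sigma)$ using $\prod_{i\in F}((A^TA)_{ii}+\lambda)=\sum_{S\subseteq F}\lambda^{|S|}\prod_{i\in F\setminus S}(A^TA)_{ii}$, one sees that the coefficient of $\lambda^k$ in each Leibniz term is a sum of at most $\binom{|F(\sigma)|}{k}$ products of at most $s-k$ entries of $A^TA$, each of absolute value at most $rR^2$. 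Applying the triangle inequality over all $\sigma\in S_s$ bounds the $\lambda^k$-coefficient of $Q(\lambda)$ by
\[
s!\cdot\binom{s}{k}(rR^2)^{s-k}\le s!\cdot 2^s\cdot(rR^2)^s\le r^s(Rs)^{2s},
\]
where the last inequality uses $s!\cdot 2^s\le s^s\cdot 2^s\le s^{2s}$ for $s\ge 2$ (and the bound is easy to verify directly for $s=1$).

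The argument for each $P_{ij}(\lambda)=(-1)^{i+j}\det(M_{ji}(\lambda))$ is essentially identical: $M_{ji}(\lambda)$ is an $(s-1)\times(s-1)$ matrix of the form $B+\lambda D$ where $B$ has entries bounded by $rR^2$ and $D$ is diagonal with $0/1$ entries (0 at position $k$ iff $k=i$ or $k=j$), so the same Leibniz-plus-expansion argument applies with $s$ replaced by $s-1$, yielding the same overall coefficient bound $r^s(Rs)^{2s}$ (the bound is monotone in $s$). The degree claims are exactly those of Lemma \ref{lem:gram-inv}, so nothing further needs to be proved on that front.

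I do not expect any real obstacle here; the proof is essentially careful bookkeeping. The only mild subtlety is making sure to separate the $\lambda$-contributions (which come only from diagonal entries fixed by $\sigma$) from the constant contributions (entries of $A^TA$) before applying the triangle inequality, so that the combinatorial count $\binom{|F|}{k}$ lines up cleanly with a uniform entry bound of $rR^2$. Once that decomposition is in place, the final bound $r^s(Rs)^{2s}$ follows from loose but convenient inequalities.
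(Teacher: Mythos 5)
Your proposal is correct and follows essentially the same route as the paper: bound each Gram entry by $rR^2$, expand $\det(A^TA+\lambda I_s)$ via the Leibniz formula, count the terms contributing to each power of $\lambda$, and apply the triangle inequality, with the numerators handled by the same argument on cofactors. The only difference is presentational—you spell out the fixed-point decomposition and the $P_{ij}$ case explicitly, where the paper compresses these into "a similar argument."
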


\begin{proof}
Let $G=A^TA$ be the Gram matrix.
$$|G_{ij}|=|\sum_k{A_{ki}A_{kj}}|\le \sum_k{|A_{ki}A_{kj}|} \le rR^2,$$ by the triangle inequality and using $\max_{i,j}|A_{ij}|\le R$.  The determinant $\textsc{det}(A^TA+\lambda I_s)$ is a sum of $s!\le s^s$ signed terms, each a product of $s$ elements of the form $G_{ij}$ or $G_{ii}+\lambda$. Thus, in each of the $s!$ terms, the coefficient of $\lambda^k$ is a sum of at most ${s\choose s-k}\le s^k\le s^s$ expressions of the form $\Pi_{(i,j)\in S}G_{ij}$ with $|S|\le s-k$. Now $|\Pi_{(i,j)\in S}G_{ij}|\le (rR^2)^{|S|}\le (rR^2)^s$, and by triangle inequality the coefficient of $\lambda^k$ is upper bounded by $(rR^2)^s\cdot s^k\cdot s^s$ for any $k$. This establishes the bound on the coefficients of $Q(\lambda)$. A similar argument implies the upper bound for each $P_{ij}(\lambda)$.
\end{proof}

\noindent We will also need the following result, which is a simple extension of Lemma 24 from \cite{balcan2021data}.

\begin{lemma}\label{lem:bounded}
Suppose $X$ and $Y$ are real-valued random variables taking values in $[m, m + M]$ for some $m,M\in \R^+$ and suppose that their joint distribution is $\kappa$-bounded. Let $c$ be an absolute constant. Then,
\begin{itemize}
    \item[(i)] $Z=X+Y$ is drawn from a $K_1\kappa$-bounded distribution, where $K_1\le M$.
    \item[(ii)] $Z=XY$ is drawn from a $K_2\kappa$-bounded distribution, where $K_2\le M/m$.
    \item[(iii)]$Z=X-Y$ is drawn from a $K_1\kappa$-bounded distribution, where $K_1\le M$.
    \item[(iv)]$Z=X+c$ has a $\kappa$-bounded distribution, and $Z=cX$ has a $\frac{\kappa}{|c|}$-bounded distribution.
\end{itemize}
\end{lemma}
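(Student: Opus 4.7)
\medskip

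\noindent\textbf{Proof proposal for Lemma \ref{lem:bounded}.} The plan is to derive the marginal density of $Z$ directly from the joint density $f_{X,Y}$, which by hypothesis satisfies $f_{X,Y}(x,y)\le\kappa$ on the square $[m,m+M]^2$ and vanishes outside it. In every case the support of $X$ (and of $Y$) has length exactly $M$, and the coordinate $x$ is bounded below by $m>0$; these two facts provide the multiplicative constants $K_1$ and $K_2$.

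For part (i), I would write the density of $Z=X+Y$ by the standard convolution-by-marginalization identity
\[
f_Z(z)=\int_{\R} f_{X,Y}(x,z-x)\,dx.
\]
The integrand is bounded pointwise by $\kappa$, and is nonzero only for $x\in[m,m+M]$ (since otherwise $f_{X,Y}(x,\cdot)=0$), so the effective integration interval has length at most $M$, yielding $f_Z(z)\le M\kappa$. Part (iii) is identical after the substitution $y\mapsto -y$; the joint density of $(X,-Y)$ is still $\kappa$-bounded and supported on a product of intervals of length $M$, so the same convolution argument gives $f_{X-Y}(z)\le M\kappa$.

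For part (ii), I would apply the change of variables $(x,y)\mapsto(x,z)$ with $z=xy$, whose Jacobian is $|x|$. This gives
\[
f_Z(z)=\int_{\R} f_{X,Y}\!\left(x,\tfrac{z}{x}\right)\frac{1}{|x|}\,dx.
\]
Since the integrand is supported on $x\in[m,m+M]$, we can bound $1/|x|\le 1/m$ (using $m>0$ crucially here) and $f_{X,Y}\le\kappa$, so $f_Z(z)\le (M/m)\kappa$. The main---though minor---obstacle is making sure the Jacobian bound uses the strict positivity of $m$; if the support of $X$ touched $0$ the multiplicative case would blow up, so the hypothesis $m\in\R^+$ is essential. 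Part (iv) is immediate: a shift $Z=X+c$ preserves the marginal density pointwise ($f_Z(z)=f_X(z-c)\le\kappa$), and a scaling $Z=cX$ gives $f_Z(z)=\frac{1}{|c|}f_X(z/c)\le\kappa/|c|$ by the univariate change of variables.

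Throughout I would quietly use the fact that $\kappa$-boundedness of the joint density implies $\kappa$-boundedness of each marginal (since integrating one coordinate over an interval of length $M$ gives at most $M\kappa$, but we only need the pointwise-$\kappa$ bound for the integrand in each computation above). I do not expect any step to require more than standard real-analytic manipulation; the entire argument is essentially bookkeeping of supports and Jacobians.
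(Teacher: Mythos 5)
Your proof is correct and follows essentially the same route as the paper's: the paper proves part (iii) by differentiating the CDF via Leibniz's rule to arrive at exactly your convolution integral $\int_m^{m+M} f_{X,Y}(z+y,y)\,dy \le M\kappa$, and simply cites prior work for parts (i), (ii), and (iv), which you instead verify directly with the same support-and-Jacobian bookkeeping. One small caution: your parenthetical claim that $\kappa$-boundedness of the joint density implies $\kappa$-boundedness of each marginal is false as stated (the marginal is only $M\kappa$-bounded in general), but this does not affect parts (i)--(iii), which use only the pointwise bound on the joint density, and part (iv) should be read, as the paper intends, as a statement about a single variable $X$ whose own density is $\kappa$-bounded.
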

\begin{proof} Let $f_{X,Y}(x,y)$ denote the joint density of $X,Y$. (i) and (ii) are immediate from Lemma 24 from \cite{balcan2021data}, (iii) is a simple extension. Indeed, the cumulative density function for $Z$ is given by
    \begin{align*}
        F_Z(z)&=\bP(Z\le z)=\bP(X-Y\le z)=\bP(X\le z+Y)\\
        &=\int_{m}^{m+M}\int_{m}^{z+y}f_{X,Y}(x,y)dxdy.
    \end{align*}
    The density function for $Z$ can be obtained using Leibniz's rule as
    \begin{align*}
        f_Z(z)=\frac{d}{dz}F_Z(z)&=\frac{d}{dz}\int_{m}^{m+M}\int_{m}^{z+y}f_{X,Y}(x,y)dxdy\\&=\int_{m}^{m+M}\left(\frac{d}{dz}\int_{m}^{y}f_{X,Y}(x,y)dx+\frac{d}{dz}\int_{0}^{z}f_{X,Y}(t+y,y)dt\right)dy\\
        &=\int_{m}^{m+M}f_{X,Y}(z+y,y)dy\\
        &\le\int_{m}^{m+M}\kappa dy\\
        &= M\kappa.
    \end{align*}
    \noindent Finally, (iv) follows from simple change of variable manipulations (e.g. Theorem 22 of \cite{dick2020semi}).
\end{proof}

\noindent{\bf Theorem \ref{thm:en-regression-dispersion} (restated).} 
{\it Assume that the predicted variable and all feature values are bounded by an absolute constant $R$, i.e. $\max\{||X^{(i)}||_{\infty,\infty},||y^{(i)}||_\infty,||\Xv^{(i)}||_{\infty,\infty},||\yv^{(i)}||_\infty\}\le R$. Suppose the predicted variables $y^{(i)}$ in the training set are drawn from a joint $\kappa$-bounded distribution. Let $l_1,\dots, l_T:(0,\lambda_{\max} )^2\rightarrow\R_{\ge 0}$ denote an independent sequence of losses (e.g. fresh randomness is used to generate the validation set features in each round) as a function of the ElasticNet regularization parameter $\lambda=(\lambda_1,\lambda_2)$, $l_i(\lambda)=l_r(\hat{\beta}^{(X^{(i)},y^{(i)})}_{\lambda,f_{EN}},(\Xv^{(i)},\yv^{(i)}))$. The sequence of functions is $\frac{1}{2}$-dispersed, and there is an online algorithm with $\Tilde{O}(\sqrt{T})$ expected regret. The result also holds for loss functions adjusted by information criteria AIC and BIC.}

\begin{proof}
We start with the piecewise-decomposable characterization of the dual class function in Theorem \ref{lem:en-structure}. On any fixed problem instance $P\in\Pi_{m,n}$, as the parameter $\lambda$ is varied in the loss function $\ell_{\text{EN}}(\cdot,P)$ of ElasticNet trained with regularization parameter $\lambda=(\lambda_1,\lambda_2)$, we have the following piecewise structure. There are $k=p3^p$ boundary functions $g_1,\dots,g_k$ for which the transition boundaries are algebraic curves $r_i(\lambda_1,\lambda_2)$, where $r_i$ is a polynomial with degree 1 in $\lambda_1$ and at most $p$ in $\lambda_2$.
Also the piece function $f_\mathbf{b}$ for each sign pattern $\mathbf{b} \in \{0, 1\}^k$ is a rational polynomial function $\frac{q_1^b(\lambda_1,\lambda_2)}{q_2^b(\lambda_2)}$, where $q_1^\b,q_2^\b$ have degrees at most $2p$, and corresponds to a fixed signed equicorrelation set $(\cE,s)$. To show online learnability, we will examine this piecewise structure more closely -- in particular analyse how the structure varies when the predicted variable is drawn from a smooth distribution.

In order to show dispersion for the loss functions $\{l_i(\lambda)\}$, we will use the recipe of \cite{dick2020semi} and bound the worst rate of discontinuities  between any pair of points $\lambda=(\lambda_1,\lambda_2)$ and $\lambda'=(\lambda'_1,\lambda'_2)$ with $||\lambda-\lambda'||_2\le \epsilon$ along the axis-aligned path $\lambda\rightarrow (\lambda'_1,\lambda_2) \rightarrow\lambda'$. First observe that the only possible points at which $l_i(\lambda)$ may be discontinuous are
\begin{itemize}
    \item[(a)] $(\lambda_1,\lambda_2)$ such that $r_i(\lambda_1,\lambda_2)=0$ corresponding to some boundary function $g_i$.
    \item[(b)] $(\lambda_1,\lambda_2)$ such that $q_2^\b(\lambda_2)=0$ corresponding to some piece function $f_\b$.
\end{itemize}
Fortunately the discontinuity of type (b) does not occur for $\lambda_2>0$. From the ElasticNet characterization in Lemma \ref{lem:en-equicorrelation}, and using Lemma \ref{lem:gram-inv}, we know that $q_2(\lambda_2)=\Pi_{j\in[|\cE|]}(\Lambda_{j}+\lambda_2)$, where $(\Lambda_{j})_{j\in[|\cE|]}$ are non-negative eigenvalues of the positive semi-definite matrix ${X^{(i)}_\cE}^TX^{(i)}_\cE$. It follows that $q_2^\b$ does not have positive zeros (for any sign vector $\b$).

Therefore it suffices to locate boundaries of type (a). To this end, we have two subtypes corresponding to a variable entering or leaving the equicorrelation set. 

{\it Addition of $j\notin\cE$.} As observed in the proof of Theorem \ref{lem:en-structure}, a variate $j\notin\cE$ can enter the equicorrelation set $\cE$ only for $(\lambda_1,\lambda_2)$ satisfying 

\[\lambda_1=\frac{\x_j^TX_\cE({X_{\cE}}^TX_\cE+\lambda_2 I_{|\cE|})^{-1}{X_{\cE}}^Ty-\x_j^Ty}{\x_j^TX_\cE({X_{\cE}}^TX_\cE+\lambda_2 I_{|\cE|})^{-1} s\pm1}.\] 
For fixed $\lambda_2$, the distribution of $\lambda_1$ at which the discontinuity occurs for insertion of $j$ is $K_1\kappa$-bounded (by Lemma \ref{lem:bounded}) for some constant $K_1$ that only depends on $R,m,p$ and $\lambda_{\max}$. This implies an upper bound of $K_1\kappa\epsilon$ on the expected number of discontinuities corresponding to $j$ along the segment $\lambda\rightarrow(\lambda'_1,\lambda_2)$ for any $j,\cE$. 

For constant $\lambda_1$, we can use Lemma \ref{lem:gram-inv} and a standard change of variable argument (e.g. Theorem 22 of \cite{dick2020semi}) to conclude that the discontinuties lie at the roots of a random polynomial in $\lambda_2$ of degree $|\cE|$, leading coefficient $1$, and bounded random coefficients with $K_2\kappa$-bounded density for some constant $K_2$ (that only depends on $R,m,p$ and $\lambda_{\max}$). By Theorem \ref{thm:poly-roots}, the expected number of discontinuities along the segment $(\lambda'_1,\lambda_2)\rightarrow\lambda'$ is upper bounded by $K_2K_p\kappa\epsilon$ ($K_p$ only depends on $p$). This implies that the expected number of Lipschitz violations between $\lambda$ and $\lambda'$ along the axis aligned path is $\Tilde{O}(\kappa\epsilon)$ and completes the first step of the recipe in this case ($\Tilde{O}$ notation suppresses terms in $R,m,p$ and $\lambda_{\max}$ as constants). 

{\it Removal of $j'\in\cE$.} The second case, when a variate  $j'\in\cE$ leaves the equicorrelation set $\cE$ for $(\lambda_1,\lambda_2)$ satisfying $$\lambda_1(({X_{\cE}}^TX_\cE+\lambda_2 I_{|\cE|})^{-1}s)_{j'} = (({X_{\cE}}^TX_\cE+\lambda_2 I_{|\cE|})^{-1}{X_{\cE}}^Ty)_{j'},$$ also yields the same bound using the above arguments. Putting together, and noting that we have at most $p3^p$ distinct curves each with $\Tilde{O}(\kappa\epsilon)$ expected number of intersections with the axis aligned path $\lambda\rightarrow\lambda'$, the total expected number of discontinuities is also $\Tilde{O}(\kappa\epsilon)$. This completes the first step (S1) of the above recipe.  

We use Theorem 9 of \cite{dick2020semi} to complete the second step of the recipe, which employs a VC-dimension argument for $K'$ algebraic curves of bounded degrees (here degree is at most $p+1$) to conclude that the expected worst number of  discontinuties along any axis-aligned path between any pair of points $\le\epsilon$ apart is at most $\Tilde{O}(\epsilon T)+O(\sqrt{T\log K'T})$. $K'\le p3^p$ as shown above. This implies that the sequence of loss functions is $\frac{1}{2}$-dispersed, and further there is an algorithm (Algorithm 4 of \cite{balcan2018dispersion}) that achieves $\Tilde{O}(\sqrt{T})$ expected regret.

Finally note that loss functions with AIC and BIC have the same dual class piecewise structure, and therefore the above analysis applies. The only difference is that the value of the piece functions $f_\b$ are changed by a constant (in $\lambda$), $K_{m,p}\le p\log m$. The piece boundaries are the same, and are therefore $\frac{1}{2}$-dispersed as above. The range of the loss functions is now $[0,K_{m,p}+1]$, so the same algorithm (Algorithm 4 of \cite{balcan2018dispersion}) again achieves $\Tilde{O}(\sqrt{T})$ expected regret.
\end{proof}

\section{Lemmas and proof details for Section \ref{sec:classification}}\label{app:classification}
We will first extend the structure for the ElasticNet regression loss functions shown in Theorem \ref{lem:en-structure} to the classification setting. The main new challenge is that there are additional discontinuties due to thresholding the loss function needed for binary classification, which intuitively makes the loss more jumpy and discontinuous as a function of the regularization parameters.

\begin{lemma}\label{lem:en-structure-c}
Let $\cL$ be a set of functions $\{l_{\lambda,\tau}:\Pi_{m,p}\rightarrow \R_{\ge 0}\mid \lambda\in\R^+\times\R_{\ge 0},\tau\in\R\}$ that map a regression problem instance $P\in \Pi_{m,p}$ to the validation classification loss $\ell_{\text{EN}}^c(\lambda,P,\tau)$ of ElasticNet trained with regularization parameter $\lambda=(\lambda_1,\lambda_2)$ and threshold parameter $\tau$. The dual class $\cL^*$ is $(\cF,\cG,(m+p)3^p)$-piecewise decomposable, with $\cF = \{f_{c} : \cL \rightarrow \R\}$ consisting of constant functions $f_{c} : l_{\lambda,\tau}\mapsto c$, where $c\in\R_{\ge 0}$, and $\cG=\{g_r:\cL\rightarrow \{0,1\}\}$ consisting of semi-algebraic sets bounded by algebraic varieties $g_r : l_{\lambda,\tau} \mapsto \mathbb{I}\{r(\lambda_1,\lambda_2,\tau)<0\}$, where $r$ is a polynomial of degree 1 in $\lambda_1$ and $\tau$, and at most $p$ in $\lambda_2$.
\end{lemma}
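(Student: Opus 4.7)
The plan is to lift the piecewise structure established for the regression loss in Theorem \ref{lem:en-structure} to the classification loss, accounting for the extra source of discontinuity introduced by the threshold $\tau$. Fix a problem instance $P=(X,y,\Xv,\yv)$. By Theorem \ref{lem:en-structure} and Lemma \ref{lem:en-equicorrelation}, the $(\lambda_1,\lambda_2)$-domain is already partitioned by $p3^p$ algebraic curves (one per covariate per signed equicorrelation set $(\cE,s)$), and inside each cell the ElasticNet estimator takes the closed form
\[
\hat\beta_{\lambda,f_{EN}} \;=\; (X_\cE^TX_\cE+\lambda_2 I_{|\cE|})^{-1}X_\cE^Ty \;-\; \lambda_1(X_\cE^TX_\cE+\lambda_2 I_{|\cE|})^{-1}s .
\]
This is the starting point; the classification loss on $(\Xv,\yv)$ is the sum of $m'$ indicators $\mathbb{I}\{\sgn(\langle (\Xv)_j,\hat\beta_{\lambda,f_{EN}}\rangle-\tau)\neq (\yv)_j\}$, so its only discontinuities beyond those of the regression loss are the zero sets of $\langle (\Xv)_j,\hat\beta_{\lambda,f_{EN}}\rangle-\tau$.

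Next I would identify those thresholding boundaries explicitly. For each fixed $(\cE,s)$ and each validation point $j\in[m']$, substituting the closed form above yields the equation
\[
(\Xv)_j^T(X_\cE^TX_\cE+\lambda_2 I_{|\cE|})^{-1}X_\cE^Ty \;-\; \lambda_1 (\Xv)_j^T(X_\cE^TX_\cE+\lambda_2 I_{|\cE|})^{-1}s \;-\; \tau \;=\; 0.
\]
Clearing the denominator by multiplying through by $Q(\lambda_2)=\det(X_\cE^TX_\cE+\lambda_2 I_{|\cE|})$ and invoking Lemma \ref{lem:gram-inv} (each entry of the inverse is a ratio with numerator of degree $\le |\cE|-1$ and denominator of degree $|\cE|\le p$), I obtain a polynomial $r_{j,\cE,s}(\lambda_1,\lambda_2,\tau)$ of degree $1$ in $\lambda_1$, degree $1$ in $\tau$, and degree at most $p$ in $\lambda_2$, precisely matching the form required for $\cG$.

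For the counting, I would combine the two sources of boundary curves. The equicorrelation boundaries contribute $p3^p$ curves, as in Theorem \ref{lem:en-structure} (these are independent of $\tau$, hence trivially polynomials of the required form). The thresholding boundaries contribute one curve per $(j,\cE,s)$ triple; summing over $j\in[m']$ and over $\sum_{i=0}^p 2^i\binom{p}{i}=3^p$ signed equicorrelation sets gives $m'\cdot 3^p\le m\cdot 3^p$ additional curves. The total is $(m+p)3^p$, which is the advertised $k$. By construction, any cell cut out by a fixed sign pattern of all these curves has a fixed $(\cE,s)$ and a fixed predicted label vector, so the 0/1 validation loss is a constant on that cell; this certifies that $\cF$ can be taken to be the class of constant functions. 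Passing to the dual in the sense of Definition \ref{def:ps} finishes the argument.

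The main obstacle I anticipate is purely bookkeeping rather than conceptual: I need to verify that multiplying by $Q(\lambda_2)$ does not introduce spurious zeros on the domain $\lambda_2>0$ (this follows because $Q(\lambda_2)=\prod_k(\Lambda_k+\lambda_2)>0$ for nonnegative eigenvalues $\Lambda_k$ of the Gram matrix, as already used in the proof of Theorem \ref{thm:en-regression-dispersion}), and that the degree in $\tau$ remains $1$ after this clearing step (it does, since $\tau$ enters only linearly and $Q(\lambda_2)$ does not depend on $\tau$). With these sanity checks, the three properties in Definition \ref{def:ps}, the stated families $\cF$ and $\cG$, and the count $(m+p)3^p$ all follow.
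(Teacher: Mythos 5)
Your proposal matches the paper's proof essentially step for step: you reuse the $p3^p$ equicorrelation boundaries from Theorem \ref{lem:en-structure}, add the $m3^p$ thresholding boundaries $\langle (\Xv)_j,\hat\beta_{\lambda,f_{EN}}\rangle=\tau$ obtained from the closed form in Lemma \ref{lem:en-equicorrelation} together with Lemma \ref{lem:gram-inv}, and observe that each resulting cell has a fixed signed equicorrelation set and fixed predicted labels, so the loss is constant there. The only difference is that you make explicit the denominator-clearing and the absence of spurious zeros for $\lambda_2>0$, which the paper leaves implicit; the argument is correct.
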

\begin{proof}
By Lemma \ref{lem:en-equicorrelation}, the EN coefficients $\hat{\beta}_{EN}$ are fixed given the signed equicorrelation set $\cE,s$. As in Theorem \ref{lem:en-structure}, we have $\le p3^p$ boundaries $\cG_1$ corresponding to a change in the equicorrelation set, but the value of the loss also changes when a prediction vector coefficient $\mu_j=(\Xv)_j\hat{\beta}_{EN}$ cross the threshold $\tau$. This is given by $(c_1-c_2{\lambda_1})_j=\tau$ where $$c_1=(\Xv)_\cE(X_{\cE}^TX_{\cE}+\lambda_2I_{|\cE|})^{-1}X_{\cE}^Ty,$$ and $$c_2= (\Xv)_\cE(X_{\cE}^TX_{\cE}+\lambda_2I_{|\cE|})^{-1}s.$$ Therefore, $\hat{\mu}_j=0$ corresponds to the 0-set of $(c_1-c_2{\lambda_1})_j-\tau$. By an application of Lemma \ref{lem:gram-inv}, this is an algebraic variety with degree at most $|\cE|$ in $\lambda_2$ and degree $1$ in $\lambda_1$ and $\tau$. There are at most $m3^p$ such boundary functions $\cG_2$, corresponding to all possibilities of $j,\cE,s$. For a point $(\lambda_1,\lambda_2,\tau)$ with a fixed sign pattern of boundary functions in $\cG_1\cup \cG_2$, the EN coefficients are fixed and also all the predictions on the validation set are fixed. The classification loss $$l_c(\hat{\beta}_{\lambda,f}, (X,y), \tau) = \frac{1}{m}\sum_{i=1}^m(y_i-\sgn(\langle X_i, \hat{\beta}_{\lambda,f}\rangle-\tau))^2$$ is therefore constant in each piece.
Applying Theorem \ref{lem:en-structure} shows the claimed stucture for the ElasticNet classification (dual class) loss function.
\end{proof}

\noindent The above piecewise decomposable structure is helpful in bounding the pseudodimension for the ElasticNet based classifier. For the special cases of Ridge and LASSO we obtain the pseudodimension bounds from first principles.

\noindent{\bf Theorem \ref{thm:ridge-pdim} (restated).}
 {\it Let $\Hr^c$, $\Hl^c$ and $\Hen^c$ denote the set of loss functions for classification problems with at most $m$ examples and $p$ features, with Ridge, LASSO and ElasticNet regularization respectively.
 \begin{itemize}[leftmargin=0.8cm,nosep]
    \item[(i)] $\pdim(\Hr^c)=O(\log mp)$
    \item[(ii)]  $\pdim(\Hl^c)=O(p\log m)$. Further, in the overparameterized regime ($p\gg m$), we have that $\pdim(\Hl^c) = O(m\log \frac{p}{m})$.
    \item[(iii)] $\pdim(\Hen^c)=O(p^2+p\log m)$.
\end{itemize}
}

\begin{proof}
    \item[$\;\;(i)$] For Ridge regression, the estimator $\hat{\beta}_{\lambda,f_2}$ on the dataset $(X^{(i)},y^{(i)})$ is given by the following closed form
\begin{align*}
    \hat{\beta}_{\lambda,f_2} = ({X^{(i)}}^TX^{(i)}+\lambda I_{p_i})^{-1}{X^{(i)}}^Ty^{(i)},
\end{align*}
where $I_{p_i}$ is the ${p_i}\times {p_i}$ identity matrix. By Lemma \ref{lem:gram-inv} each coefficient $(\hat{\beta}_{\lambda,f_2})_k$ of the estimator $\hat{\beta}_{\lambda,f_2}$ is a rational polynomial in $\lambda$ of the form $P_k(\lambda)/Q(\lambda)$, where $P_k,Q$ are polynomials of degrees at most $p_i-1$ and $p_i$ respectively. Thus the prediction on any example $(\Xv^{(i)})_j$ in the validation set $(\Xv^{(i)},\yv^{(i)})$  of any problem instance $P^{(i)}$ can change at most $p_i\le p$ times as $\lambda$ is varied. Recall there are $m_i'\le m$ examples in any validation set. This implies we have at most $mnp$ distinct values of the loss function over the $n$ problem instances. The pseudo-dimension $n$ therefore satisfies $2^n\le mnp$, or $n=O(\log mp)$.
    \item[$\;\;(ii)$] Prior work \cite{efron2004least} shows that the optimal vector $\hat{\beta}\in\R^p$ evolves piecewise linearly with $\lambda$, i.e. $\exists \lambda^{(0)} = 0 < \lambda^{(1)} < \dots < \lambda^{(q)} = \infty$ and $\gamma_0,\gamma_1,\dots,\gamma_{q-1} \in \R^p$
such that $$\hat{\beta}_{\lambda, f_1} =
\hat{\beta}_{\lambda^{(k)},f_1} + (\lambda - \lambda^{(k)})\gamma_k$$ for $\lambda^{(k)}\le \lambda \le\lambda^{(k+1)}$. Each piece corresponds to the addition or removal of at least one of $p$ coordinates to the {\it active set} of covariates with maximum correlation. For any data point $x_j$, $1\le j\le m$, and any piece $[\lambda^{(k)},\lambda^{(k+1)})$ we have that $x_j\hat{\beta}$ is monotonic since $\hat{\beta}$ varies along a fixed vector $\gamma_k$, and therefore can have at most one value of $\lambda$ where the predicted label $\hat{y}$ changes. This gives an upper bound of $mq$ on the total number of discontinuities on any single problem instance $(X^{(i)},y^{(i)},\Xv^{(i)},\yv^{(i)})$, where $q$ is the number of pieces in the solution path. By Lemma 6 of \cite{tibshirani2013lasso}, we have the number pieces in the solution path $q\le 3^p$. Also for the overparameterized regime $p\gg m$, we have the property that there are at most $m-1$ variables in the active set for the entire sequence of solution paths (Section 7, \cite{efron2004least}). Thus, we have that $q\le m{p \choose m-1}\le (\frac{ep}{m})^m$ in this case.

Over $n$ problem instances, the pseudo-dimension satisfies $2^n\le mqn$, or $n=O(\log mq)$. Substituting the above inequalities for $q$ completes the proof.
    \item[$\;\;(iii)$] The proof of Theorem follows the same arguments as Theorem \ref{thm:pdim-en}, using Lemma \ref{lem:en-structure-c} instead of Theorem \ref{lem:en-structure}. By Lemma \ref{lem:en-structure-c}, the dual class $\Hen^*$ of $\Hen$ is $(\cF,\cG,(p+m)3^p)$-piecewise decomposable, with $\cF = \{f_{c} : \cL \rightarrow \R\}$ consisting of constant functions $f_{c} : l_\lambda\mapsto c$, where $c\in\R_{\ge0}$, and $\cG=\{g_r:\cL\rightarrow \{0,1\}\}$ consisting of polynomial thresholds $g_r : u_\lambda \mapsto \mathbb{I}\{r(\lambda_1,\lambda_2,\tau)<0\}$, where $r$ is a polynomial of degree 1 in $\lambda_1$ and $\tau$, and at most $p$ in $\lambda_2$.

Now it is easy to see that $\pdim(\cF^*)=O(1)$ (in particular, a consequence of Lemma \ref{lem:f-pdim}), and by Lemma \ref{lem:g-vcdim} the VC dimension of the dual boundary class is $d_{\cG^*}=O(p)$. A straightforward application of Theorem \ref{thm:pdim-dual} yields \[\pdim(\cH)=O(p\log p+p\log((p+m)3^p))=O(p^2+p\log m).\]
    
    \end{proof}

\noindent We will now restate  and prove Theorem \ref{thm:ridge-dispersion}. This implies that under smoothness assumptions on the data distribution we can learn the data-dependent optimal regularization parameter in the online setting. 

\noindent{\bf Theorem \ref{thm:ridge-dispersion} (restated).} 
{\it Suppose Assumptions \ref{ass:boundedness} and \ref{ass:smooth-features} hold. Let $l_1,\dots, l_T:(0,H]^{d}\times[-H,H]\rightarrow\R$ denote an independent sequence of losses as a function of the regularization parameter $\lambda$, $l_i(\lambda,\tau)=l_c(\hat{\beta}_{\lambda,f},(X^{(i)},y^{(i)}),\tau)$. The sequence of functions is $\frac{1}{2}$-dispersed, and there is an online algorithm with $\Tilde{O}(\sqrt{T})$ expected regret, if $f$ is given by\begin{itemize}[leftmargin=0.8cm,nosep]
    \item[(i)] $f=f_1$ (LASSO),
    \item[(ii)] $f=f_2$ (Ridge), or
    \item[(iii)] $f=f_{EN}$ (ElasticNet).
\end{itemize}}

\begin{proof}
   \item[$\;\;(i)$]
   On any dataset $(X,y)$, the predictions are given by the coefficients of the prediction vector $\hat{\mu}=X(X^TX+\lambda I_p)^{-1}X^Ty$.
Note that by Lemma \ref{lem:gram-inv} $(X^TX+\lambda I_p)^{-1}$, and therefore  $\Xv(X^TX+\lambda I_p)^{-1}X^Ty$, has each element of the form $P_{j}(\lambda)/Q(\lambda)$ with degree of each $P_{j}$ at most $p-1$ and degree of $Q$ at most $p$.
Further, for a fixed $\tau$, by using Lemma \ref{lem:gram-inv-coeffs} and a change of variables, we have that $\hat{\mu}_j=\tau$ is polynomial equation in $\lambda$ with degree $p$ with bounded coefficients that have $K\kappa$-bounded density for some constant $K$ (that depends on $R$, $H$, $m$ and $p$, but not on $\kappa$) and leading coefficient 1. Further, for fixed $\lambda\in(0,H]$, by Lemma \ref{lem:bounded}, the discontinuities over $\tau$ again have $\Tilde{O}(\kappa)$-bounded density. This completes step S1 of the recipe from \cite{dick2020semi}, and Theorem \ref{thm:poly-roots} gives a bound on the expected number of discontinuities in any interval $I$ (over $\lambda$).

To complete step S2, note that the loss function $l_i$ on any instance has at most $pm$ discontinuities, since each coefficient $\hat{\mu}_j$ of the prediction vector can change sign at most $p$ times as $\lambda$ is varied. This implies the VC-dimension argument (Theorem \ref{thm:VC-bound}) applies and the expected maximum number of discontinuities in any interval of width $\epsilon$ is $O(\epsilon T)+O(\sqrt{T\log(mpT)})$, which is $\Tilde{O}(\epsilon T)$ for $\epsilon\ge 1/\sqrt{T}$. Thus, using the recipe from \cite{dick2020semi}, we have shown that the sequence of loss functions is $\frac{1}{2}$-dispersed. This further implies that Algorithm \ref{alg:ddreg}, which implements the Continuous Exp-Weights algorithm of \cite{balcan2018dispersion} for setting the regularization parameter, achieves $\Tilde{O}(\sqrt{T})$ expected regret (\cite{balcan2018dispersion}, Theorem 1).

\item[$\;\;(ii)$]
Since the data-distribution is in particular assumed to be continuous, by Lemma 4 of \cite{tibshirani2013lasso} we know that the LASSO solutions are unique with probability 1. Moreover if $\cE\subseteq[p]$ denotes the equicorrelation set of variables (i.e. covariates with the maximum absolute value of correlation), and $s\in\{-1,1\}^{|\cE|}$ the sign vector (i.e. the sign of the correlations of the covariates in $\cE$), then the LASSO prediction vector $\hat{\mu}=\Xv\hat{\beta}$ is a linear function of regularization parameter $\lambda$ given by
\[\hat{\mu} = c_1-c_2\lambda,\]
where $c_1=(\Xv)_{\cE}(X_{\cE}^TX_{\cE})^{-1}X_{\cE}^Ty$ and $c_2=(\Xv)_{\cE}(X_{\cE}^TX_{\cE})^{-1}s$. Thus for any fixed $\cE,s$ (corresponding to a unique piece in the solution path for LARS-LASSO), we have at most one discontinuity corresponding to $\hat{\mu}_j=\tau$, and the location of this discontinuity has a $K\kappa$-bounded distribution (for constant $K$ independent of $\kappa$) by an application of Lemma \ref{lem:bounded}. Thus, the probability that this discontinuity is located along some axis aligned path $I$ of length $\epsilon$ is at most $K\kappa\epsilon$. A union bound over $j\in[m]$, and over $3^p$ choices of $\cE,s$ (for example, Lemma 6 in \cite{tibshirani2013lasso}) gives the probability of a discontinuity along $I$ is at most $m3^pK\kappa\epsilon$. This completes step S1 of the recipe above.

Now each loss function $l_i$ has at most $m3^p$ discontinuities, and therefore by a VC-dimension argument (Theorem 9 of \cite{dick2020semi}), the expected maximum number of discontinuities along any axis-aligned path of total length $\epsilon$ is $\Tilde{O}(\epsilon T)+O(\sqrt{T(p+\log(mT))})$, which is $\Tilde{O}(\epsilon T)$ for $\epsilon\ge 1/\sqrt{T}$. This completes step S2 of the recipe from \cite{dick2020semi}, and we have shown that the sequence of loss functions is $\frac{1}{2}$-dispersed. As in Theorem \ref{thm:ridge-dispersion}, this implies that Algorithm \ref{alg:ddreg} achieves $\Tilde{O}(\sqrt{T})$ expected regret (\cite{balcan2018dispersion}, Theorem 1).

While we use the worst case bound on the number of solution paths here, algorithmically we can use LARS-LASSO on the given dataset, which is much faster in practice and the running time scales linearly with the actual number of solution paths $q$ (typically $q\ll 3^p$).

\item[$\;\;(iii)$]

The proof uses the piecewise decomposable structure proved in Lemma \ref{lem:en-structure-c}, and establishes dispersion using joint smoothness of $\Xv^{(i)}$ instead of $y^{(i)}$ (as in the proof of Theorem \ref{thm:ridge-dispersion} (i)). The recipe from \cite{dick2020semi} can be used along a 3D axis-aligned path from $(\lambda_1,\lambda_2,\tau)\rightarrow(\lambda_1',\lambda_2',\tau')$. Lemma \ref{lem:bounded} may be used to show bounded density of discontinuities along $\tau$ (keeping $\lambda_1,\lambda_2$ fixed). To complete step S2 of the recipe we can use Theorem 7 from \cite{balcan2021data}.
The arguments are otherwise very similar to the proof of Theorem \ref{thm:en-regression-dispersion}, and are omitted for brevity.
\end{proof}

\end{document}